\algnewcommand\algorithmicinput{\textbf{INPUT: }}
\algnewcommand\Input{\item[\algorithmicinput]}
\algnewcommand\algorithmicoutput{\textbf{OUTPUT: }}
\algnewcommand\Output{\item[\algorithmicoutput]}
\newtheorem{proposition}{Proposition}[section]
\newtheorem{theorem}{Theorem}[section]
\newtheorem{definition}{Definition}
\newtheorem{lemma}[theorem]{Lemma}
\newtheorem*{result}{Result}
\begin{document}
%
\title{Probabilistic Learning of Multivariate Time Series with Temporal Irregularity}
%
%
%

\author{Yijun LI,
    Cheuk Hang LEUNG,
    Qi WU
    \IEEEcompsocitemizethanks{
        \IEEEcompsocthanksitem The first two authors contributed equally.
        \IEEEcompsocthanksitem Corresponding author: Qi WU (qiwu55@cityu.edu.hk)
        \IEEEcompsocthanksitem The authors are with the Department of Data Science, City University of Hong Kong, Hong Kong.}
    } 
   
%
%

\markboth{Journal of \LaTeX  CLASS FILES, VOL. 14, NO. 8, AUGUST 2015}
{Shell \MakeLowercase{\textit{et al.}}: Bare Demo of IEEEtran.cls for IEEE Journals}
%



\maketitle

\begin{abstract}
   Probabilistic forecasting of multivariate time series is essential for various downstream tasks. Most existing approaches rely on the sequences being uniformly spaced and aligned across all variables. However, real-world multivariate time series often suffer from temporal irregularities, including nonuniform intervals and misaligned variables, which pose significant challenges for accurate forecasting. To address these challenges, we propose an end-to-end framework that models temporal irregularities while capturing the joint distribution of variables at arbitrary continuous-time points. Specifically, we introduce a dynamic conditional continuous normalizing flow to model data distributions in a non-parametric manner, accommodating the complex, non-Gaussian characteristics commonly found in real-world datasets. Then, by leveraging a carefully factorized log-likelihood objective, our approach captures both temporal and cross-sectional dependencies efficiently. Extensive experiments on a range of real-world datasets demonstrate the superiority and adaptability of our method compared to existing approaches. The data and code supporting this work are available at \url{https://github.com/lyjsilence/RFN}.
\end{abstract}

\begin{IEEEkeywords}
		probabilistic forecasting, multivariate time series, irregular sampling, recurrent neural networks, normalizing flow models, neural ODEs
\end{IEEEkeywords}

%
\IEEEpeerreviewmaketitle

\section{Introduction}
%
%
%
%

\IEEEPARstart{M}{ultivariate} time series (MTS) data, where multiple variables are recorded and evolve over time, are essential across fields such as healthcare, finance, and climate science. For instance, in healthcare, MTS data can track a patient’s vital signs, enabling early detection of potential health issues. In finance, it supports forecasting market behaviors and asset correlations, which is essential for risk management and investment strategies. Probabilistic forecasting of MTS is indispensable, as it supports downstream tasks like anomaly detection, risk assessment, and decision-making. This involves predicting not only central measures like the mean or median but also quantiles and confidence intervals that are essential in high-stakes decision-making scenarios \citep{ban2018machine}.

At the variable level, MTS data exhibit serial dependence, meaning the marginal distributions of individual variables at different time points are not independent \citep{zhang2021memory}. At the group level, the dependence structure encompasses relationships like the copula function of the joint distribution, where components may be strongly interdependent. Both variable-level serial dependencies and group-level interdependencies can vary over time. Probabilistic forecasting entails specifying the joint distribution of variables in MTS, learning its representation from data, and predicting its evolution over time \citep{benidis2018deep}. 

However, real-world MTS data rarely follow regular sampling intervals. Instead, these datasets often exhibit temporal irregularities, creating multivariate irregular time series where observations are separated by uneven time intervals. Temporal irregularities occur frequently in various applications. For example, climate data might have gaps if monitoring equipment fails to record at scheduled times. Patient measurements may be taken at inconsistent times, resulting in irregular sampling. 

These irregularities, which may be random or due to specific factors, present significant challenges for probabilistic forecasting in MTS \citep{kremer2010detecting, gao2022explainable}. Irregular sampling disrupts the modeling of serial dependencies, as traditional methods rely on evenly spaced data points to capture temporal patterns accurately. Additionally, it complicates the modeling of interdependencies among variables, as unaligned time points make it harder to reveal important relationships within the data. This irregularity increases the difficulty of accurately characterizing the joint distribution, adding uncertainty to the forecasting process \citep{shukla2020survey}. Before presenting our approach, we review existing methods for handling temporal irregularity and modeling joint distributions.

\subsection{On Handling Temporal Irregularity} \label{subsec: On Handling Irregular Sampling}
There are three primary approaches to handling irregular sampling. The first approach involves converting an irregularly sampled time series into one with evenly spaced time intervals before making predictions. The discretization method selects a larger uniformly spaced time interval, with each interval containing several observations, and computes the mean of these observations to represent the interval's value. However, this method loses local information due to averaging. In contrast, the imputation method interpolates the missing values of the lower-frequency variables instead of averaging the higher-frequency variables. It keeps the local information of the higher-frequency variables intact and uses models such as the Gaussian Process regression model \cite{williams2006gaussian}, the Recurrent Neural Networks (RNNs) \cite{cao2018brits}, and the Generative Adversarial Networks (GANs) \cite{luo2018multivariate} to impute the missing values of the lower-frequency components.

The next approach proposes using end-to-end models to avoid the ``interpolate first and predict later'' idea. This approach modifies classical recurrent architectures to encode the information embedded in irregular temporal patterns. For example, Che et al. \cite{che2018recurrent} added an exponential decay mechanism in the hidden state. Neil et al. \cite{neil2016phased} extended the Long short-term memory (LSTM) unit by adding a new time gate controlled by a parametrized oscillation with a frequency range. Additionally, Mozer et al. \cite{mozer2017discrete} incorporated multiple time scales of the hidden state and made a context-dependent selection of time scales for information storage and retrieval. 

At last, Chen et al. \cite{chen2018neural} introduced the Neural ODE framework by extending discrete neural networks into continuous-time networks, which makes it a natural candidate for handling data with arbitrarily small time intervals. For instance,  Rubanova et al. \cite{rubanova2019latent} proposed the Latent ODE and were the first to embed Neural ODEs in a Variational Autoencoder \cite{DBLP:journals/corr/KingmaW13} to address the problem of irregularly sampled time series. De Brouwer et al. \cite{de2019gru} integrated the Neural ODEs in the classical Gated Recurrent Unit (GRU) cell and derived the dynamics of the hidden state. Unlike the classical GRU cell, which keeps the hidden state constant in the absence of observations, the continuous-time GRU cell learns to evolve the hidden state using Neural ODEs. Building upon similar ideas, Lechner et al. \cite{lechner2020learning} transformed the standard LSTM into a continuous version to address the issues of gradient vanishing and exploding. In section \ref{subsec: neural ode}, we will provide more background on Neural ODEs and how to utilize them to model irregularly sampled data.

\subsection{On Modeling Joint Data Distribution} \label{subsec: On Modeling the Joint Distribution}
For point estimation tasks, vanilla recurrent architectures, including RNN, GRU, and LSTM, can capture different aspects of the aforementioned properties of the MTS data. However, they are not directly applicable to distribution prediction due to the deterministic nature of the transition functions of their hidden states, which do not account for modeling uncertainties \citep{chung2015recurrent}. 

One class of models modifies the output function of neural networks to model the joint distribution or quantile function. For example, the models in \citep{salinas2020deepar, de2019gru, salinas2019high} assume the data-generating process follows parametric distribution, such as the multivariate Gaussian (for continuous variables) and multivariate Negative Binomial (for discrete variables).  Alternatively, researchers use quantile regression to fit the quantile function of the joint distribution. They use the quantile loss \citep{wen2017multi,yan2019cross} or the continuous ranked probability score \citep{laio2007verification} as the objective function to train the model and predict multiple quantile points simultaneously conditional on the hidden states. 

Recently, unsupervised deep learning models have been utilized to learn the joint distribution of data, including integrating variational autoencoders  \citep{chung2015recurrent}, normalizing flows \citep{rasul2020multivariate, feng2023multi}, or diffusion models \citep{rasul2021autoregressive} into RNNs. Among these, the flow models are flexible in capturing intricate and evolving dependence structures and impose no assumptions about the functional form of the joint data distribution. These characteristics make them attractive for dealing with complex data, although they do not specifically address the structural aspects of irregular sampling. In section \ref{subsec:flow}, we shall further detail the background of representing data distribution using the normalizing flow approach.

\subsection{Our Approach and Contributions}
Discussions in \ref{subsec: On Handling Irregular Sampling} and \ref{subsec: On Modeling the Joint Distribution} unveil dislocations and disparities among ideas of handling temporal irregularities in the data and ideas to model its joint distribution. This paper bridges this gap by introducing a deep learning solution called the \textit{Recurrent Flow Network} (RFN). It can seamlessly integrate the treatment of temporal irregularities with the learning of joint data distribution. Its novelties are as follows.

{\textbf{(i)}} The proposed RFN framework formulates a two-layer representation that distinguishes marginal learning of variable dynamics from multivariate learning of joint data distribution. It is a versatile methodology that can be trained end-to-end and accommodates synchronous and asynchronous data structures. It also broadly applies to underlying recurrent architectures. Once the joint data distribution is learned, it is ready for sampling despite the distribution being non-parametrically represented via neural networks.

{\textbf{(ii)}} The joint learning layer resolves the struggle faced by existing models \citep{salinas2020deepar, de2019gru, salinas2019high} in achieving a non-parametric representation of non-Gaussian data distribution, simultaneously with a flexible choice of information to generate time variation. The conditional CNF (Continuous Normalizing Flow) representation we developed enables one to choose what information to use to drive the time variation of the base distribution and the flow map without compromising any non-parametric capacity to represent the non-Gaussian data distribution.

{\textbf{(iii)}} Building upon (ii), we strategically condition the log-likelihood objective on the observation times. This conditioning structure enables the optimizer to fully acknowledge and account for both the uneven spacing aspect and the asynchrony aspect of temporal irregularity in the MTS data. By conditioning on the observation times, the RFN ensures that the model incorporates the specific time points at which the data is observed. 

We validate the novelties mentioned above through synthetic experiments and demonstrate the overall performance of the RFNs on three real-world datasets. The synthetic studies simulate sample paths of a multivariate correlated Geometric Brownian Motion process to verify the ability of our approach to capture the conditional joint distribution. Meanwhile, the experimental datasets include the physical activities of the human body from the MuJoCo module \cite{rubanova2019latent}, the climate records of weather from the USHCN dataset \cite{menne2010long}, and the minute-level transaction records of eight stocks in the biotechnology sector of NASDAQ market \cite{qin2017dual}. We compare four baseline models in terms of their performance in vanilla forms and the performances utilizing the RFN specification. The results show that the RFN framework has broad applicability and significantly improves existing approaches.

\section{Backgrounds} \label{sec: background}
Understanding the working mechanism of the proposed RFN model requires the knowledge of Neural ODEs, their applications to model irregularly sampled data, and the flow representation of distributions. This section summarizes these subjects to make the paper self-contained. Throughout the paper, we denote random variables as follows: $X$ for scalar, $\mathrm{X}$ for vector, and $\mathbf{X}$ for matrix. Their corresponding sample values are denoted as $x$, $\mathrm{x}$, and $\mathbf{x}$ accordingly.

\subsection{Neural Ordinary Differential Equations}\label{subsec: neural ode}
Neural ODEs were developed as the continuous limit of the ResNet model. The ResNet model solves the degradation problem of neural networks where researchers noticed that, as the network layers go deeper, the training loss begins to increase steadily once the network depth crosses a certain threshold \cite{he2016deep}. Consider a $L$-layer network, with $\mathrm{x}_0$ being the input, $\mathrm{x}_l$ being the output of each layer $l\in \{1, \cdots, L\}$, and $f_{\theta_{l}}(\cdot)$ being the learning functions of layer $l$. Instead of learning the mapping from $\mathrm{x}_0$ to $\mathrm{x}_L$ directly, ResNet learns the difference between the input and output of each layer:
\begin{equation}\label{Eq1}
\mathrm{x}_l=\mathrm{x}_{l-1}+\mathrm{f}_{\theta_l}(\mathrm{x}_{l-1}). 
\end{equation} 

Chen et al. \cite{chen2018neural} proposed that taking the limit of the number of layers to infinity shall turn discrete layers into continuous layers. The resulting continuous limit of the recursive equation \eqref{Eq1} is an ODE:
\begin{equation}
\frac{d\mathrm{x}(l)}{dl}=\mathrm{f}_{\theta}(\mathrm{x}(l), l). \label{Eq2}
\end{equation} 
Solving \eqref{Eq2} with initial condition $\mathrm{x}(0)$ is equivalent to the forward pass of ResNet. One can use numerical methods such as the Euler and the Runge–Kutta methods to solve \eqref{Eq2}. 

\subsection{Unconditional Normalizing Flow}\label{subsec:flow}
For tasks related to probabilistic forecasts, one needs a representation of the data distribution. Let $p(\mathrm{x})$, $\mathrm{x}\in \mathbb{R}^{D}$ be the probability density of the data-generating distribution and $p(\mathrm{z})$, $\mathrm{z}$ $\in \mathbb{R}^{D}$ be the probability density of the base distribution which is typically set as the standard normal, i.e., $\mathrm{Z}\sim \mathcal{N}(\mathrm{0}, \mathbb{I}_D)$. The idea of the normalizing flow model is to find a differentiable bijective function $\mathrm{f}=[f^{1},\cdots,f^{D}]^{\top}$ which can map samples from $\mathrm{Z}$ to $\mathrm{X}$ \cite{dinh2014nice, rezende2015variational}:
\begin{align*}
\mathrm{f}: \mathbb{R}^D \rightarrow \mathbb{R}^D; \quad \mathrm{f}(\mathrm{z})=\mathrm{x}.
\end{align*} 
In the discrete formulation, $\mathrm{f}$ is typically specified as a sequence of neural networks, $\mathrm{f} = \mathrm{f}_1 \circ \cdots \circ \mathrm{f}_{M-1} \circ \mathrm{f}_{M}$. However, designing the architectures of $\mathrm{f}_1,\cdots,\mathrm{f}_M$ is challenging because they need to satisfy three conditions: being bijective, differentiable, and facilitating the computation of the determinant of the Jacobian of the function $\mathrm{f}$. 

The continuous normalizing flow model (CNF) \cite{grathwohl2018ffjord} offers a solution to this challenge by extending the composition of discrete maps into a continuous map, whose differential form reads as follows:
\begin{subequations}\label{Eq5}
\begin{equation}
\begin{aligned}\label{eqt:ffjord1}
&\frac{\partial \mathrm{z}(s)}{\partial s}=\mathrm{f}(\mathrm{z}(s), s ; \theta),\quad s \in [s_0, s_1], \\
&\text{where} \quad \mathrm{z}(s)|_{s=s_0} = \mathrm{z} , \,\, \mathrm{z}(s)|_{s=s_1} = \mathrm{x}.
\end{aligned}
\end{equation}
Unlike the physical time $t$, $s$ is called the \textit{flow time} of the dynamics \eqref{eqt:ffjord1}. At the initial flow time $s_0$, the value of the flow $\mathrm{z}(s_0)$ is set as $\mathrm{z}$, which samples from the base distribution $p(\mathrm{z})$ of the base random variable $\mathrm{Z}$. At the terminal flow time $s_1$, the value of the flow $\mathrm{z}(s_1)$ is set to equal $\mathrm{x}$, which is the observed sample from the distribution of the true data-generating distribution $p(\mathrm{x})$. 

The discrete formulation requires careful design of the weight matrices of $\mathrm{f}_{j},$ {\small$1\le j\le M$}, to be triangular to facilitate computing the Jacobian's determinant easily. However, the computation of the Jacobian determinant is replaced with relatively cheap trace operations thanks to the Instantaneous Change of Variables theorem \cite{chen2018neural} in the continuous formulation. Consequently, the log-density of the continuous flow follows the following equation:
\begin{equation}
\begin{aligned}\label{eqt:ffjord2}
&\frac{\partial \log p(\mathrm{z}(s))}{\partial s} =-\operatorname{Tr}\left[\partial_{\mathrm{z}(s)} \mathrm{f}\right],\quad s \in [s_0, s_1], \\
& \begin{aligned}
\text{where}\quad p(\mathrm{z}(s))|_{s=s_0} = p(\mathrm{z}), \ 
p(\mathrm{z}(s))|_{s=s_1}  = p(\mathrm{x}).
\end{aligned}
\end{aligned}
\end{equation}
Solving equations \eqref{eqt:ffjord1} and \eqref{eqt:ffjord2} together, we have
\begin{equation}\tag{\ref{Eq5}}\label{eqt:ffjord}
\begin{bmatrix}
\mathrm{x} \\
\log p(\mathrm{x})
\end{bmatrix}=
\begin{bmatrix}
\mathrm{z}\\
\log p(\mathrm{z})
\end{bmatrix}+\int_{s_0}^{s_1}
\begin{bmatrix}
\mathrm{f}(\mathrm{z}(s), s ; \theta) \\
-\operatorname{Tr}\left[\partial_{\mathrm{z}(s)} \mathrm{f}\right]
\end{bmatrix}ds.
\end{equation}
\end{subequations}

\section{Data Structure \& Problem Statement}
MTS are sequences of data where multiple variables are observed over time. Each variable may exhibit dependencies on both its own past values and the past values of other variables. Consider a MTS dataset containing {\small$N$} instances. Each instance is a {\small$D$}-dimensional sample path. All instances span the same $[0, T]$ period. For example, this dataset could represent climate recordings, consisting of $N$ daily records, each spanning 24 ($T$) hours. During each day, multiple indices ($D$), such as temperature and precipitation, are observed. However, some recorded values may be missing due to various factors, such as equipment failure.

To account for the presence of temporal irregularity in a given instance $i\in \{1,\cdots, N\}$, we first collect all time points at which at least one variable has an observation and define this collection as the time vector of observations:
\begin{equation*}
\begin{aligned}
\mathrm{t}^i :=[t_{1}^i,\cdots, t_{K_i}^i], \quad 0 \leq t_1^i \leq \cdots \leq t_{K_i}^i \leq T.
\end{aligned}
\end{equation*}

\begin{figure}[t]
	\centering
	\subfigure[Syn-MTS]{
		\includegraphics[width=.45\linewidth]{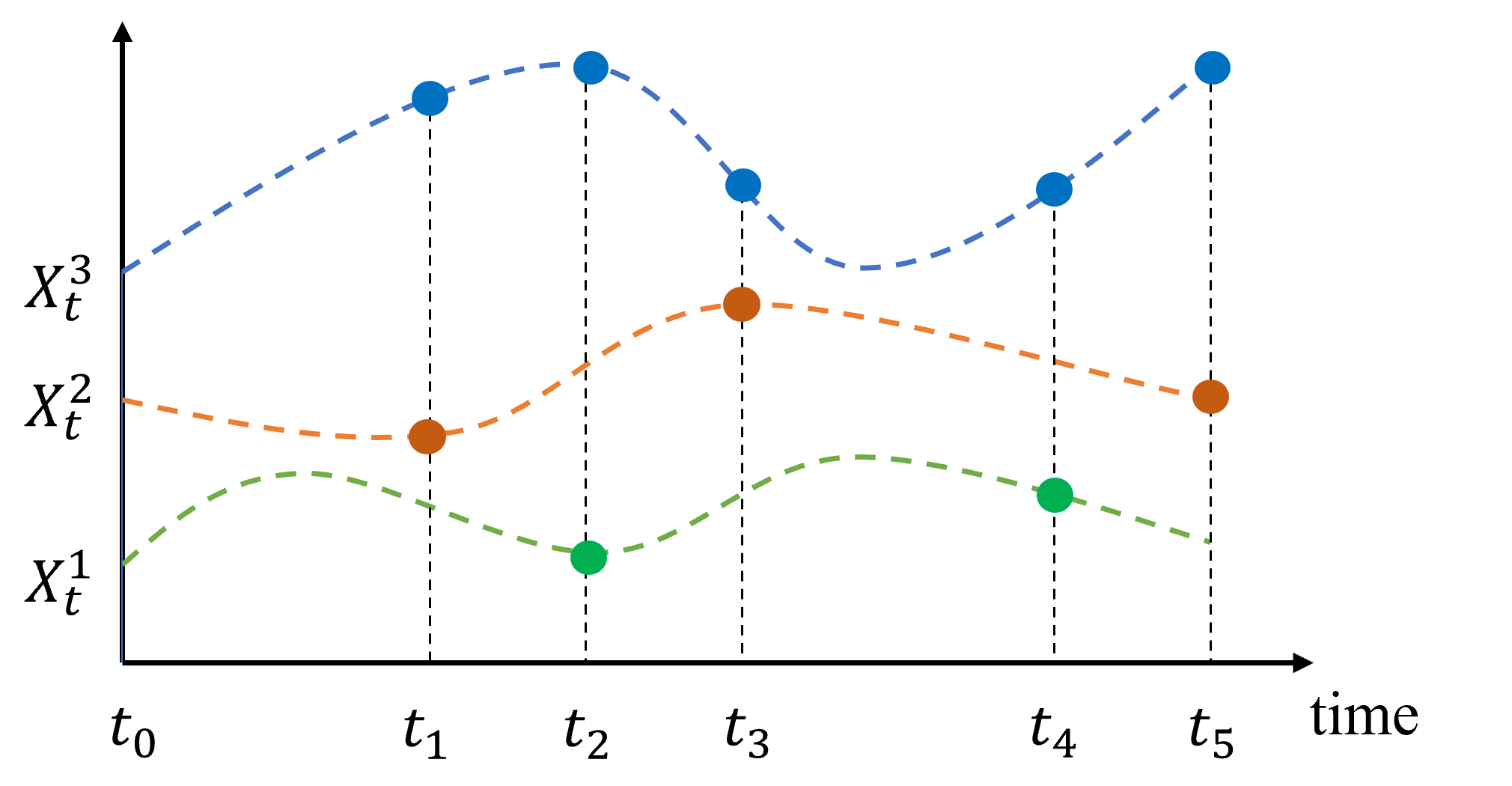} 
		\label{sync}
	}%
	\subfigure[Asyn-MTS]{
		\includegraphics[width=.45\linewidth]{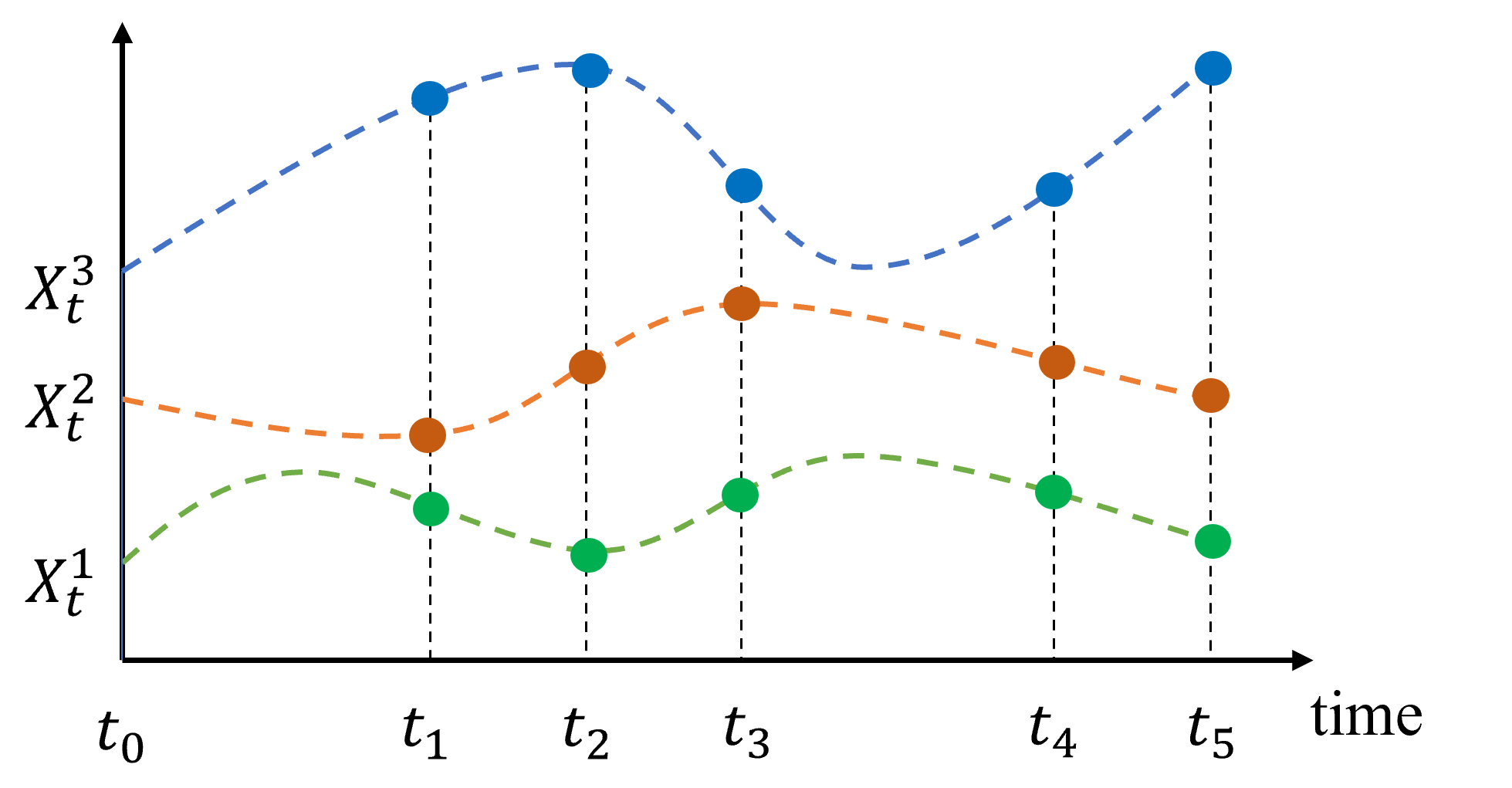}
		\label{async}
	}%
	\caption{(a) and (b) are examples of Syn-MTS and Asyn-MTS where observed data points are marked as solid circle dots. While the time intervals between consecutive observation times are unevenly spaced in both cases, component observations of the Syn-MTS sample path are always aligned. In contrast, in the Asyn-MTS case, no observation time has complete observations. This demonstrates that uneven spacing originates at the univariate level, while asynchrony arises exclusively in the multivariate context.}
\end{figure}

At a particular observation time $t\in \mathrm{t}^i$, we use $\mathrm{x}_{i,t}\in \mathbb{R}^{D\times 1}$ to denote the time-$t$ sample values of the random vector $\mathrm{X}_{t}$ of the $i^{\textrm{th}}$ instance, within which we use $x_{i,t}^d\in \mathbb{R}$ to denote the $d^{\textrm{th}}$ component, which is the sample value of the $d^{\textrm{th}}$ random scalar $X^d_{t}$, We also set $x_{i,t}^{d}=0$ if no observation for the $d^{\textrm{th}}$ at time $t$.  Thus, we have
\begin{equation*}
\begin{aligned}
\mathrm{x}_{i,t} := [x_{i,t}^1, \cdots, x_{i,t}^d, \cdots, x_{i,t}^D]^{\top}, \quad \text{where}\;t\in \mathrm{t}^i.
\end{aligned}
\end{equation*}
We then aggregate $\mathrm{x}_{i,t}$ from all observation times $t=t^i_1,t^i_2,\cdots, t^i_{K_i}$ of the instance $i$ to form the instance-level data matrix $\mathbf{x}_i\in \mathbb{R}^{D\times K_i}$, 
where
\begin{equation*}
\begin{aligned}
\mathbf{x}_{i} := [\mathrm{x}_{i,t_1^i}; \mathrm{x}_{i,t_{2}^i}; \cdots ;\mathrm{x}_{i,t_{K_i}^i}]^{\top}, \quad \;i\in [1,\cdots,N].
\end{aligned}
\end{equation*}
Finally, the entire MTS dataset is the collection of all $N$ instances $\{ \{\mathbf{x}_1\}; \{\mathbf{x}_2\}; \cdots; \{\mathbf{x}_N\}\}$. 

An instance $\mathbf{x}_i$ can be synchronous or asynchronous depending on whether $x^d_{i,t}$ is observed or not for any combination of the variable dimension $d \in \{1,\cdots, D\}$ and the observation time $t\in \mathrm{t}^i$. 
\begin{definition}
{\textbf{Synchronous multivariate time series (Syn-MTS)}}: An instance $\mathbf{x}_i$ where all of its $D$ component series have observations at each and every time points $t \in \mathrm{t}^i$ (see Fig. \ref{sync}). 
\end{definition}
\begin{definition}
{\textbf{Asynchronous multivariate time series (Asyn-MTS)}}: An instance $\mathbf{x}_i$ where at least one of its $D$ component series does not have all observations at all time point $t \in \mathrm{t}^i$ (see Fig. \ref{async}). 
\end{definition}

To precisely distinguish between Asyn-MTS and Syn-MTS, we can use the mask matrix. For each instance $i$ with corresponding data matrix $\mathbf{x}_i$, its mask matrix is $\mathbf{m}_{i}$ such that
\begin{equation*}
\begin{aligned}
\mathbf{m}_{i} := [\mathrm{m}_{i,t_1^i}; \mathrm{m}_{i,t_2^i}; \cdots; \mathrm{m}_{i,t_{K_i}^i}]^{\top}.
\end{aligned}
\end{equation*}
Here, $\mathrm{m}_{i,t}$ is a vector that denotes whether the constituent component variables are observed at time $t \in \mathrm{t}^i$, i.e., 
\begin{equation*}
\begin{aligned}
\mathrm{m}_{i,t} :&=[m^1_{i,t},\cdots,m^d_{i,t}, \cdots, m^D_{i,t}]^{\top},\quad \text{where}\\
m^d_{i,t}&=\begin{cases}
1,  \quad &\text{if } x^d_{i,t} \text{ is observed}; \\
0,  \quad &\text{if } x^d_{i,t} \text{ is unobserved}.
\end{cases}
\end{aligned}
\end{equation*}
In the sequel, we shall drop the instance script $i$ to lighten notations, e.g., $t^i_k$, $\mathrm{x}_{i,t}$ and $\mathrm{m}_{i,t}$ shall become $t_k$, $\mathrm{x}_{t}$ and $\mathrm{m}_{t}$, whenever the context is clear. In Fig. \ref{fig:data structure}, we give a plot explanation of an instance with two variables.

\begin{figure}[t]
    \centering
    \includegraphics[width=\linewidth]{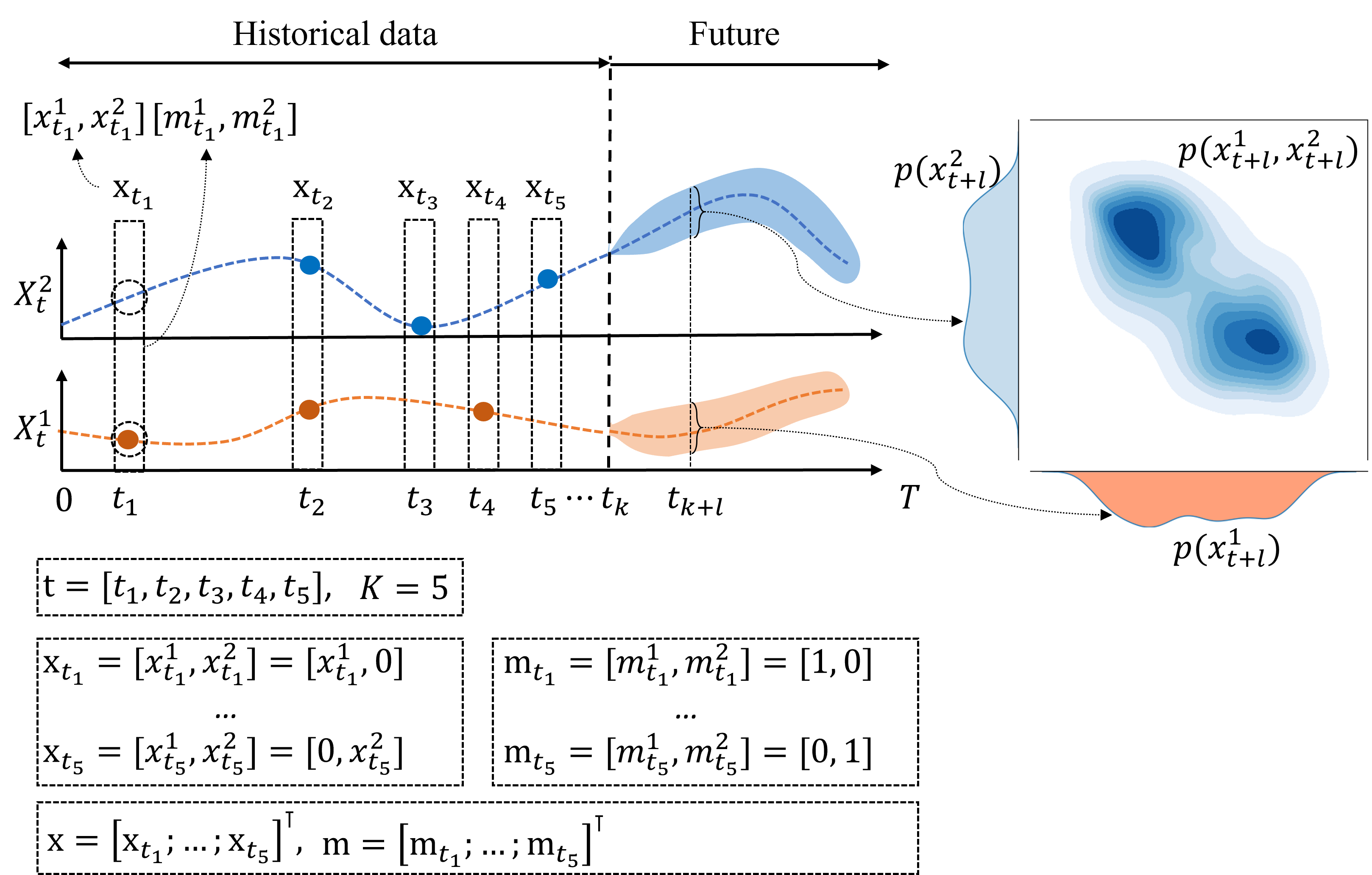}
    \caption{The data structure and notations for one instance.}
    \label{fig:data structure}
\end{figure}
\textbf{Problem Statement.} Given the historical value of Syn-MTS or Asyn-MTS observations $\mathrm{x}_{t_1}, \cdots, \mathrm{x}_{t_k}$, the objective of our work is to learn a non-linear mapping function to estimate the joint distribution \(p(\mathrm{x}_{t_{k+l}})\) at arbitrary future time $t_{k+l}$ in a continuous-time manner. In particular, the joint distribution can be decomposed into marginal distributions \(p(x^1_{t_{k+l}}), \cdots, p(x^D_{t_{k+l}})\), which allows for interval estimation of each individual variable. In Fig. \ref{fig:data structure}, we provide a graphical representation of the goal we aim to achieve, which demonstrates the probabilistic forecasting of two irregularly sampled and non-Gaussian variables over time.


\section{Model Framework}
In this section, we introduce our RFN model framework, designed for probabilistic forecasting of multivariate irregular time series in both synchronous and asynchronous cases. A visual representation is provided in Figure \ref{fig:RFN-GRUODE}. 

The RFN comprises two main layers: the marginal learning layer and the joint learning layer. In the marginal learning layer, the multivariate irregular time series is processed using any advanced sequential model suitable for such data. The resulting representation, the hidden state of sequential models, is then passed to the joint learning layer.

The joint learning layer aims to learn the unknown data distribution using the conditional CNF model, leveraging the change of variable theorem. Specifically, conditioned on the hidden state, we learn a differentiable bijective function that maps the unknown distribution to a simple base distribution—such as a multivariate normal distribution—for which the likelihood is easy to compute. The likelihood of a real data point in the unknown distribution can then be calculated by combining the likelihood of the transformed data point under the base distribution with the transformation loss.

Furthermore, the joint learning layer is tailored to handle both Syn-MTS and Asyn-MTS scenarios. The key difference between these cases is that in Asyn-MTS, some variables may be missing at certain time points, making it impossible to compute the likelihood. To address this issue, we force each variable in the base distribution to be independent and compute the likelihood of the observed variables only.

Below, we first describe the marginal learning layer, followed by a detailed explanation of the joint learning layer for both synchronous and asynchronous cases.

\begin{figure*}[htb]
	\centering
	\subfigure[Synchronous case]{
		\includegraphics[width=0.97\columnwidth]{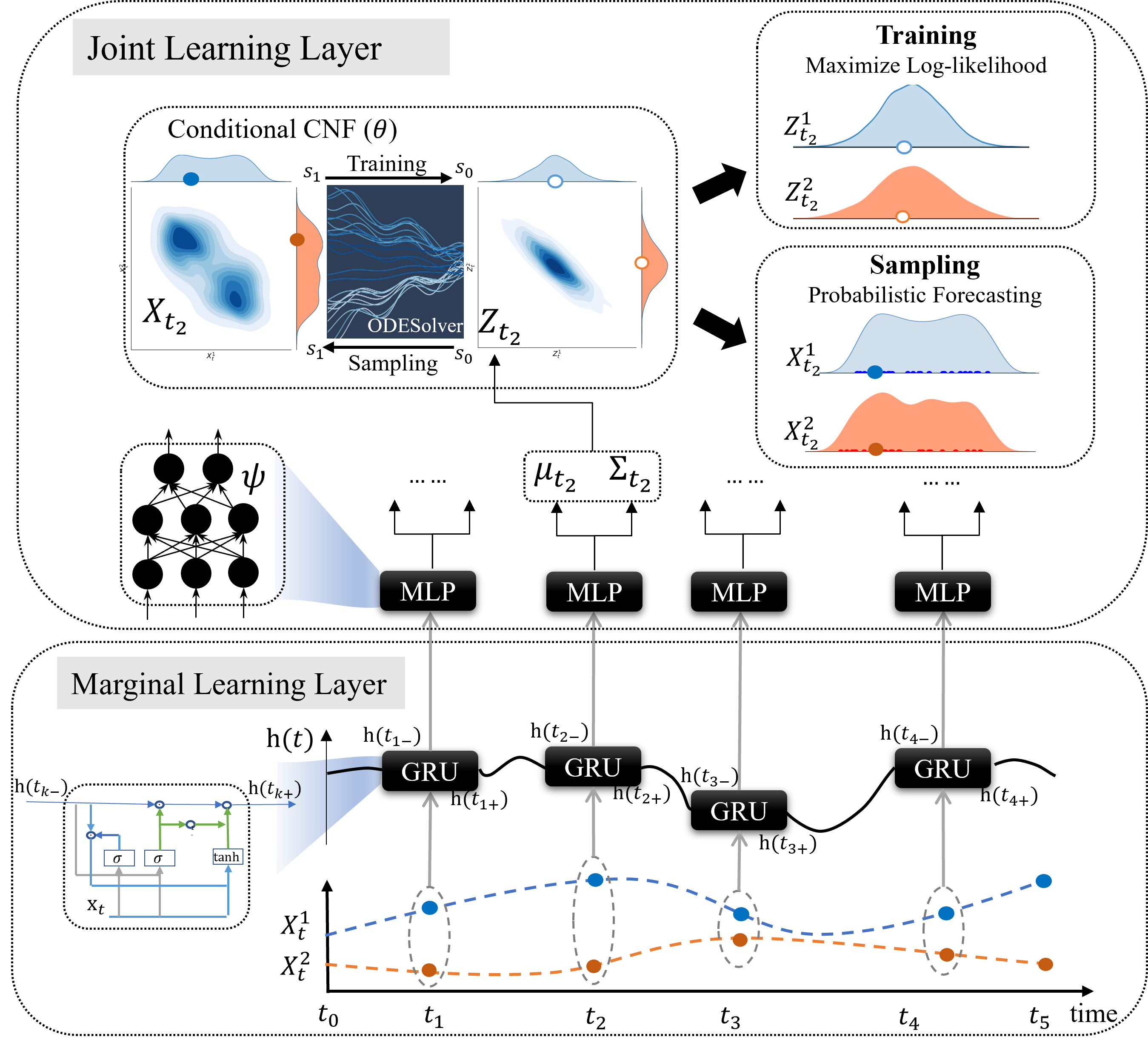}
		\label{RFN-GRUODE-Syn-MTS}
	}%
    \quad
	\subfigure[Asynchronous case]{
		\includegraphics[width=0.97\columnwidth]{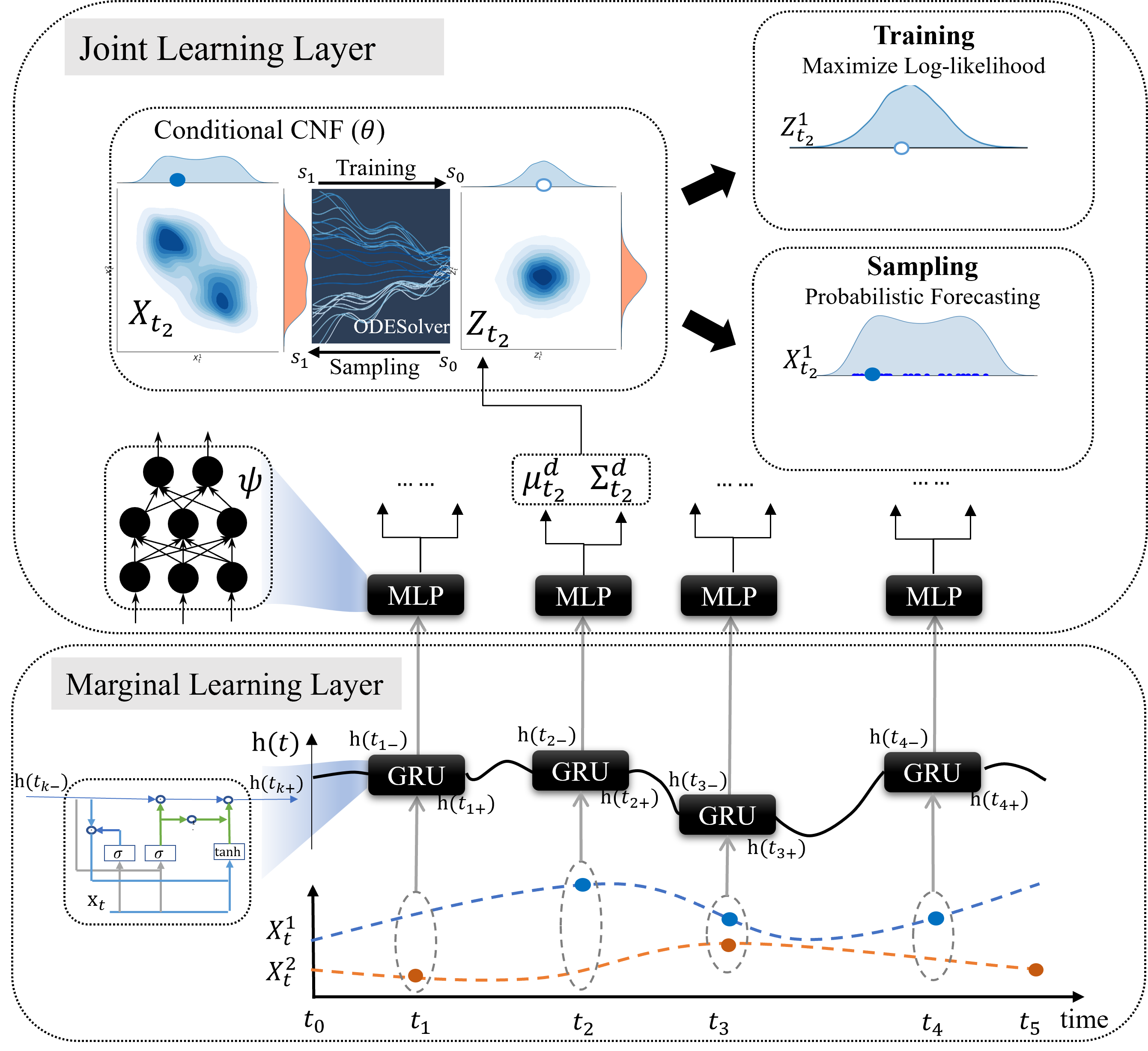} 
		\label{RFN-GRUODE-Asyn-MTS}
	}%
	\caption{The framework of RFNs for (a) Syn-MTS and (b) Asyn-MTS. In both cases, there are two component variables $X^1_t, X^2_t$. The solid points in different colors indicate they are observations of different variables. In the marginal learning layer, the hidden states will be updated only when at least one variable has an observation, e.g., from $\mathrm{h}(t_{1-})$ to $\mathrm{h}(t_{1+})$. In the joint learning layer, the base distribution parameters at each time are $\mu_t$ and $\Sigma_t$ (for the Syn-MTS case) or $\mu_t^d$ and $\Sigma_t^d$ (for the Asyn-MTS case), which is learned from hidden state $\mathrm{h}(t_{1-})$. The conditional CNF transforms the data points following the unknown distribution to the base distribution, for which likelihoods are easy to compute.   \label{fig:RFN-GRUODE}}
\end{figure*}

\subsection{Marginal Learning Layer}
The marginal learning layer aims to acknowledge temporal irregularities in the MTS data. Various existing recurrent architectures targeting handling irregularly sampled time series can be employed, such as those discussed in Section \ref{subsec: On Handling Irregular Sampling}. In this section, we dedicate to providing one example, GRU-ODE-Bayes \cite{de2019gru}, to elucidate the functionality of the marginal learning layer and its interactions with the joint learning layer. 

The vanilla GRU has the following updating formulas:
\begin{align}
	\mathrm{h}_t&=(1-\mathrm{z}_t) \odot \tilde{\mathrm{h}}_t+\mathrm{z}_t \odot \mathrm{h}_{t-1},\label{Eq3}
\end{align}
where $\mathrm{z}_t$, $\tilde{\mathrm{h}}_t$, $\mathrm{h}_t\in \mathbb{R}^H$ are vectors denoting the update gates, the candidate update gates, and the hidden states; the operator $\odot$ denotes the element-wise multiplication. 

Subtracting $\mathrm{h}_{t-1}$ on both sides of \eqref{Eq3} leads to
\begin{align*}
\mathrm{h}_{t}-\mathrm{h}_{t-1} &=(1-\mathrm{z}_{t}) \odot (\tilde{\mathrm{h}}_{t}-\mathrm{h}_{t-1}).
\end{align*}
Then, by taking the limit as the time difference between $t-1$ and $t$ tends to zero, one arrives at the continuous-time dynamics of the hidden states: 
\begin{align}
	\frac{d \mathrm{h}(t)}{d t}&=(1-\mathrm{z}(t)) \odot (\tilde{\mathrm{h}}(t)-\mathrm{h}(t)). \label{Eq4}
\end{align}
 We thus specify equations governing the evolution of the hidden states in two regimes, continuous-updating and discrete-updating.

The {\textit{continuous-updating}} regime corresponds to continuous time intervals from  $t_{0+}$ to $t_{1-}$, $t_{1+}$ to $t_{2-}$, $t_{2+}$ to $t_{3-}$, $t_{3+}$ to $t_{4-}$, and $t_{4+}$ to $t_{5-}$ in Fig. \ref{fig:RFN-GRUODE} when no variables have observations. Here, $t_{-}$ denotes the time point right before $t$ and $t_{+}$ means the time point immediately after $t$. In this regime, we assume that the hidden states for all variables evolve according to equation \eqref{Eq4} between observation intervals. To be concrete, the details of evolution are given in equation \eqref{Eq6}.

The {\textit{discrete-updating}} regime corresponds to the set of discrete time points $t=t_1,t_2,t_3,t_4$ at which at least one variable is observed. In this case, the hidden state will be updated according to the observations $\mathrm{x}_{t_k}$ via vanilla GRU cell, i.e., equation \eqref{Eq7}.
\\[5pt]
\textit{\textbf{Continuous-updating}}:
\begin{align}
& \forall t\in [0,T] \backslash [t_1,\cdots,t_K], \nonumber \\
&\frac{d \mathrm{h}(t)}{d t}=(1-\mathrm{z}^c(t)) \odot(\mathrm{\tilde{h}}^c(t)-\mathrm{h}(t)),\quad \textit{s.t.}\label{Eq6}\\
&\quad 
\begin{cases}\nonumber
\mathrm{r}^c(t)=\sigma(\mathbf{w}^c_{r} \mathrm{x}(t)+\mathbf{u}^c_{r} \mathrm{h}(t)+\mathrm{b}^c_{r}),\\
\mathrm{z}^c(t)=\sigma(\mathbf{w}^c_{z} \mathrm{x}(t)+\mathbf{u}^c_{z} \mathrm{h}(t)+\mathrm{b}^c_{z}),\\
\mathrm{\tilde{h}}^c(t)=\tanh (\mathbf{w}^c_{h} \mathrm{x}(t)+\mathbf{u}^c_{h} (\mathrm{r}^c(t) \odot \mathrm{h}(t))+\mathrm{b}^c_{h}).
\end{cases}
\end{align}
\\[5pt]
\textit{\textbf{Discrete-updating}}: 
\begin{align}
& \forall t \in [t_1,\cdots,t_K], \nonumber \\
&\mathrm{h}(t_{+})=(1-\mathrm{z}^u(t_{-})) \odot \mathrm{\tilde{h}}^u(t_{-})+\mathrm{z}^u(t_{-})\odot\mathrm{h}(t_{-})\;\; \textit{s.t.} \label{Eq7} \\
& \quad 
\begin{cases}\nonumber
\mathrm{r}^u(t_{-})=\sigma\left(\mathbf{w}^u_{r} \mathrm{x}_t+\mathbf{u}^u_{r} \mathrm{h}(t_{-})+\mathrm{b}^u_{r}\right),\\
\mathrm{z}^u(t_{-})=\sigma\left(\mathbf{w}^u_{z} \mathrm{x}_t+\mathbf{u}^u_{z}  \mathrm{h}(t_{-})+\mathrm{b}^u_{z}\right),\\
\mathrm{\tilde{h}}^u(t_{-})=\tanh \left(\mathbf{w}^u_{h} \mathrm{x}_t+\mathbf{u}^u_{h} (\mathrm{r}^u(t_{-}) \odot \mathrm{h}(t_{-}))+\mathrm{b}^u_{h}\right).
\end{cases}
\end{align}

The architecture of the marginal learning layer can be widely adaptable and can be substituted with other sequential models. These alternative formulations are detailed in Appendix \ref{Appendix:Marginal learning blocks}.

\subsection{Joint Synchronous Learning Layer} \label{sec:Multivariate Synchronous Learning}
Once the marginal learning layer is set up, we seek to maximize the log-likelihood of all observations,
\begin{align*}
\max_{\Phi} \log p(\mathrm{x}_{t_1}, \cdots, \mathrm{x}_{t_{K}}; \Phi),
\end{align*} 
where $p$ is the joint density to be estimated from the observed data at all observation times $[t_1,\cdots,t_K]$ and $\Phi=\{\phi_{1},\cdots,\phi_{K}\}$ is the parameter set. 

\subsubsection{Factorizing Log-likelihood Objective}
If the multivariate sequence is synchronous, observing one variable implies the observation of all other variables, despite potentially unevenly-spaced time intervals. The synchronous property ensures alignments, enabling the use of the chain rule of probability. Thus, we can decompose $p(\mathrm{x}_{t_1}, \cdots, \mathrm{x}_{t_{K}};\Phi)$ as
\begin{align}
&p(\mathrm{x}_{t_1}, \cdots, \mathrm{x}_{t_{K}};\Phi) \nonumber\\
=&p(\mathrm{x}_{t_1};\phi_1) 
p(\mathrm{x}_{t_2}|\mathrm{x}_{t_1};\phi_2) \cdots  p(\mathrm{x}_{t_K}|\mathrm{x}_{t_1},\cdots,\mathrm{x}_{t_{K-1}};\phi_K) \nonumber \\
=&p(\mathrm{x}_{t_1};\phi_1) 
\prod{_{k=2}^K} p(\mathrm{x}_{t_k}|\mathrm{x}_{t_1},\cdots,\mathrm{x}_{t_{k-1}};\phi_k). \label{Eq9}
\end{align}
Such decomposition can be used to factorize the maximization of the joint log-likelihood at all observation times into optimizing conditional log-likelihoods individually:
\begin{align*}
&\max_{\Phi} \log p(\mathrm{x}_{t_1}, \cdots, \mathrm{x}_{t_{K}};\Phi)\\
=&\max_{\phi_1} \log p (\mathrm{x}_{t_1};\phi_1) + 
\sum_{k=2}^{K}  \max_{\phi_k}\log  p(\mathrm{x}_{t_{k}}|\mathrm{x}_{t_{1:k-1}};\phi_k).
\end{align*}

Note that the amount of exogenous information in the hidden state up to $t_{k-1}$ will not increase until the next observation arrives at $t_k$. Therefore, conditioning on the sample path $\mathrm{x}_{t_1}, \cdots, \mathrm{x}_{t_{k-1}}$ is equivalent to conditioning on the hidden state at time $t_{k-}$
under mild assumptions.  As a result, we have
\begin{equation}
\begin{aligned}\label{Eq10}
\max_{\Phi} \log p(\mathrm{x}_{t_1}, \cdots, \mathrm{x}_{t_{K}};\Phi)
=\sum_{k=1}^K \max_{\phi_k}\log p(\mathrm{x}_{t_{k}}|\mathrm{h}_{t_{k-}};\phi_k).
\end{aligned}
\end{equation}

\subsubsection{Conditional Flow Representation}
Optimizing \eqref{Eq10} requires a conditional representation of the data distribution, wherein the log-densities can be tailored to suit one's specific conditional choices. For this purpose, we develop the following conditional formulation of the continuous normalizing flow. Fig. \ref{fig:cond_CNF} displays the framework of conditional CNF. 

Given three random variables $\mathrm{X}$, $\mathrm{Y}$, and $\mathrm{Z}$, our objective is constructing conditional CNF, i.e., expressing conditional log-density $\log p(\mathrm{x}|\mathrm{y})$ in terms of $\log p(\mathrm{z}|\mathrm{y})$ when there is a non-parametric map $\mathrm{f}$ that maps $\mathrm{z}$ to $\mathrm{x}$ under the control of $\mathrm{y}$. Lemma \ref{thm:ffjord joint density} is useful to achieve our goal.

\begin{lemma}\label{thm:ffjord joint density}
Let $\tilde{\mathrm{z}}(s)=[\mathrm{z}(s),\mathrm{y}(s)]^{\top}$ be a finite continuous random variable, and the probability density function of $\tilde{\mathrm{z}}(s)$ is $p(\tilde{\mathrm{z}}(s))=p(\mathrm{z}(s),\mathrm{y}(s))$ which depends on flow time $s$, where $s_{0}\leq s\leq s_{1}$. Given the governing dynamics of $\tilde{\mathrm{z}}(s)$ as
\begin{equation*}
\begin{aligned}
\frac{\partial \tilde{\mathrm{z}}(s)}{\partial s} =\left[\begin{array}{c}
\frac{\partial\mathrm{z}(s)}{\partial s}  \\
\frac{\partial\mathrm{y}(s)}{\partial s} 
\end{array}\right]=\left[\begin{array}{c}
\mathrm{f}(\mathrm{z}(s), s, \mathrm{y}(s); \theta) \\
0
\end{array}\right], 
\end{aligned}
\end{equation*}
\text{where} $s_0 \leq s \leq s_1$,	$\mathrm{z}(s_1)=\mathrm{x}$, $\mathrm{y}(s_1)=\mathrm{y}$, and $\mathrm{f}$ is Lipschitz continuous in $\mathrm{z}$ and continuous in $s$ for any $\mathrm{y}$. We have
	\begin{equation} \label{Eq11}
			\begin{aligned}
				&\quad \, \log p(\mathrm{x},\mathrm{y}) \\
				&= \log p(\mathrm{z}(s_{0}), \mathrm{y})+\int_{s_1}^{s_0}\operatorname{Tr}[\partial_{\mathrm{z}(s)} \mathrm{f}(\mathrm{z}(s), s, \mathrm{y}; \theta)]ds. 
			\end{aligned}
	\end{equation} 
\end{lemma}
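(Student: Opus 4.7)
The plan is to apply the Instantaneous Change of Variables theorem (equation \eqref{eqt:ffjord2}) to the augmented flow $\tilde{\mathrm{z}}(s) = [\mathrm{z}(s), \mathrm{y}(s)]^\top$ rather than to $\mathrm{z}(s)$ alone, and then exploit the degeneracy of the $\mathrm{y}$-component of the dynamics. First, I would verify that the augmented vector field $\tilde{\mathrm{f}}(\tilde{\mathrm{z}}(s), s; \theta) := [\mathrm{f}(\mathrm{z}(s), s, \mathrm{y}(s); \theta),\, 0]^\top$ satisfies the regularity hypothesis required by the theorem: Lipschitz continuity of $\mathrm{f}$ in $\mathrm{z}$ together with the trivial (zero) dependence on $\mathrm{y}$ in the second block yields Lipschitz continuity of $\tilde{\mathrm{f}}$ in $\tilde{\mathrm{z}}$, and continuity in $s$ is inherited. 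This guarantees existence and uniqueness of the flow and justifies applying the CNF density evolution to $\tilde{\mathrm{z}}$.

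Next I would compute the trace appearing in the density evolution equation. The Jacobian $\partial_{\tilde{\mathrm{z}}(s)} \tilde{\mathrm{f}}$ has the block form
\begin{equation*}
\partial_{\tilde{\mathrm{z}}(s)} \tilde{\mathrm{f}} = \begin{bmatrix} \partial_{\mathrm{z}(s)} \mathrm{f} & \partial_{\mathrm{y}(s)} \mathrm{f} \\ 0 & 0 \end{bmatrix},
\end{equation*}
so its trace collapses to $\mathrm{Tr}[\partial_{\mathrm{z}(s)} \mathrm{f}(\mathrm{z}(s), s, \mathrm{y}; \theta)]$. Applying the density evolution equation to $\tilde{\mathrm{z}}$ therefore gives
\begin{equation*}
\frac{\partial \log p(\mathrm{z}(s), \mathrm{y}(s))}{\partial s} = -\mathrm{Tr}[\partial_{\mathrm{z}(s)} \mathrm{f}(\mathrm{z}(s), s, \mathrm{y}; \theta)].
\end{equation*}

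Then I would integrate this ODE from $s_1$ down to $s_0$. Since $\partial \mathrm{y}(s)/\partial s = 0$, we have $\mathrm{y}(s) \equiv \mathrm{y}$ for all $s \in [s_0, s_1]$, and in particular the boundary values $\mathrm{y}(s_0) = \mathrm{y}(s_1) = \mathrm{y}$ coincide. Using the boundary condition $\mathrm{z}(s_1) = \mathrm{x}$ and carrying out the integration with limits from $s_1$ to $s_0$ (which absorbs a sign relative to the standard CNF formulation), one obtains
\begin{equation*}
\log p(\mathrm{z}(s_0), \mathrm{y}) - \log p(\mathrm{x}, \mathrm{y}) = -\int_{s_1}^{s_0} \mathrm{Tr}[\partial_{\mathrm{z}(s)} \mathrm{f}(\mathrm{z}(s), s, \mathrm{y}; \theta)]\,ds,
\end{equation*}
and rearranging yields exactly \eqref{Eq11}.

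The main obstacle is conceptual rather than computational: one must justify that augmenting the state with a variable whose flow is frozen still lets us apply the CNF density formula in a well-posed way, since the augmented density $p(\mathrm{z}(s), \mathrm{y}(s))$ is a joint density in which only the first marginal is being transported. The block-triangular Jacobian argument handles this cleanly, because the zero bottom row makes the determinant of the flow Jacobian for the $\tilde{\mathrm{z}}$ system factor as the determinant of the $\mathrm{z}$-subflow alone conditional on $\mathrm{y}$; equivalently, the $\mathrm{y}$-marginal is invariant under the flow and all the density mass is rearranged within each $\mathrm{y}$-slice. Once this observation is in place, the rest is a direct application of the Instantaneous Change of Variables theorem to the augmented system.
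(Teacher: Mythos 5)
Your proof is correct, and it reaches the result by a genuinely shorter path than the paper's. Both proofs begin with the same essential device — augmenting the state to $\tilde{\mathrm{z}}(s) = [\mathrm{z}(s), \mathrm{y}(s)]^{\top}$ with a frozen $\mathrm{y}$-component so that the conditional problem becomes a joint-density transport problem. Where you diverge is in how the density-evolution ODE for $\log p(\tilde{\mathrm{z}}(s))$ is obtained. You treat the Instantaneous Change of Variables theorem (the paper's equation \eqref{eqt:ffjord2}, from Chen et al.) as a black box, check that the augmented vector field $\tilde{\mathrm{f}} = [\mathrm{f}, 0]^{\top}$ satisfies its regularity hypotheses, and then observe that the block upper-triangular Jacobian $\begin{bmatrix} \partial_{\mathrm{z}} \mathrm{f} & \partial_{\mathrm{y}} \mathrm{f} \\ 0 & 0 \end{bmatrix}$ has trace $\operatorname{Tr}[\partial_{\mathrm{z}(s)} \mathrm{f}]$; integrating the resulting ODE from $s_1$ to $s_0$ gives \eqref{Eq11} after a sign rearrangement. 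The paper instead re-derives that ODE from first principles in the augmented setting: it states the static change-of-variables identity for $(G(\mathrm{Z}), \mathrm{Y})$, writes $\partial_s \log p(\tilde{\mathrm{z}}(s))$ as a difference quotient, applies L'H\^{o}pital's rule, Jacobi's formula for the derivative of a determinant, and a Taylor expansion of the time-$\epsilon$ flow map $T_{\epsilon}$, arriving at the same trace expression. Your route is cleaner and more economical, and it makes explicit \emph{why} the trace collapses (block triangularity, zero bottom row); the paper's route is more self-contained, since it does not lean on the existing theorem but effectively re-proves it in the augmented space. One small point worth being careful about in your version: the Instantaneous Change of Variables theorem, as stated for $\tilde{\mathrm{z}}$, would nominally want Lipschitz continuity in the full $\tilde{\mathrm{z}}$, whereas the lemma only assumes Lipschitz continuity in $\mathrm{z}$ for each fixed $\mathrm{y}$; you implicitly address this in your closing paragraph by noting that $\mathrm{y}(s)\equiv\mathrm{y}$ is constant along the flow, so the argument can be applied on each $\mathrm{y}$-slice separately, which is exactly what the hypothesis supplies. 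Making that slice-wise application explicit would close the only loose end.
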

\begin{proof}
See the Appendix \ref{Appendix:Proof of Theorem}. 
\end{proof}

Subtracting $\log p(\mathrm{y})$ on both sides of equation \eqref{Eq11} in Lemma \ref{thm:ffjord joint density} and using the fact that $\log p(\mathrm{x}, \mathrm{y}) - \log p(\mathrm{y}) = \log \frac{p(\mathrm{x},\mathrm{y})}{p(\mathrm{y})}= \log p(\mathrm{x}|\mathrm{y})$, we obtain a result of conditional CNF that is summarized in Proposition \ref{thm:ffjord conditional density}.


\begin{proposition} \label{thm:ffjord conditional density}
Let the assumptions in Lemma \ref{thm:ffjord joint density} hold. The conditional log-density $p(\mathrm{x}|\mathrm{y})$ is given by:
\begin{equation}
\begin{aligned}\label{Eq12}
&\log p(\mathrm{x}|\mathrm{y}) \\ 
=&\log p(\mathrm{z}(s_{0})|\mathrm{y})+\int_{s_1}^{s_0}\operatorname{Tr}[\partial_{\mathrm{z}(s)} \mathrm{f}(\mathrm{z}(s), s, \mathrm{y}; \theta)]ds. 
\end{aligned}
\end{equation} 
\end{proposition}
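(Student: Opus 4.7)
The plan is to derive the proposition by combining Lemma \ref{thm:ffjord joint density} with the defining identity of conditional probability. Since the lemma already gives a closed-form expression for $\log p(\mathrm{x},\mathrm{y})$ in terms of the base joint density $\log p(\mathrm{z}(s_{0}),\mathrm{y})$ and the trace integral, the main task reduces to passing from the joint to the conditional by subtracting $\log p(\mathrm{y})$ on both sides. I do not need to re-derive any instantaneous change-of-variables machinery — all of that is absorbed into the lemma.

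First, I would invoke equation \eqref{Eq11} of Lemma \ref{thm:ffjord joint density} verbatim, observing that its hypotheses (Lipschitz continuity of $\mathrm{f}$ in $\mathrm{z}$, continuity in $s$ for fixed $\mathrm{y}$, and the augmented dynamics with $\partial_{s}\mathrm{y}(s)=0$) are in force by assumption. Next, assuming $p(\mathrm{y})>0$ so that conditioning is well-defined, I would subtract $\log p(\mathrm{y})$ from both sides. On the left-hand side, the standard identity $\log p(\mathrm{x},\mathrm{y})-\log p(\mathrm{y})=\log\tfrac{p(\mathrm{x},\mathrm{y})}{p(\mathrm{y})}=\log p(\mathrm{x}|\mathrm{y})$ produces exactly the target quantity. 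On the right-hand side, applying the same identity to the base-time pair $(\mathrm{z}(s_{0}),\mathrm{y})$ gives $\log p(\mathrm{z}(s_{0}),\mathrm{y})-\log p(\mathrm{y})=\log p(\mathrm{z}(s_{0})|\mathrm{y})$, which is the desired base conditional term.

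The integral term requires no modification. Because the augmented dynamics enforce $\partial_{s}\mathrm{y}(s)=0$, the value $\mathrm{y}(s)\equiv\mathrm{y}$ is constant along the flow, so the integrand $\operatorname{Tr}[\partial_{\mathrm{z}(s)}\mathrm{f}(\mathrm{z}(s),s,\mathrm{y};\theta)]$ already treats $\mathrm{y}$ as a fixed conditioning parameter rather than as a random component whose marginal density enters the normalization. In particular, subtracting $\log p(\mathrm{y})$ does not touch this term. Assembling the three pieces yields formula \eqref{Eq12} exactly.

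I do not anticipate a genuine obstacle: the nontrivial content — performing the instantaneous change of variables on a flow that is degenerate in the $\mathrm{y}$ coordinate — has been handed to us by Lemma \ref{thm:ffjord joint density}. The only subtlety worth flagging in the write-up is the implicit requirement $p(\mathrm{y})>0$, which underwrites both the subtraction step and the definition of $p(\mathrm{x}|\mathrm{y})$; the conclusion should therefore be read as holding $p(\mathrm{y})$-almost everywhere. If one wanted a fully self-contained argument, one could alternatively start directly from the augmented flow, track the marginalization $p(\mathrm{y}(s))=p(\mathrm{y})$ (which is invariant in $s$ since $\mathrm{y}$ does not evolve), and divide pointwise, but the two routes deliver the same equation \eqref{Eq12}.
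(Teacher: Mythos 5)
Your proposal is correct and follows exactly the paper's own derivation: invoke Lemma \ref{thm:ffjord joint density}, subtract $\log p(\mathrm{y})$ from both sides, and apply $\log p(\mathrm{x},\mathrm{y}) - \log p(\mathrm{y}) = \log p(\mathrm{x}|\mathrm{y})$ to both the left-hand side and the base term. The remark that $p(\mathrm{y})>0$ is implicitly required is a sensible addition that the paper leaves unstated.
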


By extending the unconditional CNF formulation (see Section \ref{subsec:flow}) to the conditional formulation, equation (\ref{Eq12}) allows the data distribution to be both non-parametric via the flow map $\mathrm{f}$ and time-varying, e.g., by choosing the conditional information $\mathrm{y}$ properly. 

\begin{figure}[ht]
    \centering
    \includegraphics[width=0.9\linewidth]{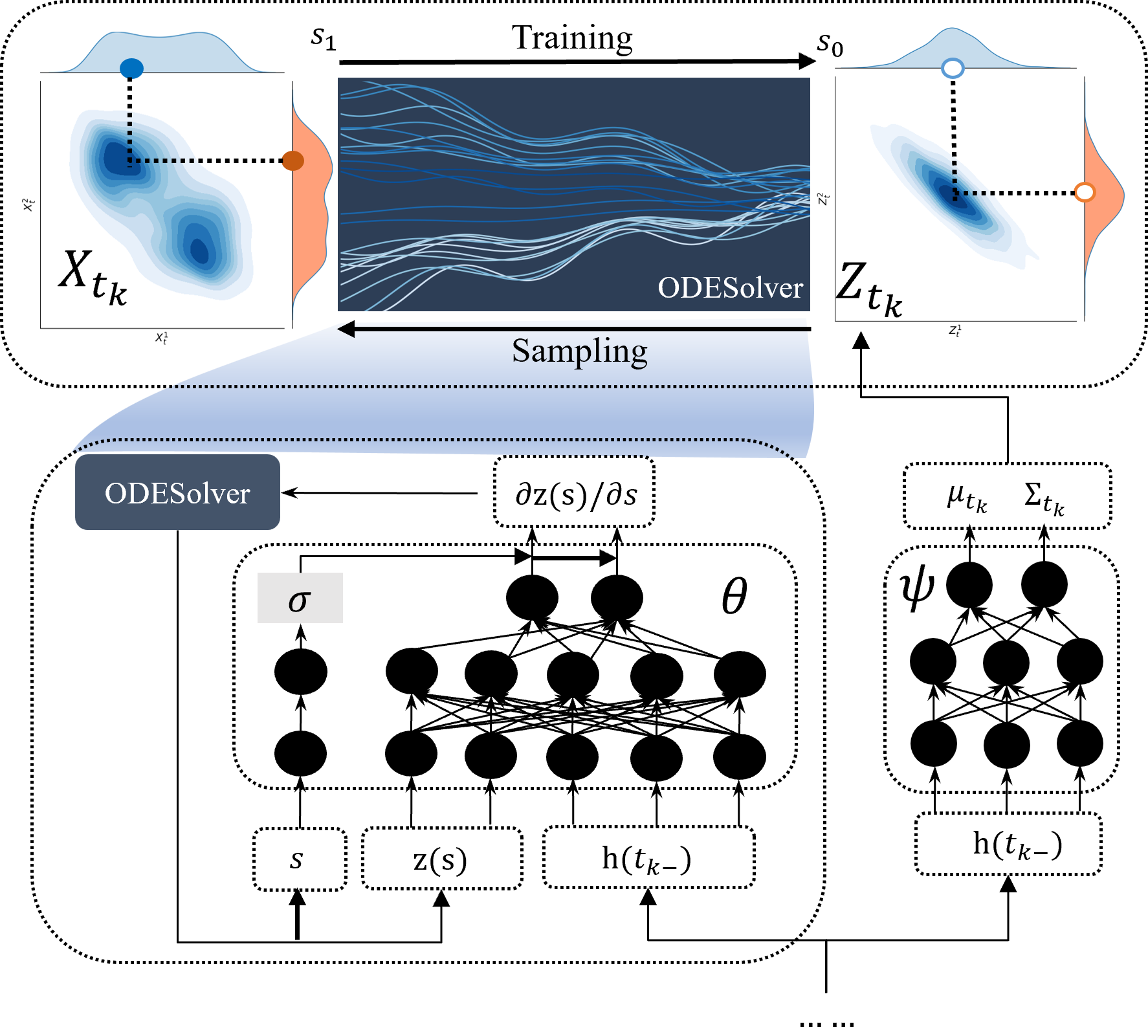}
    \vspace{-0.2cm}
    \caption{The framework of conditional CNF.}
    \label{fig:cond_CNF}
\end{figure}
\subsubsection{Time-varying Specification}
We now discuss how to obtain $p(\mathrm{x}_{t_{k}}|\mathrm{h}_{t_{k-}};\phi_k)$ in equation \eqref{Eq10} via conditional CNF. First, the conditional information $\mathrm{y}$ in equation \eqref{Eq12} is the hidden states $\mathrm{h}_{t_{k-}}$. We assume that there is a bijective map between $\mathrm{x}_{t_{k}}$ and $\mathrm{z}_{t_{k}}$ such that $\mathrm{z}_{t_{k}}|\mathrm{h}_{t_{k-}}$ is Gaussian distributed. Further, we shall let the hidden states determine the parameter values of the Gaussian base distribution. Because hidden states contain information on historical data and vary at different observation times, this choice of conditioning leads to path-dependent parameters of the base distribution. 

Specifically, we assume that the mean vector $\mu_{t_{k}}$ and covariance matrix $\Sigma_{t_{k}}$ of the Gaussian base are functions (learned by standard MLP, denoted by $\mathrm{g}$) of the hidden states $\mathrm{h}_{t_{k-}}$ at time-${t_{k-}}$ learned from the marginal learning layer, i.e.,
\begin{equation}
\begin{aligned}\label{Eq13}
&\mathrm{Z}_{t_k}|\mathrm{h}_{t_{k-}} \sim \mathcal{N}(\mu_{t_{k}} ,\Sigma_{t_{k}}), \quad \text{where}\\
&\{\mu_{t_k}, \Sigma_{t_k}\}=\mathrm{g}(\mathrm{h}_{t_{k-}}; \psi).
\end{aligned}
\end{equation}
This is in contrast with the unconditional flow models where the base distribution parameters are constants as it is a standard Normal $\mathcal{N}(\mathrm{0},\mathbb{I}_D)$. 


It is worth noting that previous studies \citep{salinas2020deepar, de2019gru, salinas2019high} also use hidden states to learn Gaussian parameters. However, those Gaussian parameters are associated with the data distribution. Our representation shifts the Gaussian assumption from the data distribution to the base distribution, and it is the base distribution parameters that depend on hidden states. This distinction is crucial because our representation enables the data distribution to be non-parametric, non-Gaussian, and time-varying simultaneously.

Meanwhile, we use the following gated mechanism to incorporate the hidden state dependence into the flow map. Mathematically, we have
\begin{equation}
\begin{aligned}\label{Eq14}
&\mathrm{f}(\mathrm{z}(s),s,\mathrm{h}_{t_{k-}};\theta)\\
=&(\mathbf{w}_z \mathrm{z}(s)+\mathbf{w}_h \mathrm{h}_{t_{k-}}+\mathrm{b}_z)\sigma(w_{s}s + b_{s}).
\end{aligned}
\end{equation}
Here, $\sigma$ is the sigmoid activation function. $\theta$ is the set of trainable parameters which includes $\mathbf{w}_h\in \mathbb{R}^{D\times H}$, $\mathrm{b}_z\in \mathbb{R}^{D}$, and $\{w_{s}, b_{s}\}\in \mathbb{R}$. The resulting flow dynamics is


\begin{align*}
	&\frac{\partial \mathrm{z}(s)}{\partial s}=\mathrm{f}(\mathrm{z}(s), s, \mathrm{h}_{t_{k-}}; \theta), s \in [s_0, s_1], \\
	&\text{where} \quad \mathrm{z}(s)|_{s=s_0} = \mathrm{z}_{t_k} , \,\, \mathrm{z}(s)|_{s=s_1} = \mathrm{x}_{t_k}.
\end{align*}
The initial value of the flow $\mathrm{z}(s)|_{s=s_0}$ is set as the Gaussian base sample $\mathrm{z}_{t_k}$, sampled from $\mathrm{Z}_{t_k}|\mathrm{h}_{t_{k-}}$ per (\ref{Eq13}), and the terminal flow value $\mathrm{z}(s)|_{s=s_1}$ is set to equal the time-$t_k$ observed data $\mathrm{x}_{t_k}$. Its solution maps samples from base distribution to data distribution concurrently, given as
\begin{align}
\mathrm{x}_{t_k} = \mathrm{z}_{t_k} + \int_{s_0}^{s_1}\mathrm{f}(\mathrm{z}(s),s, \mathrm{h}_{t_{k-}};\theta)ds. \label{Eq15}
\end{align} 

We summarize the time-varying specification of the conditional CNF representation for Syn-MTS data as follows. For $t_k\in [t_1,\cdots, t_K]$, we have
{\small
\begin{equation}
		\begin{aligned}
			&\quad \,
			\begin{bmatrix}
				\mathrm{x}_{t_k}\\
				\log p(\mathrm{x}_{t_k}|\mathrm{h}_{t_{k-}}; \phi_{t_k})
			\end{bmatrix}\\
			&=
			\begin{bmatrix}
				\mathrm{z}_{t_k} \\
				\log p(\mathrm{z}_{t_k}|\mathrm{h}_{t_{k-}};\mu_{t_k}, \Sigma_{t_k})
			\end{bmatrix}+\int_{s_0}^{s_1}
			\begin{bmatrix}
				\mathrm{f}\big(\mathrm{z}(s), s, \mathrm{h}_{t_{k-}}; \theta\big) \\
				-\operatorname{Tr} [\partial_{\mathrm{z}(s)} \mathrm{f}]
			\end{bmatrix}ds. \label{Eq16}
		\end{aligned}
\end{equation}
}\noindent
It has two essential components, the time-varying base distribution driven by hidden states where $\psi$ in (\ref{Eq13}) is the parameter to be trained, and the time-varying flow map indexed by observation arrival times with $\theta$ in \eqref{Eq14} being the set of trainable parameters.

\subsubsection{Training and Sampling}	
To train the NFR model, we maximize the conditional log-likelihoods \eqref{Eq10} across all sample instances at all observation times. We set the dataset level log-likelihood $ \mathcal{L_{\text{Syn-MTS}}}$ as the sample average of $(\mathcal{L}^{i}_{\text{Syn-MTS}})_{i=1}^{N}$, where each $\mathcal{L}^{i}_{\text{Syn-MTS}}$ is weighted by the number of observation times $K_i$. To be concrete, we have
{\small
	\begin{align}
		&\mathcal{L_{\text{Syn-MTS}}}\nonumber\\
		=&\frac{1}{N}\underset{i=1}{\overset{N}{\sum}}\mathcal{L}^{i}_{\text{Syn-MTS}} 
		= \frac{1}{N} \sum_{i=1}^{N}\frac{1}{K_{i}} \underset{k=1}{\overset{K_{i}}{\sum}} \log p(\mathrm{x}_{i,t_{k}^{i}}| \mathrm{h}_{i,t_{k-}^{i}}; \phi_{i,t_k^{i}} )\nonumber\\
		=&\frac{1}{N} \sum_{i=1}^{N} \frac{1}{K_{i}} \underset{k=1}{\overset{K_{i}}{\sum}} 
		\begin{pmatrix}
			\log p(\mathrm{z}_{i,t_k^{i}}|\mathrm{h}_{i,t_{k-}^{i}} ;\mu_{i,t_k^{i}}, \Sigma_{i,t_k^{i}}) + \qquad\\
			\quad\int_{s_1}^{s_0}\operatorname{Tr}\big[\partial_{\mathrm{z}(s)} \mathrm{f}(\mathrm{z}(s),s, \mathrm{h}_{i,t_{k-}^{i}};\theta)\big] ds
		\end{pmatrix}.\label{Eq17}
	\end{align}
}

Once the model is trained, we can forecast the joint data distribution at time point $t_{k}$ given the observations $\{\mathrm{x}_{t_1}, \cdots, \mathrm{x}_{t_{k-1}}\}$ of the $i^{\text{th}}$ instance. The confidence interval of predictions can be obtained by sampling from the learned joint distribution. 

For sampling, we first obtain the hidden states $\mathrm{h}_{t_{k-}}$ for the $i^{\mathrm{th}}$ instance based on the observations $\{\mathrm{x}_{t_1}, \cdots, \mathrm{x}_{t_{k-1}}\}$ per \eqref{Eq6} and \eqref{Eq7}, and use $\mathrm{h}_{t_{k-}}$ to predict the base distribution parameters $\{\mu_{t_k},\Sigma_{t_k}\}$ per \eqref{Eq13}. We then sample a given number of points $\mathrm{z}_{t_k}$ from the base distribution with the predicted parameters. Finally, we transform these base sample points using the conditional CNF map \eqref{Eq15} to obtain concurrent samples that follow the data distribution.

\subsection{Joint Asynchronous Learning Layer}\label{sec:Multivariate Asynchronous Learning}
If the multivariate sequence is asynchronous, it implies the possibility that not all entries of the vector $\mathrm{x}_{t_k}$ are observable at any given time $t_k\in \mathrm{t}$ (as defined in Definition 2). 

\subsubsection{Masked Independent Log-likelihoods Objective}
In the scenario of Syn-MTS, it is observed that all variables simultaneously record observations $\mathrm{x}_{t_k} \in \mathbb{R}^D$ at a specific observational time point $t_k$. Under these circumstances, one can convert it into a sample $\mathrm{z}_{t_k} \in \mathbb{R}^D$ following multivariate Gaussian distribution conditioned on the hidden state $\mathrm{h}_{t_{k-}}$ via a conditional CNF. The mean and covariance of this multivariate Gaussian distribution are represented as $\mu_{t_k}$ and $\Sigma_{t_k}$, respectively. Consequently, the log-likelihood of $\mathrm{x}_{t_k}$ can be computed as
\begin{equation*}
\begin{aligned}
&\log p(\mathrm{x}_{t_k}|\mathrm{h}_{t_{k-}}; \phi_{t_k})\\
=&\log p(x_{t_k}^1, \cdots, x_{t_k}^D|\mathrm{h}_{t_{k-}}; \phi_{t_k}) \\
=&\log p(z_{t_k}^1, \cdots, z_{t_k}^D|\mathrm{h}_{t_{k-}}; \mu_{t_k}, \Sigma_{t_k}) + \int_{s_0}^{s_1}-\operatorname{Tr}[\partial_{z(s)}\mathrm{f}]ds.
\end{aligned}
\end{equation*}

In the Asyn-MTS cases, where some variables do not have values at given times, the approach used in the synchronous case cannot be applied directly. 
To resolve this additional complication, we restrict each dimension of conditional multivariate Gaussian distribution to be independent. This allows for the decomposition of the joint log-likelihood $\log p(z_{t_k}^1, \cdots, z_{t_k}^D|\mathrm{h}_{t_{k-}}; \mu_{t_k}, \Sigma_{t_k})$ into individual components, as follows
\begin{align*}
    p(z_{t_k}^1, \cdots, z_{t_k}^D|\mathrm{h}_{t_{k-}}; \mu_{t_k}, \Sigma_{t_k})=\underset{d=1}{\overset{D}{\prod}} p(z_{t_k}^{d}|\mathrm{h}_{t_{k-}}; \mu_{t_k}^{d}, \Sigma_{t_k}^{d}).
\end{align*}
Thus, we have
\begin{align}
&\log p(x_{t_k}^1, \cdots, x_{t_k}^D|\mathrm{h}_{t_{k-}}; \phi_{t_k}) \nonumber\\
=&\log\underset{d=1}{\overset{D}{\prod}} p(z_{t_k}^{d}|\mathrm{h}_{t_{k-}}; \mu_{t_k}^{d}, \Sigma_{t_k}^{d})+\int_{s_0}^{s_1}-\operatorname{Tr}[\partial_{z(s)}\mathrm{f}]ds \nonumber\\
=&\sum_{d=1}^{D} \log p(z_{t_k}^d|\mathrm{h}_{t_{k-}}; \mu_{t_k}^d, \Sigma_{t_k}^d)+ \int_{s_0}^{s_1}-\operatorname{Tr}[\partial_{z(s)}\mathrm{f}]ds.\label{eqt:obj_fun_asyn_train}
\end{align}
Note that $(\mu_{t_k}^{d}, \Sigma_{t_k}^{d})\in\mathbb{R}\times\mathbb{R}$ are learned from MLPs. For each $d$, $\mu_{t_k}^{d}$ and $\Sigma_{t_k}^{d}$ represent the mean and variance of the variable $Z_{t_{k}}^{d}$ conditioned on the hidden state $\mathrm{h}_{t_{k-}}$, which is assumed to follow a Gaussian distribution.

Nevertheless, it is noteworthy that at a specific observation time $t_k$, not all variables $x_{t_{k}}^{d}$ are necessarily observed. For any unobserved variable $x_{t_{k}}^{d}$, its corresponding log-density should be omitted from \eqref{eqt:obj_fun_asyn_train}. To address this scenario in practice, we employ a masking mechanism denoted as $m_{t_{k}}^{d}$ to mitigate the impact of the unobserved quantity $\log p(z_{t_k}^d|\mathrm{h}_{t_{k-}}; \mu_{t_k}^d, \Sigma_{t_k}^d)$. As a result, equation \eqref{eqt:obj_fun_asyn_train} is transformed into the form:
\begin{align}\label{eqt:resultant asyn obj}
&\log p(z_{t_k}^d|\mathrm{h}_{t_{k-}}; \mathrm{m}_{t_{k}}, \phi_{t_k})\nonumber\\
=&\sum_{d=1}^{D} m_{t_k}^d \log p(z_{t_k}^d|\mathrm{h}_{t_{k-}}; \mu_{t_k}^d, \Sigma_{t_k}^d)+ \int_{s_0}^{s_1}-\operatorname{Tr}[\partial_{z(s)}\mathrm{f}]ds.
\end{align}
This decomposition along the component dimension facilitates the optimization process by focusing solely on the conditional distribution of variables with observations at time $t_k$, rather than optimizing across all $D$ variables. Essentially, any component lacking an observation at $t_k$ is effectively masked out, thereby ensuring its exclusion from influencing the overall loss function.

In addition to the decomposition of the base Gaussian distribution, it is equally crucial to address how unobserved components are handled during the transformation process $\mathrm{x}_{t_k}=\mathrm{f}(\mathrm{z}_{t_k})$. Allowing transformations of missing values in parallel with observed data can significantly skew the learning outcomes. This is attributed to the fact that the transformation applied to these unobserved values inadvertently influences the cumulative transformation loss, specifically {\small $\int_{s_0}^{s_1}-\operatorname{Tr}[\partial_{z(s)}\mathrm{f}]ds$}. 
To mitigate the problem, we allow the transformation process for those observed components, and keep the process unchanged for those unobserved components at each $t_{k}\in\mathrm{t}$. Thus, the evolution equation of $z^{d}(s)$ for each $d$ along the flow time can be described by the following ODE:
\begin{equation}
\begin{aligned}\label{Eq21}
\frac{\partial z^d(s)}{\partial s} = \mathrm{f}(z^d(s),s,\mathrm{h}_{t_{k-}};  m_{t_k}^d, \theta) \;,\;\; s \in [s_0, s_1],
\end{aligned}
\end{equation}
where
{\small
\begin{align*}\label{Eq21}
\mathrm{f}(z^d(s),s,\mathrm{h}_{t_{k-}}; m_{t_k}^d, \theta)=
\begin{cases}
\begin{aligned}
\mathrm{f}(z^d(s),&s,\mathrm{h}_{t_{k-}}; \theta) && \text{if $m_{t_k}^d=1$,}\\
& 0 && \text{if $m_{t_k}^d=0$.}
\end{aligned}
\end{cases}
\end{align*}
}

We summarize the conditional CNF representation for Asyn-MTS data as
{\small
\begin{equation}
    \begin{aligned}\label{Eq22}
        &\quad \,\begin{bmatrix}
            \mathrm{x}_{t_k} \\
            \log p(\mathrm{x}_{t_k} | \mathrm{h}_{t_{k-}}; \mathrm{m}_{t_k}, \phi_{t_k})
        \end{bmatrix} \\
        &=
        \resizebox{0.5\columnwidth}{!}{$
            \begin{bmatrix}
                \mathrm{z}_{t_k}  \\
                \sum_{d=1}^{D} m^d_{t_k}\log p\big(z_{t_k}^d| \mathrm{h}_{t_{k-}};\Sigma^d_{t_k}, \mu_{t_k}^d\big)
            \end{bmatrix}+
            $}
        \int_{s_0}^{s_1}
        \resizebox{0.36\columnwidth}{!}{$			
            \begin{bmatrix}
                \mathrm{f}(\mathrm{z}(s), s, \mathrm{h}_{t_{k-}},  \mathrm{m}_{t_k}; \theta) \\
                -\operatorname{Tr}\big[ \partial_{z^d(s)} \mathrm{f} \big]
            \end{bmatrix}ds
            $}.
    \end{aligned}
\end{equation}
}

\subsubsection{Training and Sampling}
The target in the asynchronous case also maximizes the log-likelihood of observations. However, different from the objective function \eqref{Eq16} of the Syn-MTS model, the unobserved components are masked. Mathematically, we have

{\small
\begin{align}\label{Eq23}
&\mathcal{L}_{\text{Asyn-MTS}}=\frac{1}{N}\underset{i=1}{\overset{N}{\sum}}\mathcal{L}^{i}_{\text{Asyn-MTS}}, \quad \text{where} \nonumber\\
&\mathcal{L}^{i}_{\text{Asyn-MTS}} \nonumber \\ &=\sum_{\substack{1\leq k\leq K_{i}\\1\leq d\leq D}}\frac{1}{K_{i}D}
		\begin{pmatrix}
			m^{d}_{i,t_{k}^{i}}\log p(z_{i,t_{k}^{i}}^d|\mathrm{h}_{i,t_{k-}^{i}};\mu_{i,t_k^{i}}^d, \Sigma_{i,t_k^{i}}^{d})+\\
			\int_{s_1}^{s_0}\operatorname{Tr}[\partial_{z^d(s)}\mathrm{f}(z^d(s),s,\mathrm{h}_{t_{k-}}; m_{t_k}^d, \theta)]ds
		\end{pmatrix}.
\end{align}
}

The sampling process at time $t_{k}$ of the Asyn-MTS model is similar to the Syn-MTS model. We first predict the parameters of base distribution of variables $X_{t_k}^1, \cdots, X_{t_k}^D$ conditional on the hidden state $\mathrm{h}_{t_{k-}}$ from marginal learning layer, i.e., {\small$\{\mu_{t_k}^1, \Sigma_{t_k}^1\}, \cdots, \{\mu_{t_k}^D, \Sigma_{t_k}^D\}$}. Then, we sample a given number of points $z_{t_k}^1, \cdots, z_{t_k}^D$ from the base distributions with the predicted parameters, and we transform these points using the conditional CNF model. 

The algorithms for the training and sampling procedures of the Syn-MTS model and the Asyn-MTS model are provided in Appendix \ref{Appendix:Algorithms}.
\section{Experiments}
We conduct one simulation experiment to verify the efficacy of the model specification, followed by three real-world datasets to evaluate the overall performance of the RFNs. In all studies and experiments, we use 70\%/15\%/15\%  of the sample instances for training/validation/testing. 

\subsection{Baseline Models}
     We establish four baseline models specifically to facilitate the construction of the marginal learning layer, 
     namely GRUODE \citep{de2019gru}, GRU-D \citep{che2018recurrent}, ODERNN \citep{rubanova2019latent}, and ODELSTM \citep{lechner2020learning}. These baselines are all designed to model multivariate irregularly sampled time series. Following \citep{de2019gru}, We adapt them by assuming that each variable follows a multivariate normal distribution, and thus their joint learning layer is designed to learn the parameters of these normal distributions. Specifically, the parameters of the normal distribution, $\mu_{t_k}$ and $\sigma_{t_k}$ are predicted based on the hidden state $\mathrm{h}(t_k)$ via a feed-forward neural network directly. 
     
     To compare the vanilla form of a baseline model, e.g., GRU-D, with its RFN counterpart, e.g., termed as RFN-GRU-D, we use the corresponding RFN specifications laid out in Section \ref{sec:Multivariate Synchronous Learning} and \ref{sec:Multivariate Asynchronous Learning} for the multivariate learning. For a fair and robust comparison, we employ the same marginal learning layer across all baseline models in RFN models. The only difference between the baselines and our RFN model is the replacement of the joint learning layer—from the multivariate normal distribution assumption in the baseline models to the conditional CNF in RFN. By doing so, we can isolate and demonstrate the specific performance improvement brought by our proposed joint learning layer.
     
     The alternative construction of the marginal learning layer except GRUODE is listed in Appendix \ref{Appendix:Marginal learning blocks}. Hyperparameters, such as learning rate and batch size, are tuned using a random search for all experiments. The `dopri5' method is employed as the numerical solver for the ODE-based models. 

\subsection{Evaluation Metric}
We use the Continuous Ranked Probability Score (CRPS) \cite{matheson1976scoring} as the evaluation metric to measure the proximity between the learned distribution and the empirical distribution of data. The CRPS metric is the suitable scoring function for the evaluation since it attains the minimum when the learned distribution of data aligns with its empirical distribution. The CRPS is defined as 
\begin{align*}
	\operatorname{CRPS}(\hat{F}, x)=\int_{\mathbb{R}}(\hat{F}(z)-\mathbb{I}\{x \leq z\})^2 \mathrm{~d} z,
\end{align*}
where {\small$\hat{F}$} is the estimated CDF of random variable $X$ and $x$ is the realized observations of $X$. In our dataset, the observations are irregularly sampled, and some values are missing, and thus we only compute $\mathrm{CRPS}$  when the variables can be observed. 

Since $\mathrm{CRPS}$ targets to evaluate the estimated distribution of the univariate random variable, \cite{salinas2019high} suggests using $\mathrm{CRPS}_{\text {sum }}$ as a new proper multivariate-scoring rule to evaluate the estimated distribution of multivariate random variables. Specifically, we have

\begin{align*}
    \mathrm{CRPS}_{\text {sum }}=\mathbb{E}_t\biggl[\operatorname{CRPS}\biggl(\hat{F}_{\text {sum }}(t), \sum_{d=1}^D x_t^d\biggl)\biggl],
\end{align*}

where {\small$\hat{F}_{\text {sum }}$} is computed by adding the estimated CDF of all variables together. In practice, the empirical CDF is always used to represent $\hat{F}$ and {\small$\hat{F}_{\text {sum }}$}. In our experiments, we take 100 samples from trained models to estimate the empirical CDF. For Asyn-MTS datasets, some variables are not observable at an observation time; we thus only add the estimated CDF of observed variables as {\small$\hat{F}_{\text {sum }}$}. 

Apart from $\mathrm{CRPS}$ and $\mathrm{CRPS}_{\text {sum }}$, Kuleshov et al. \cite{kuleshov2018accurate} propose a more interpretive metric, termed confidence score ($\mathrm{CS}$),  to evaluate the calibration performance of the predicted distribution in the time series forecasting. The key idea behind this metric is to measure how the true frequency of observations across all time points matches the quantiles of the predicted distribution. For each variable $d$ at each time point $t$, the model predicts a distribution $F_t^d(\cdot)$. From this, we can compute the $q$-th quantile of the distribution, denoted by $Q_t^d(q)=\inf \{x_t^d \in \mathbb{R} \mid F_t^d(q) \ge x_t^d\}$. For a well-calibrated model, the proportion of actual observations $x_t^d$ smaller than the predicted $Q_t^d(q)$ should closely approximate $q$.
        
        Specifically, it first choose $m$ quantile levels $0<p_1<\cdots<p_m<1$, then for each threshold $p_j$, they compute the empirical frequency 
        \begin{align*}
        		\hat{p}_j^d=\frac{|\{{x_t^d: F_t^d(x_t^d) \le p_j, t=1, \cdots, T}|\}|}{T}.
        	\end{align*}
        Then, the confidence score is defined to describe the quality of forecast calibration $\mathrm{CS}=\frac{1}{mD}\sum_{j=1}^m\sum_{d=1}^D (p_j-\hat{p}_j^d)^2$.

\subsection{Simulation Experiment}\label{subsec:Ablation}
\begin{figure*}[tbh]
	\centering
	\subfigure[]{
		\includegraphics[width=0.99\columnwidth]{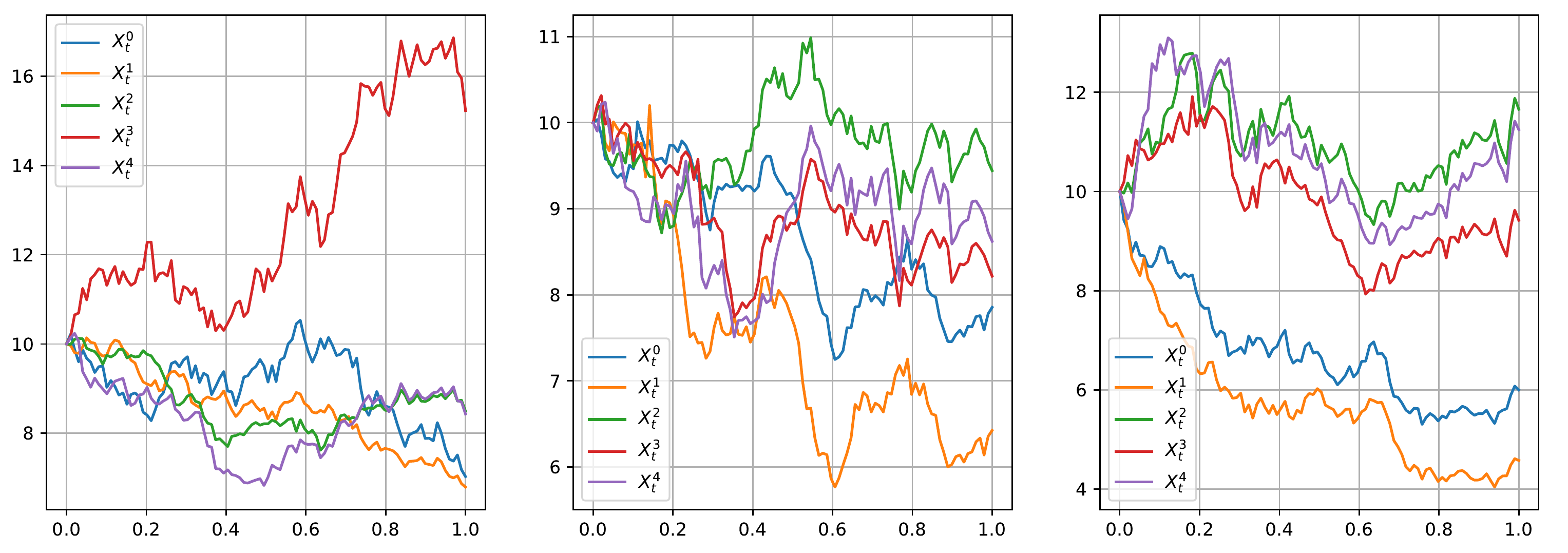}
		\label{fig:instances}
	}%
	\subfigure[]{
		\includegraphics[width=0.99\columnwidth]{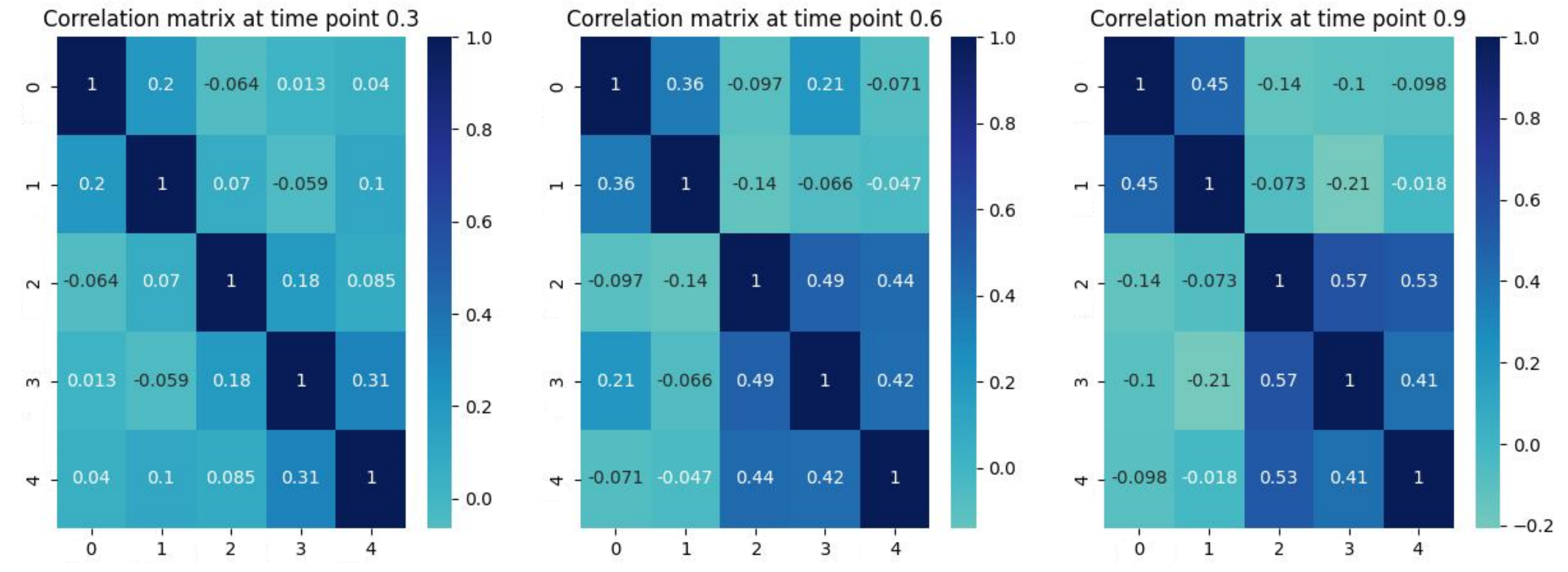}
		\label{fig:correlation}
	}%
	\caption{ (a) Three representative sample paths. (b) The sample correlation matrix at observed time points 0.3, 0.6, 0.9.}
\end{figure*}

\begin{figure*}[tbh]
	\centering
	\subfigure[]{
		\includegraphics[width=0.99\columnwidth]{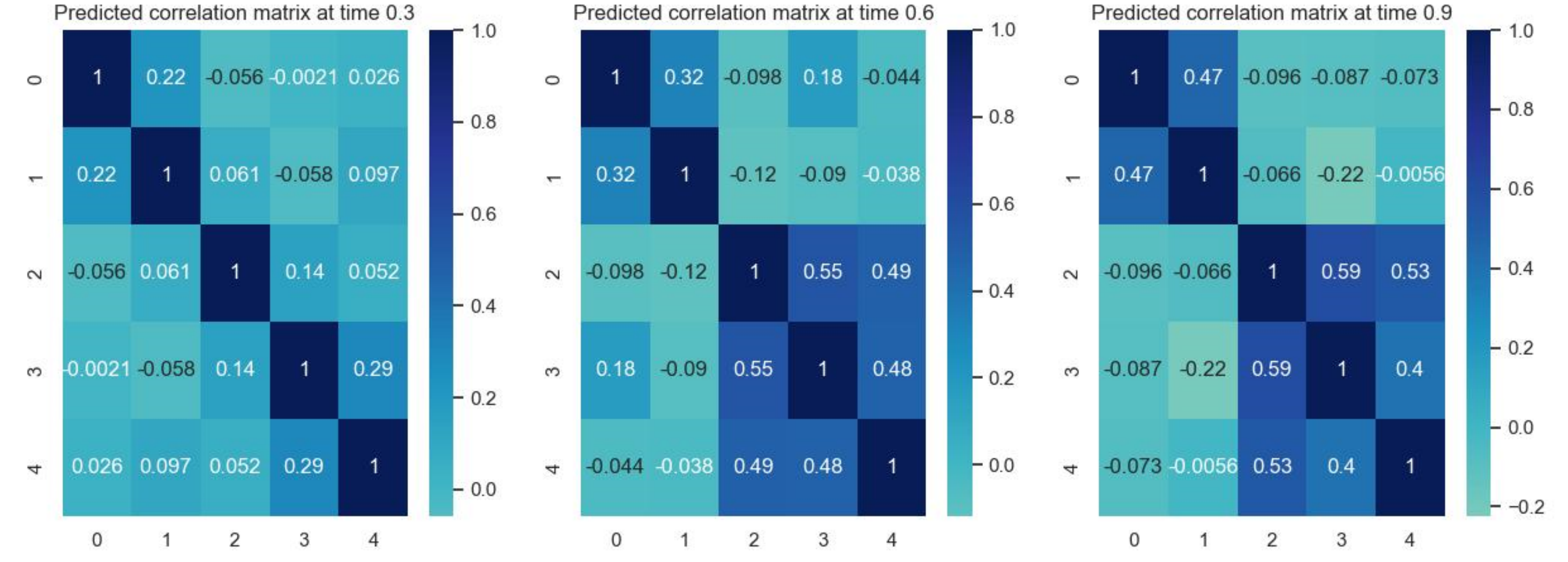}
		\label{fig:corr Syn-MTS}
	}%
	\subfigure[]{
		\includegraphics[width=0.99\columnwidth]{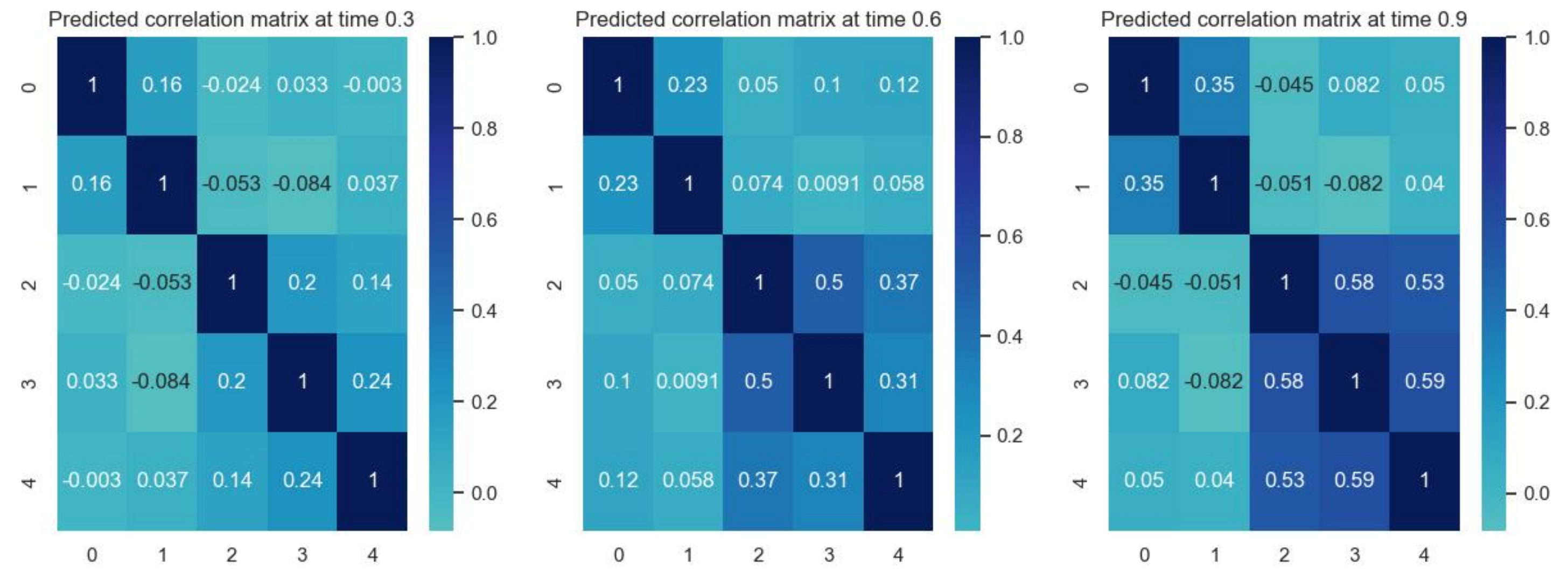}
		\label{fig:corr Asyn-MTS}
	}%
	\caption{The sample correlation matrix recovered from the learned Syn-MTS model (a) and the learned Asyn-MTS model (b) at 0.3, 0.6, and 0.9.}
\end{figure*}

{\textbf{(a) Data Generating Process}}. 
We set the Data Generation Process (DGP) $\mathrm{X_t}=[X^1_t,\cdots,X^5_t]$ as the following continuous-time correlated Geometric Brownian Motion (GBM) \cite{glasserman2004monte} 
\begin{align}
	\label{DGP}
	&d\mathrm{X}_t=\mathrm{\mu} \odot \mathrm{X}_t dt+ \text{diag}(\mathrm{\sigma} \odot \mathrm{X}_t) d\mathrm{W}_t, \, t \geq 0,
\end{align}
where $\mathrm{\mu}=[\mu^1,\cdots,\mu^5]^\top$ and $\sigma=[\sigma^1,\cdots,\sigma^5]^\top$ are the drift term and volatility term of variables, and $\mathrm{W}_t=[W^1_t,\cdots,W^5_t]^\top$ is the multivariate Brownian Motion process such that $ \langle dW_t^i, dW^j_t\rangle =\rho_{ij}(t)dt$ where
\begin{equation*}
	[\rho_{ij}(t)]_{1\leq i,j\leq 5}:=
		\begin{aligned}
			\sin\left(\frac{\pi}{2}t\right)\begin{bmatrix}
				1 & \rho_1 & 0 & 0 & 0 \\
				\rho_1 & 1 & 0 & 0 & 0 \\
				0 & 0 & 1 & \rho_2 & \rho_2 \\
				0 & 0 & \rho_2 & 1 & \rho_2 \\
				0 & 0 & \rho_2 & \rho_2  & 1\\
			\end{bmatrix}.
		\end{aligned}
\end{equation*}
We devise such a correlated GBM as the DGP for several reasons. Firstly, the GBM is a fundamental model in physics and asset prices in financial markets \cite{black1973pricing}. Secondly, the correlated GBM admits log-normal distribution for each variable, distinguishing it from the normal distribution. Thirdly, the correlation among the five variables is not set to static but time-varying, gradually increasing as time progresses from $0$ to $1$ due to the sinusoidal function $\sin(\frac{\pi}{2}t)$. This guarantees that the joint distribution of the five variables is constantly changing. Fig. \ref{fig:instances} showcases three representative sample paths from the dataset. Additionally, Fig. \ref{fig:correlation} illustrates the sample correlation matrix at observed time points 0.3, 0.6, and 0.9, validating the dynamic and gradual increase in correlation.

{\textbf{(b) Simulation Settings}}. We first simulate 1,000 sample paths using \eqref{DGP} with uniform time intervals. The Brownian parameters, i.e., the drift terms and the diffusion terms, of the 1,000 sample paths are set as follows: $\mu^1=\mu^2 \sim \text{Unif}(-0.2, -0.05), \mu^3=\mu^4=\mu^5 \sim \text{Unif}(0.05, 0.2), \sigma^1=\sigma^2 \sim \text{Unif}(0.15, 0.3), \sigma^3=\sigma^4=\sigma^5 \sim \text{Unif}(0.15, 0.3)$, where $\text{Unif}(\cdot,\cdot)$ stands for a uniform distribution.

We create the Syn-MTS and Asyn-MTS datasets through random sampling from the simulated dataset with complete observation. In the Syn-MTS dataset, we randomly choose half of the time points and sample the corresponding observations $\mathrm{X}_{t}$ at the chosen time points. To generate the Asyn-MTS dataset, we randomly eliminate half of the observations for each variable. The elimination is independent between any two variables. Therefore, some variables have observations at an observed time point, and some variables do not.

\begin{table*}[tbh]
	\centering
	\caption{\textbf{Geometric Brownian Motions:} Baseline models in vanilla form (2,3,4 columns) vs. using RFN specification (5,6,7 columns). \\ Numbers underneath $\mathrm{CRPS},\mathrm{CRPS}_{\text {sum }},\mathrm{CS}$ are displayed in the form of mean $\pm$ std, averaged over five experiments.	 \label{Sim}}
	\begin{tabular}{@{}cccccccc@{}}
		\toprule
		model   & $\mathrm{CRPS}$  & $\mathrm{CRPS}_{\text {sum }}$  & $\mathrm{CS}$ &  model        & $\mathrm{CRPS}$    & $\mathrm{CRPS}_{\text {sum }}$ & $\mathrm{CS}$\\ \midrule
		\multicolumn{8}{c}{\textit{Syn-MTS}}                                                                                            \\
		\midrule      
		GRUODE  & 0.2051 $\pm$ 0.0182 & 0.6785 $\pm$ 0.0120 &  0.0010 $\pm$ 0.0003 & RFN-GRUODE  & 0.1870 $\pm$ 0.0020 & 0.5869 $\pm$  0.0088 &  0.0007 $\pm$ 0.0002   \\
		ODELSTM & 0.2049 $\pm$ 0.0190 & 0.6697 $\pm$ 0.0120 &  0.0003 $\pm$ 0.0001 & RFN-ODELSTM & 0.1826 $\pm$ 0.0007 & 0.5795 $\pm$  0.0025 &  0.0002 $\pm$ 0.0001   \\
		ODERNN  & 0.2070 $\pm$ 0.0045 & 0.6777 $\pm$ 0.0157 &  0.0002 $\pm$ 0.0001 & RFN-ODERNN  & 0.1808 $\pm$ 0.0017 & 0.5744 $\pm$  0.0078 &  0.0001 $\pm$ 0.0001   \\
		GRU-D 	& 0.1921 $\pm$ 0.0130 & 0.6372 $\pm$ 0.0077 &  0.0015 $\pm$ 0.0006 & RFN-GRU-D   & 0.1838 $\pm$ 0.0007 & 0.5815 $\pm$  0.0071 &  0.0015 $\pm$ 0.0003   \\
		\midrule
		\multicolumn{8}{c}{\textit{Asyn-MTS}}                                                                                                  \\
		\midrule 
		GRUODE  & 0.2308 $\pm$ 0.0046 & 0.4565 $\pm$ 0.0125 &  0.0006 $\pm$ 0.0001 & RFN-GRUODE  & 0.2045 $\pm$ 0.0042 & 0.3827 $\pm$  0.0117 & 0.0003 $\pm$ 0.0002    \\
		ODELSTM & 0.2261 $\pm$ 0.0061 & 0.4677 $\pm$ 0.0326 &  0.0002 $\pm$ 0.0001 & RFN-ODELSTM & 0.1962 $\pm$ 0.0015 & 0.3650 $\pm$  0.0033 & 0.0001 $\pm$ 0.0001   \\
		ODERNN  & 0.2160 $\pm$ 0.0051 & 0.4283 $\pm$ 0.0144 &  0.0004 $\pm$ 0.0001 & RFN-ODERNN  & 0.1918 $\pm$ 0.0048 & 0.3597 $\pm$  0.0130 & 0.0001 $\pm$ 0.0001     \\
		GRU-D   & 0.2220 $\pm$ 0.0057 & 0.4420 $\pm$ 0.0162 &  0.0010 $\pm$ 0.0002 & RFN-GRU-D   & 0.1858 $\pm$ 0.0019 & 0.3405 $\pm$  0.0041 & 0.0007 $\pm$ 0.0001    \\ \bottomrule
	\end{tabular}
\end{table*}

\begin{figure}[tbh]
	\centering
	\subfigure[Syn-MTS]{
		\includegraphics[width=0.43\linewidth]{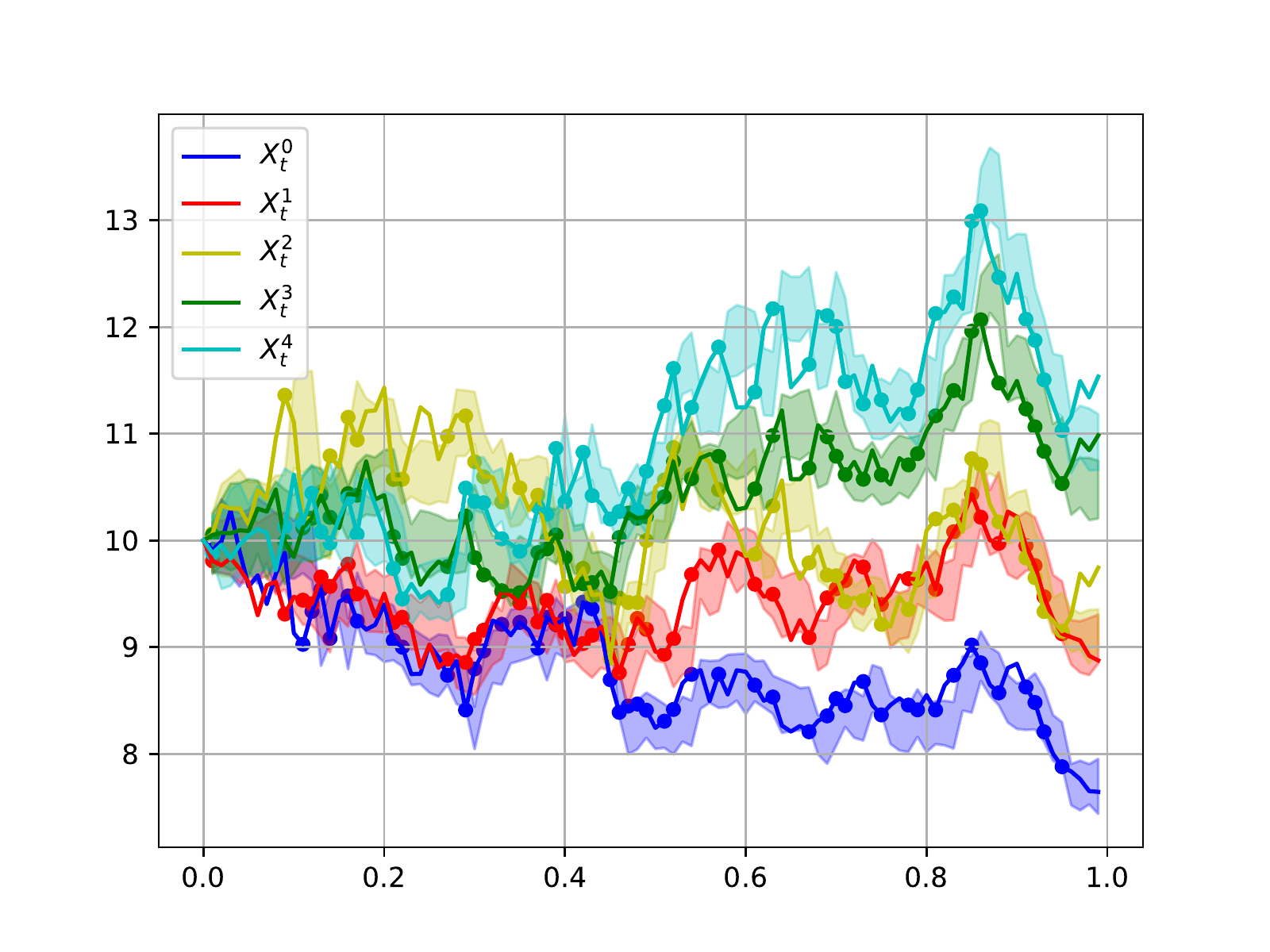}
		\label{fig:Syn-MTS est interval}
	}
	\subfigure[Asyn-MTS]{
		\includegraphics[width=0.43\linewidth]{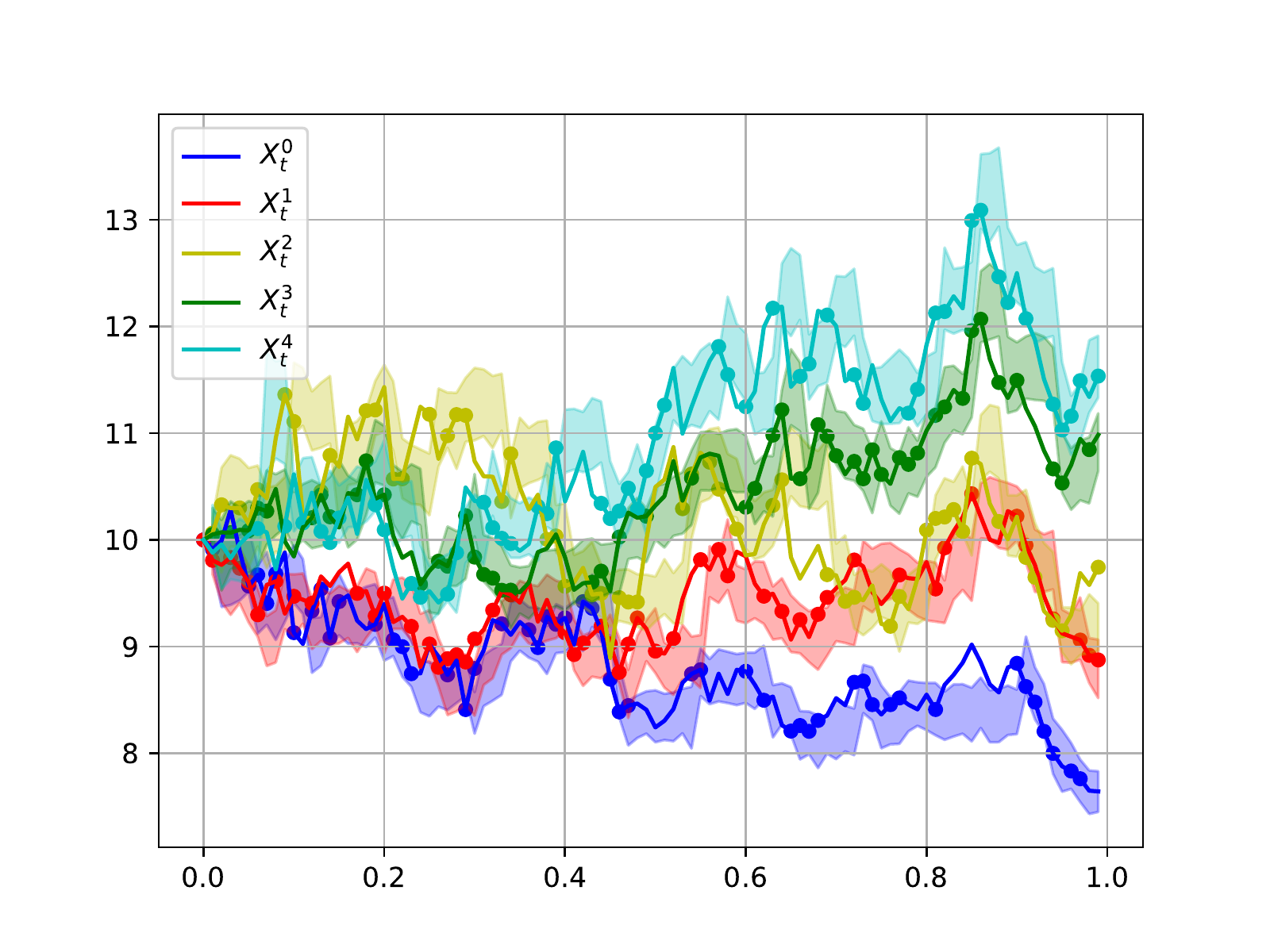}
		\label{fig:Asyn-MTS est interval}
	}
	\caption{Predicted quantile intervals (shaded area) from 20\% to 80\% of Syn-MTS data (a) and Asyn-MTS data (b) using the RFN-GRUODE specification. Solid dots represent observations. Broader quantile ranges are available in Appendix \ref{Appendix:More results}.}  
\end{figure}

\begin{table*}[tbh]
	\centering
	\caption{\textbf{Physical Activities of Human Body (MuJoCo).} (Same report formats as in Table 1) 	\label{Mujoco}}
	\begin{tabular}{@{}cccccccc@{}}
		\toprule
		model   & $\mathrm{CRPS}$  & $\mathrm{CRPS}_{\text {sum }}$  & $\mathrm{CS}$ & model        & $\mathrm{CRPS}$    & $\mathrm{CRPS}_{\text {sum }}$ & $\mathrm{CS}$ \\ \midrule
		\multicolumn{8}{c}{\textit{Syn-MTS}}                                                                                                 \\ \midrule
		GRUODE  & 0.2198 $\pm$ 0.0035 & 1.0763 $\pm$ 0.0352 & 0.0117 $\pm$ 0.0022 & RFN-GRUODE  & 0.1858 $\pm$ 0.0037 & 0.6790 $\pm$ 0.0300 & 0.0097 $\pm$ 0.0033    \\
		ODELSTM & 0.2256 $\pm$ 0.0015 & 1.1402 $\pm$ 0.0298 & 0.0117 $\pm$ 0.0026 & RFN-ODELSTM & 0.1736 $\pm$ 0.0017 & 0.7103 $\pm$ 0.0504 & 0.0060 $\pm$ 0.0010    \\
		ODERNN  & 0.2156 $\pm$ 0.0042 & 1.0450 $\pm$ 0.0393 & 0.0143 $\pm$ 0.0016 & RFN-ODERNN  & 0.1747 $\pm$ 0.0019 & 0.6467 $\pm$ 0.0340 & 0.0087 $\pm$ 0.0025  \\
		GRU-D   & 0.2224 $\pm$ 0.0032 & 1.1118 $\pm$ 0.0177 & 0.0119 $\pm$ 0.0014 & RFN-GRU-D   & 0.1905 $\pm$ 0.0010 & 0.6748 $\pm$ 0.0093 & 0.0084 $\pm$ 0.0007   \\ \midrule
		\multicolumn{8}{c}{\textit{Asyn-MTS}}                                                                                                  \\ \midrule
		GRUODE  & 0.2695 $\pm$ 0.0054 & 1.0661 $\pm$ 0.0249 & 0.0024 $\pm$ 0.0001 & RFN-GRUODE  & 0.1970 $\pm$ 0.0052 & 0.7306 $\pm$ 0.0185 & 0.0007 $\pm$ 0.0002  \\
		ODELSTM & 0.2728 $\pm$ 0.0057 & 1.0834 $\pm$ 0.0292 & 0.0013 $\pm$ 0.0002 & RFN-ODELSTM & 0.1733 $\pm$ 0.0048 & 0.6318 $\pm$ 0.0155 & 0.0007 $\pm$ 0.0002  \\
		ODERNN  & 0.2696 $\pm$ 0.0036 & 1.0748 $\pm$ 0.0194 & 0.0020 $\pm$ 0.0005 & RFN-ODERNN  & 0.1667 $\pm$ 0.0020 & 0.6277 $\pm$ 0.0086 & 0.0012 $\pm$ 0.0004   \\
		GRU-D   & 0.2261 $\pm$ 0.0035 & 0.9078 $\pm$ 0.0255 & 0.0116 $\pm$ 0.0008 & RFN-GRU-D   & 0.1709 $\pm$ 0.0035 & 0.6409 $\pm$ 0.0151 & 0.0058 $\pm$ 0.0009   \\ \bottomrule
	\end{tabular}
\end{table*}

\begin{table*}[tbh]
	\centering
	\caption{\textbf{Climate Records of Weather (USHCN).} (Same report formats as in Table 1) 	\label{USHCN}}
	\begin{tabular}{@{}cccccccc@{}}
		\toprule
		model   & $\mathrm{CRPS}$   & $\mathrm{CRPS}_{\text {sum }}$  & $\mathrm{CS}$ & model       & $\mathrm{CRPS}$   & $\mathrm{CRPS}_{\text {sum }}$  & $\mathrm{CS}$ \\ \midrule
		
		GRUODE  & 0.3012 $\pm$ 0.0082 & 0.5797 $\pm$ 0.0112 & 0.0269 $\pm$ 0.0043 & RFN-GRUODE  & 0.2463 $\pm$ 0.0039 & 0.5183 $\pm$ 0.0072 & 0.0075 $\pm$ 0.0052    \\
		ODELSTM & 0.3113 $\pm$ 0.0049 & 0.5909 $\pm$ 0.0075 & 0.0256 $\pm$ 0.0029 & RFN-ODELSTM & 0.2629 $\pm$ 0.0043 & 0.5510 $\pm$ 0.0090 & 0.0071 $\pm$ 0.0037    \\
		ODERNN  & 0.3112 $\pm$ 0.0088 & 0.5923 $\pm$ 0.0157 & 0.0285 $\pm$ 0.0011 & RFN-ODERNN  & 0.2626 $\pm$ 0.0073 & 0.5499 $\pm$ 0.0146 & 0.0051 $\pm$ 0.0046   \\
		GRU-D   & 0.3016 $\pm$ 0.0143 & 0.5798 $\pm$ 0.0186 & 0.0243 $\pm$ 0.0038 & RFN-GRU-D   & 0.2415 $\pm$ 0.0016 & 0.5093 $\pm$ 0.0025 & 0.0094 $\pm$ 0.0051    \\
		\bottomrule
	\end{tabular}
\end{table*}

\begin{table*}[tbh]
	\centering
	\caption{\textbf{Transaction Records of Eight Biotechnology Stocks (NASDAQ).} (Same report formats as in Table 1) 	\label{TCH}}
	\begin{tabular}{@{}cccccccc@{}}
		\toprule
		model   & $\mathrm{CRPS}$   & $\mathrm{CRPS}_{\text {sum }}$  & $\mathrm{CS}$ & model       & $\mathrm{CRPS}$   & $\mathrm{CRPS}_{\text {sum }}$  &  $\mathrm{CS}$ \\ \midrule
		GRUODE  & 0.0163 $\pm$ 0.0028 & 0.0578 $\pm$ 0.0219 & 0.0021 $\pm$ 0.0009 & RFN-GRUODE  & 0.0149 $\pm$ 0.0008 & 0.0514 $\pm$ 0.0039  & 0.0007 $\pm$ 0.0006   \\
		ODELSTM & 0.0163 $\pm$ 0.0021 & 0.0529 $\pm$ 0.0085 & 0.0013 $\pm$ 0.0010 & RFN-ODELSTM & 0.0150 $\pm$ 0.0009 & 0.0506 $\pm$ 0.0056  & 0.0004 $\pm$ 0.0002  \\
  	ODERNN  & 0.0166 $\pm$ 0.0009 & 0.0564 $\pm$ 0.0047 & 0.0031 $\pm$ 0.0020 & RFN-ODERNN  & 0.0150 $\pm$ 0.0008 & 0.0521 $\pm$ 0.0036  & 0.0021 $\pm$ 0.0009   \\  
		GRU-D   & 0.0068 $\pm$ 0.0007 & 0.0270 $\pm$ 0.0016 & 0.0102 $\pm$ 0.0042 & RFN-GRU-D   & 0.0064 $\pm$ 0.0003 & 0.0244 $\pm$ 0.0005  & 0.0049 $\pm$ 0.0054   \\
 \bottomrule
	\end{tabular}
\end{table*}

\begin{figure*}[tbh]
	\centering
	\subfigure[MuJoCo Asyn-MTS]{
		\includegraphics[width=0.23\linewidth, trim=20 0 45 40, clip]{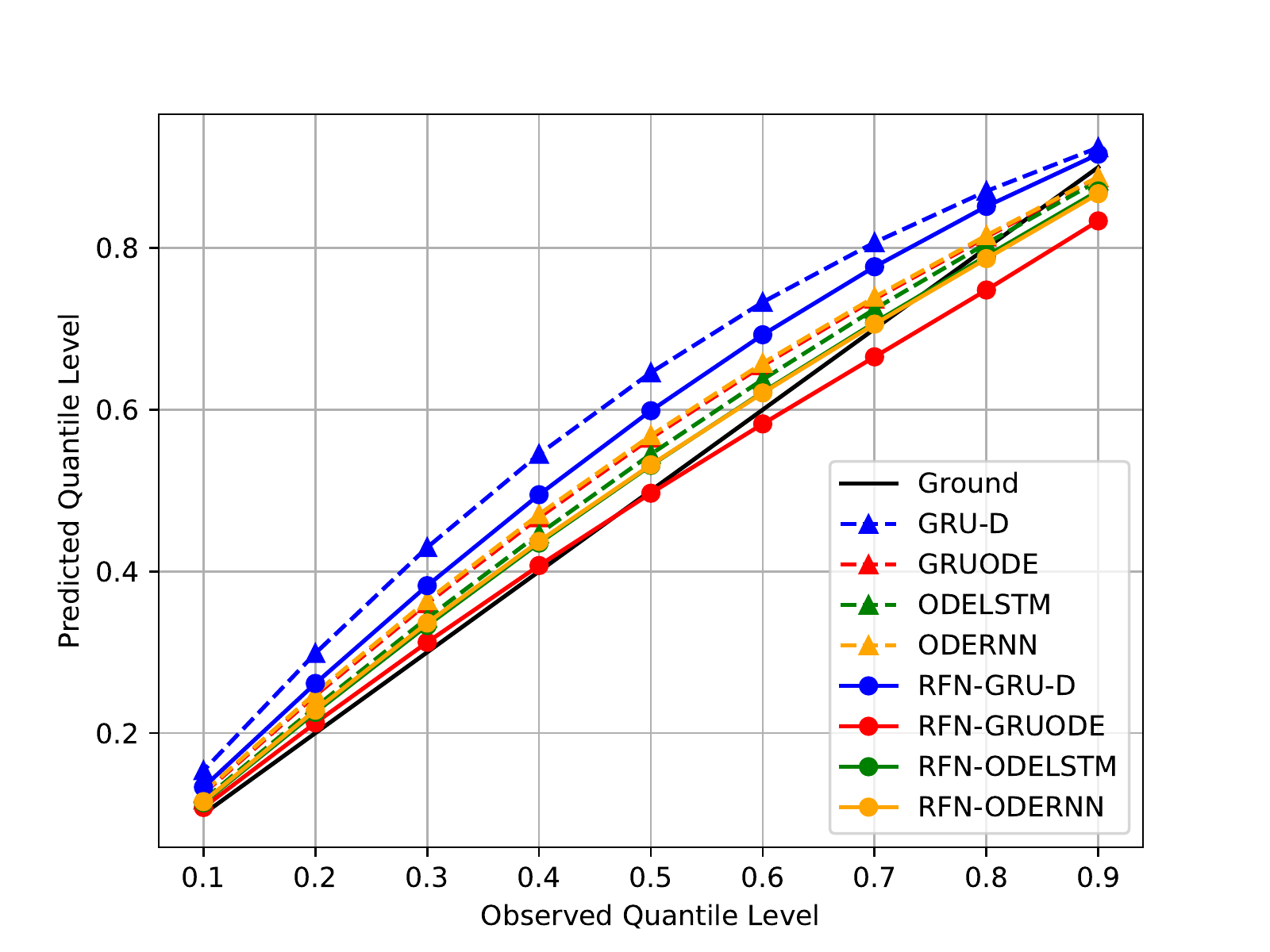}
		\label{fig:Hopper_async_cs}
	}
	\subfigure[MuJoCo Syn-MTS]{
		\includegraphics[width=0.23\linewidth, trim=20 0 45 40, clip]{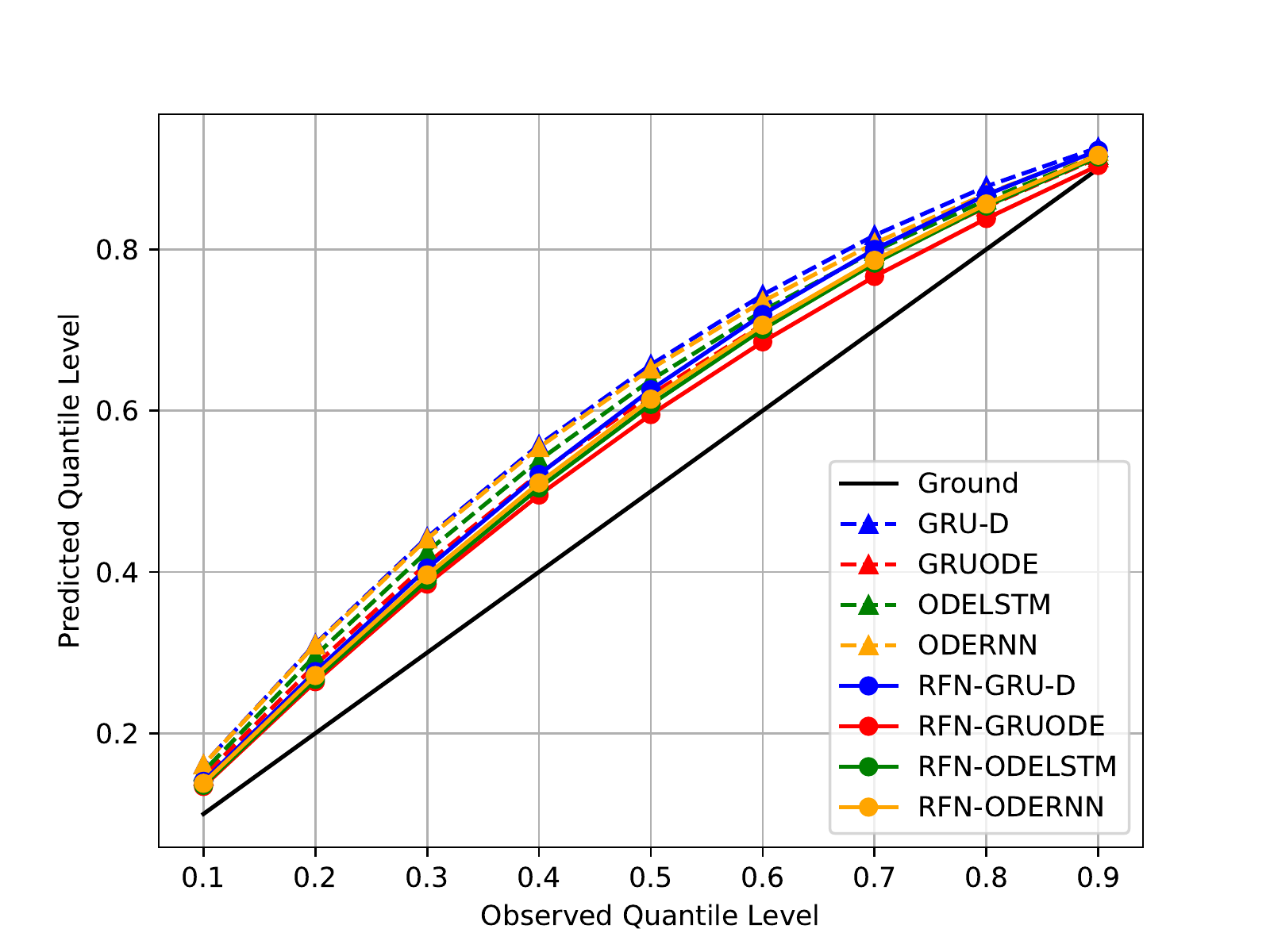}
		\label{fig:Hopper_sync_cs}
	}
    \subfigure[USHCN Asyn-MTS]{
		\includegraphics[width=0.23\linewidth, trim=20 0 45 40, clip]{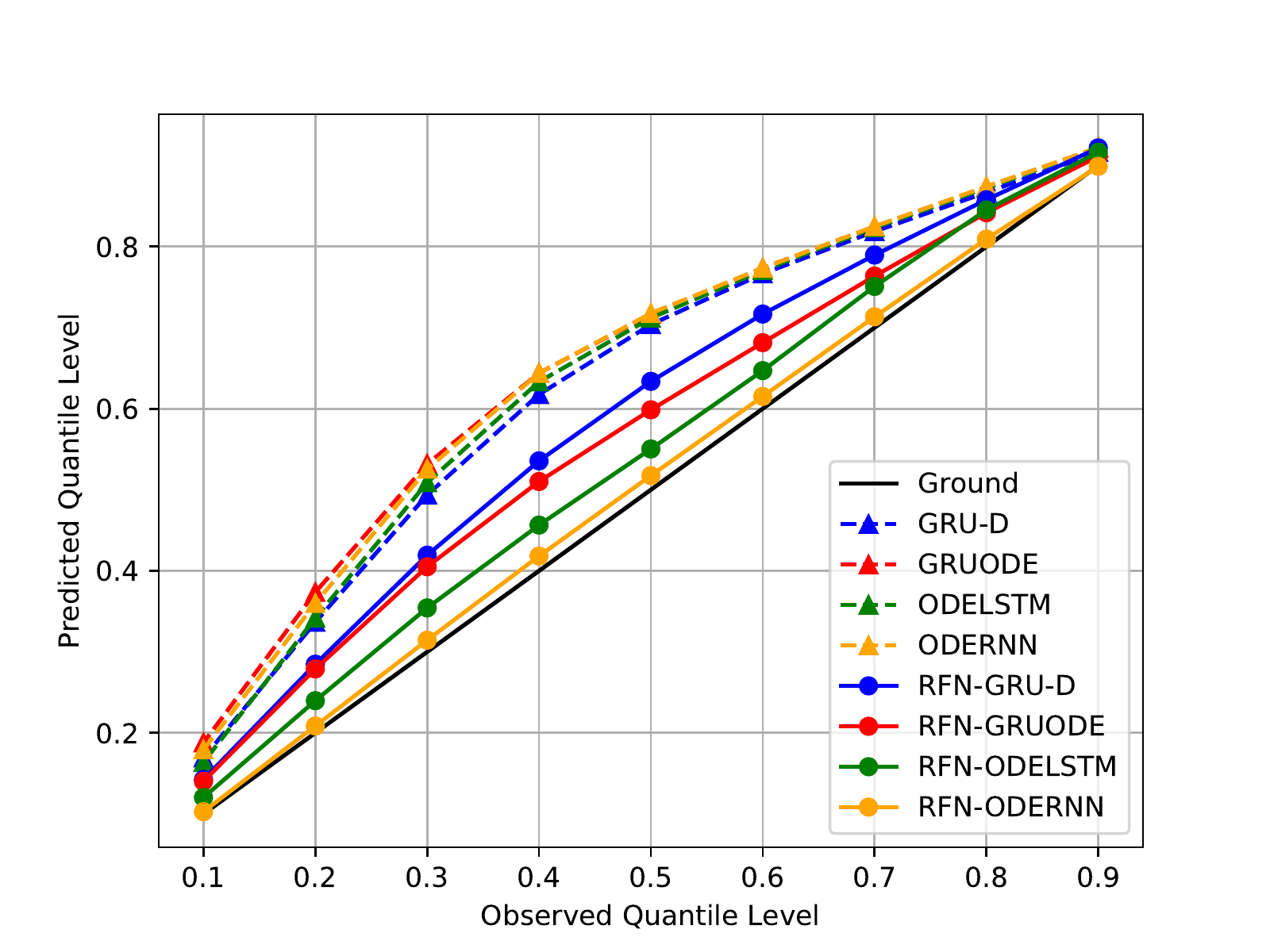}
		\label{fig:USHCN_async_cs}
	}
    \subfigure[NASDAQ Asyn-MTS]{
		\includegraphics[width=0.23\linewidth, trim=20 0 45 40, clip]{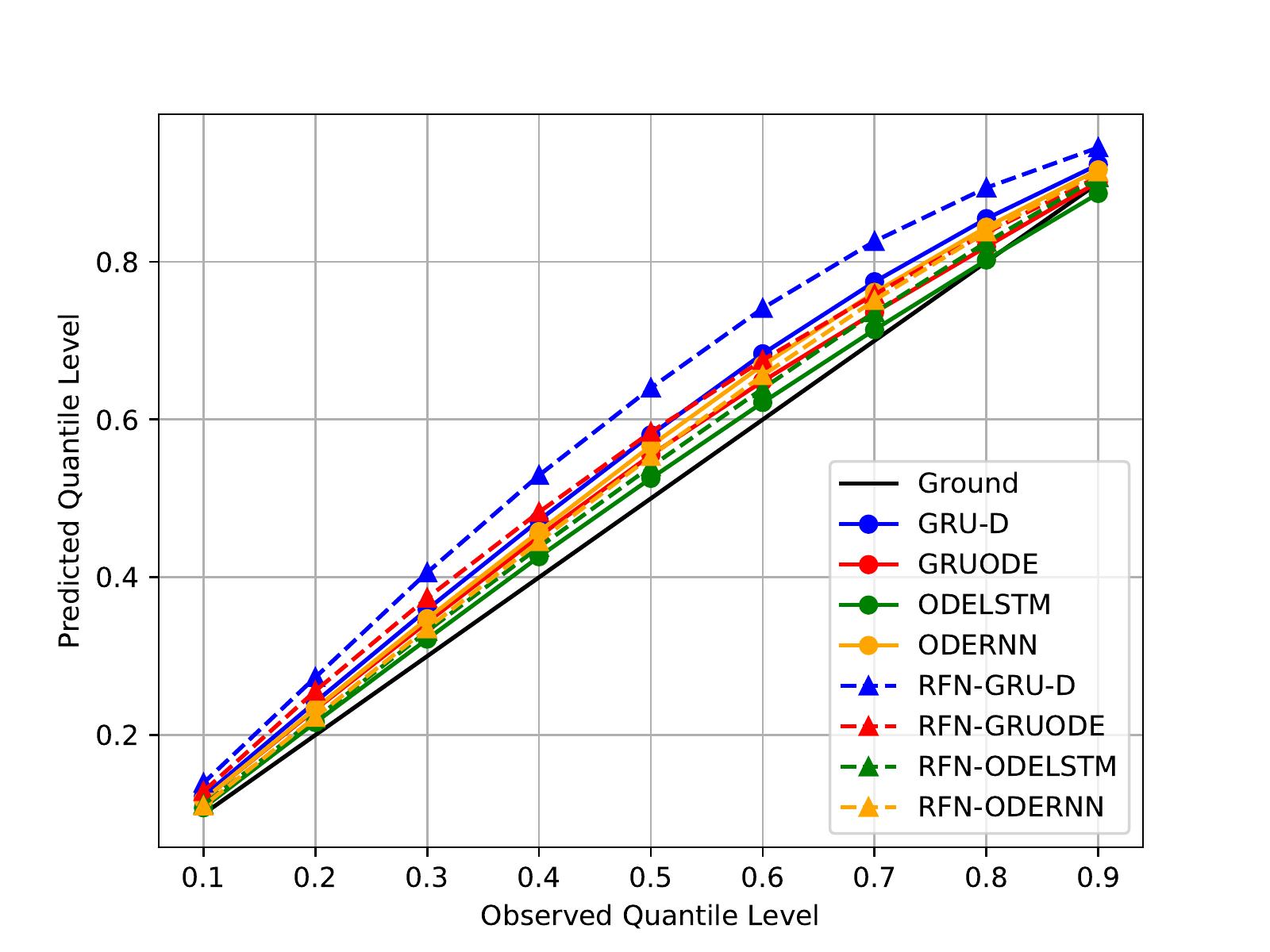}
		\label{fig:NASDAQ_async_cs}
	}
	\caption{The observed quantile level v.s. predicted quantile level of three real-world datasets. The dash lines are the baseline models, and the corresponding solid lines are the RFN counterparts. The theoretical best result is the solid black line, and the closer the model is to it, the better the performance is. \label{fig:cs_plot}}  
\end{figure*}

{\textbf{(c) Data and Model Validation}}. The experiment results in Table \ref{Sim} demonstrate that RFNs outperform the corresponding baselines with smaller $\text{CRPS}$, $\text{CRPS}_{\text{sum}}$, and $\text{CS}$. This indicates the superior ability of the RFNs to capture the joint distribution of multivariate irregular time series at each observed time point and recover its dynamics. Fig. \ref{fig:corr Syn-MTS} and \ref{fig:corr Asyn-MTS} illustrate the correlation matrices sampled from the estimated Syn-MTS and Asyn-MTS models at time points 0.3, 0.6, and 0.9, closely resembling the ground truth in Fig. \ref{fig:correlation}. The correlation coefficients from the recovered matrices gradually transition from weak to strong as time progresses, aligning with the ground truth in Fig. \ref{fig:correlation}. This dynamic capture of the joint distribution validates the effectiveness of the RFN specification. Lastly, we employ the RFNs for interval estimation of correlated GBM processes. Fig. \ref{fig:Syn-MTS est interval} and Fig. \ref{fig:Asyn-MTS est interval} display prediction intervals for Syn-MTS and Asyn-MTS. 

\begin{figure*}[tbh]
	\centering
	\subfigure[t=0.09]{
		\includegraphics[width=.07\linewidth]{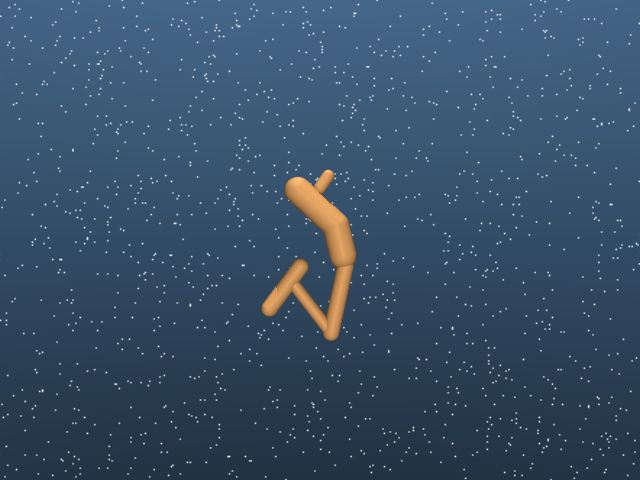}
	}%
	\subfigure[t=0.21]{
		\includegraphics[width=.07\linewidth]{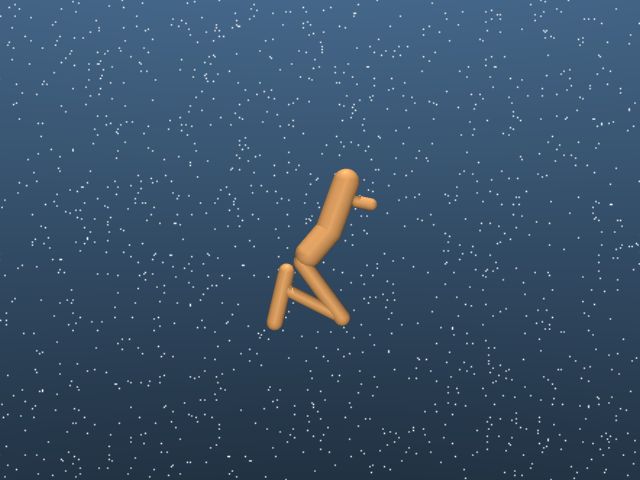}
	}%
	\subfigure[t=0.36]{
		\includegraphics[width=.07\linewidth]{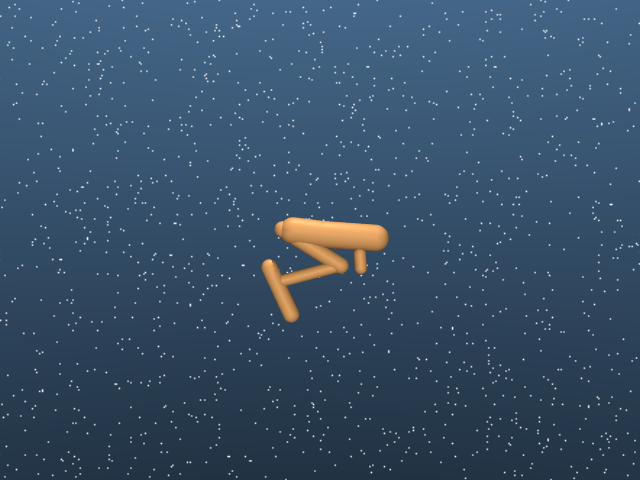}
	}%
	\subfigure[t=0.47]{
		\includegraphics[width=.07\linewidth]{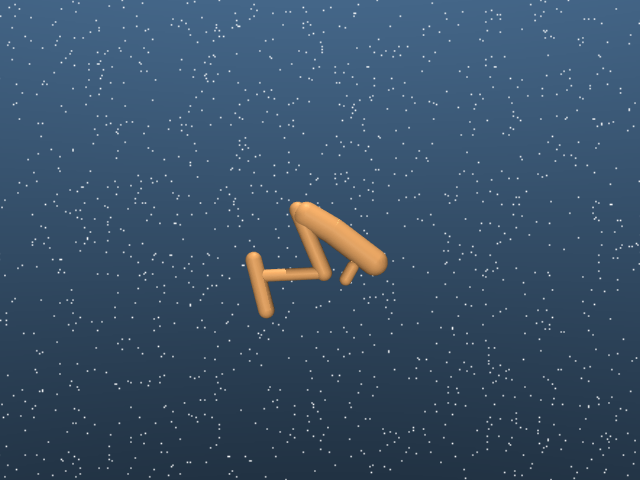}
	}%
	\subfigure[t=0.67]{
		\includegraphics[width=.07\linewidth]{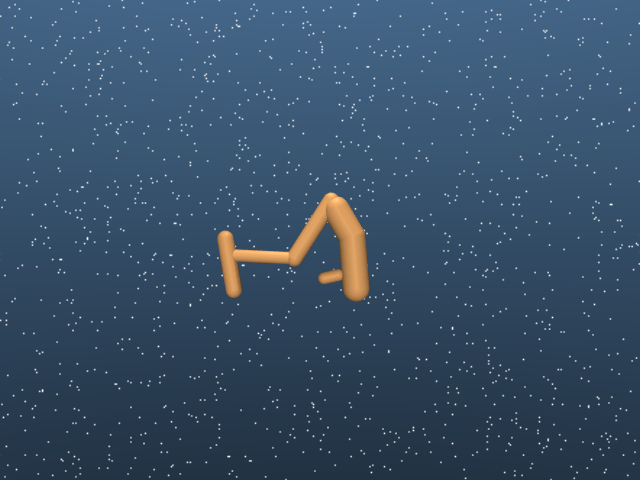}
	}%
	\subfigure[t=0.70]{
		\includegraphics[width=.07\linewidth]{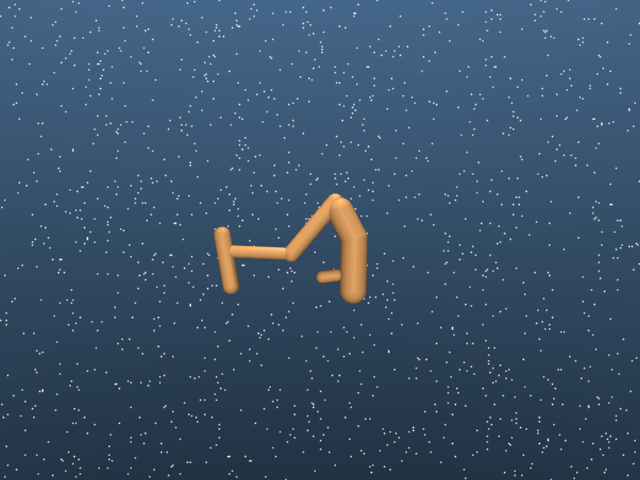}
	}%
	\subfigure[t=0.87]{
		\includegraphics[width=.07\linewidth]{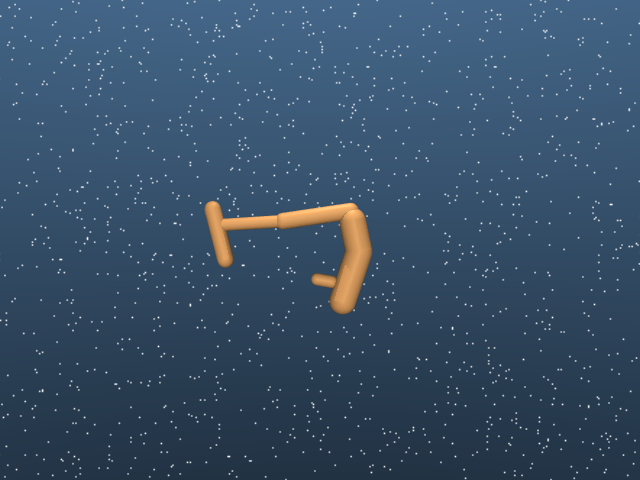}
	}%
	\subfigure[t=0.88]{
		\includegraphics[width=.07\linewidth]{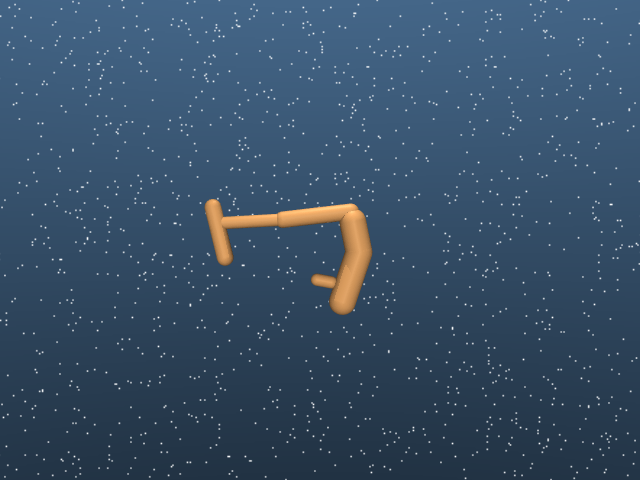}
	}%
	\subfigure[t=1.03]{
		\includegraphics[width=.07\linewidth]{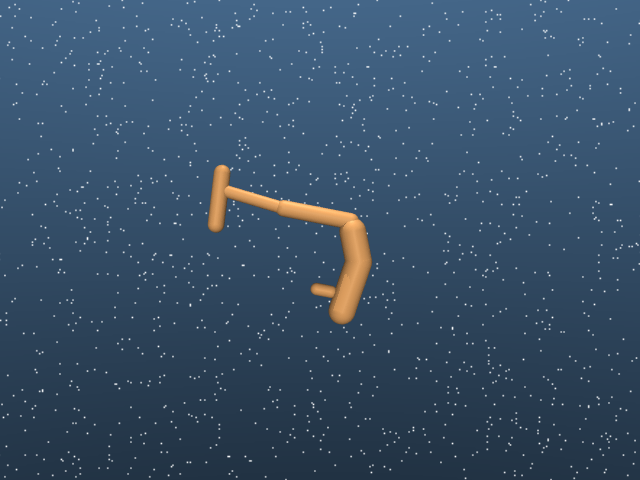}
	}%
	\subfigure[t=1.17]{
		\includegraphics[width=.07\linewidth]{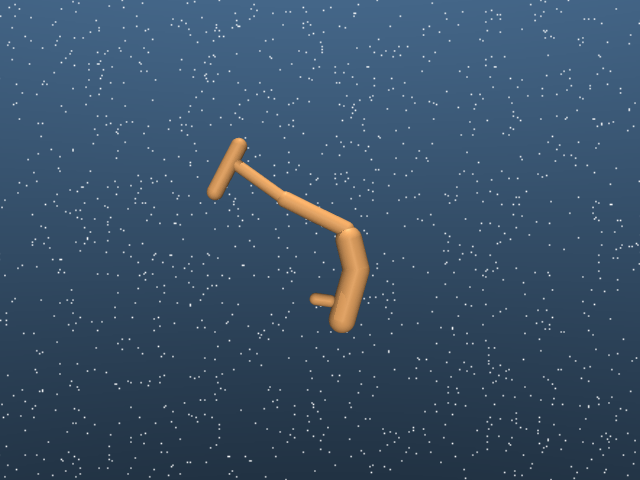}
	}%
	\qquad
	\subfigure[t=1.40]{
		\includegraphics[width=.07\linewidth]{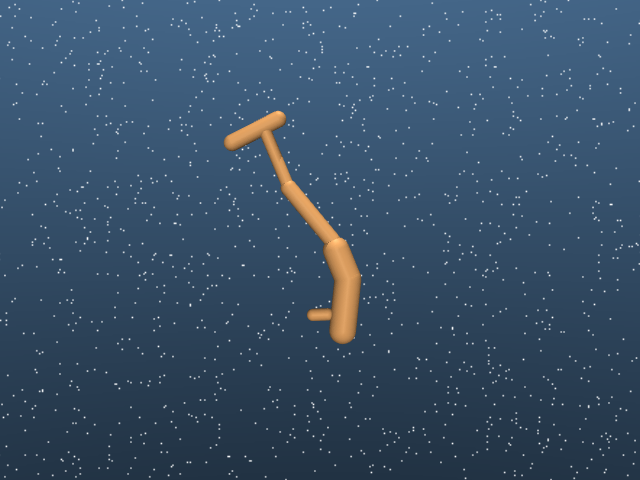}
	}%
	\\
	\subfigure{
		\includegraphics[width=.07\linewidth]{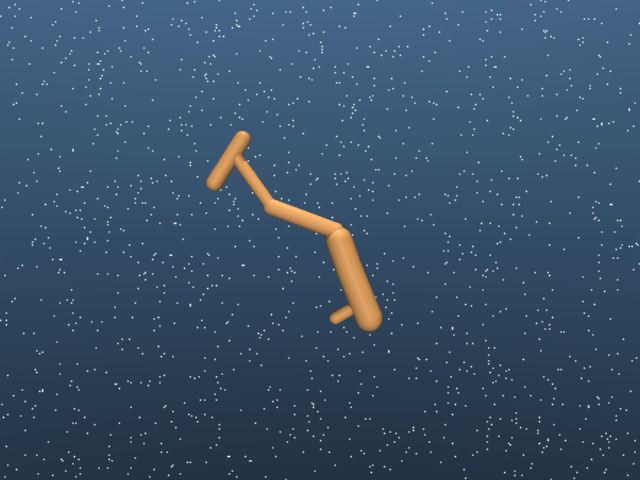}
	}%
	\subfigure{
		\includegraphics[width=.07\linewidth]{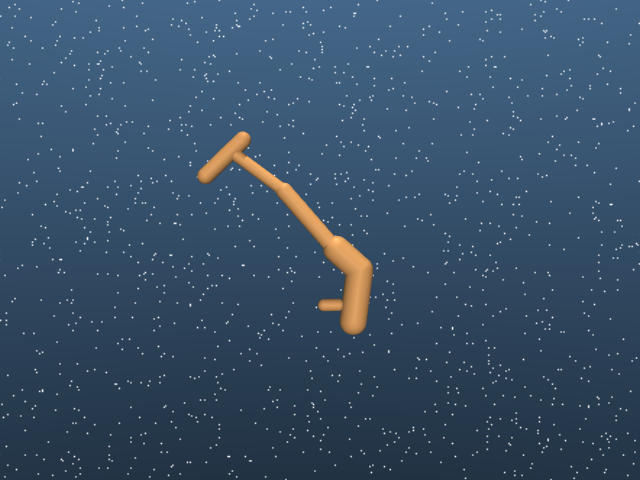}
	}%
	\subfigure{
		\includegraphics[width=.07\linewidth]{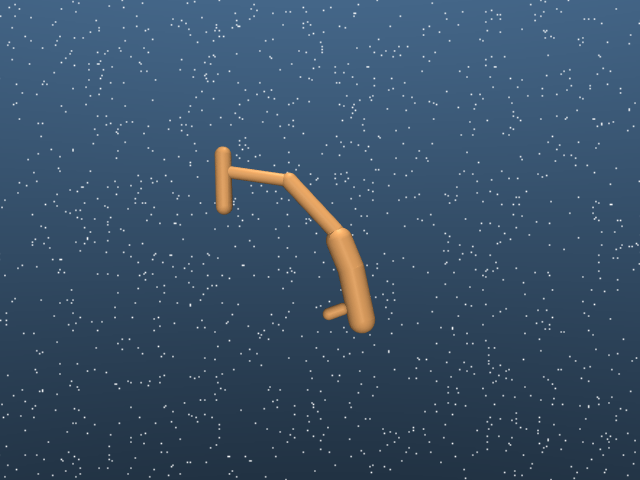}
	}%
	\subfigure{
		\includegraphics[width=.07\linewidth]{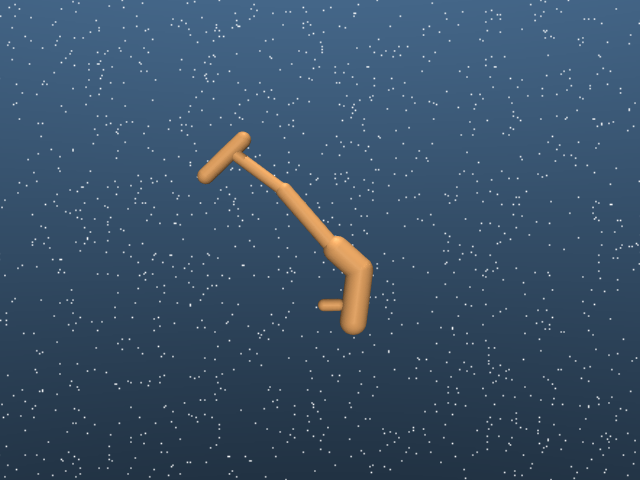}
	}%
	\subfigure{
		\includegraphics[width=.07\linewidth]{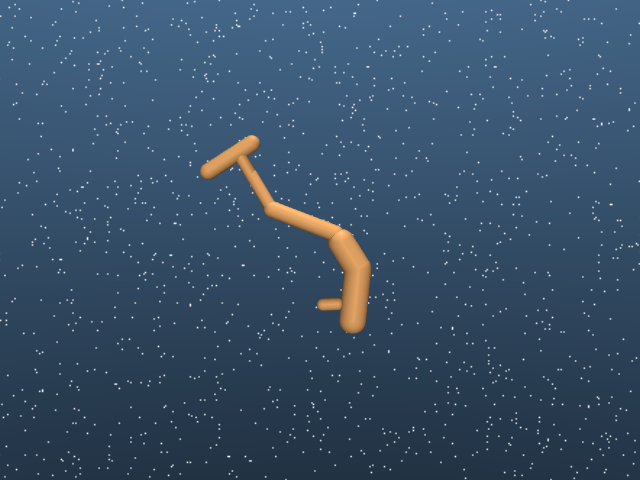}
	}%
	\subfigure{
	\includegraphics[width=.07\linewidth]{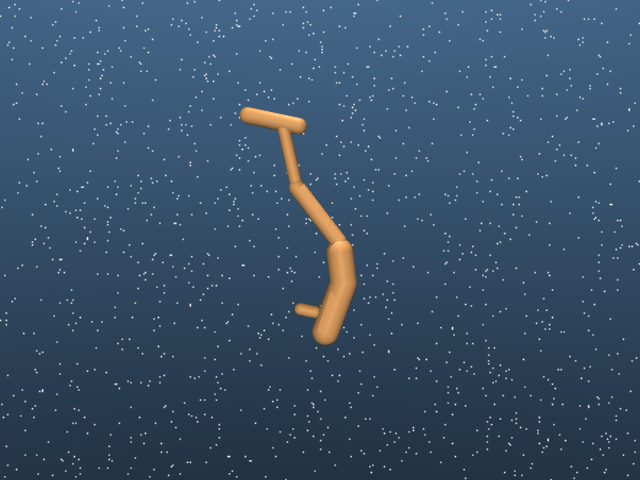}
	}%
	\subfigure{
		\includegraphics[width=.07\linewidth]{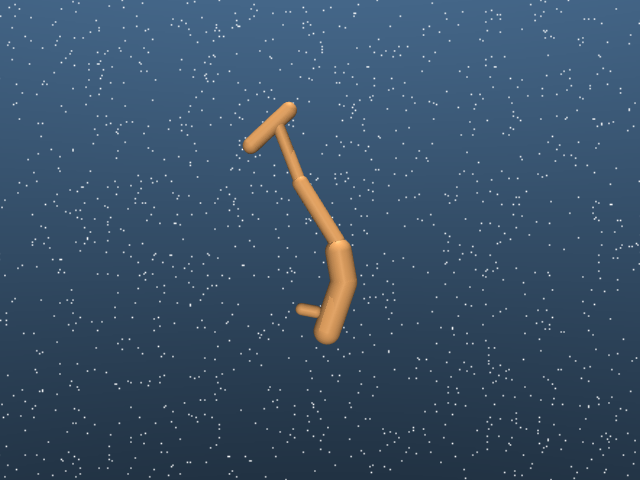}
	}%
	\subfigure{
		\includegraphics[width=.07\linewidth]{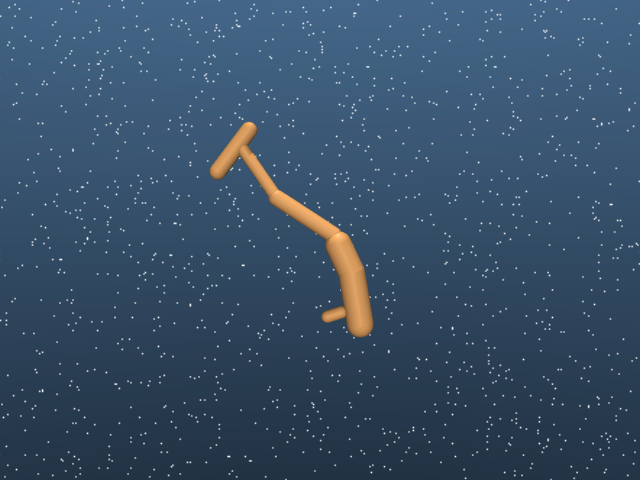}
	}%
	\subfigure{
		\includegraphics[width=.07\linewidth]{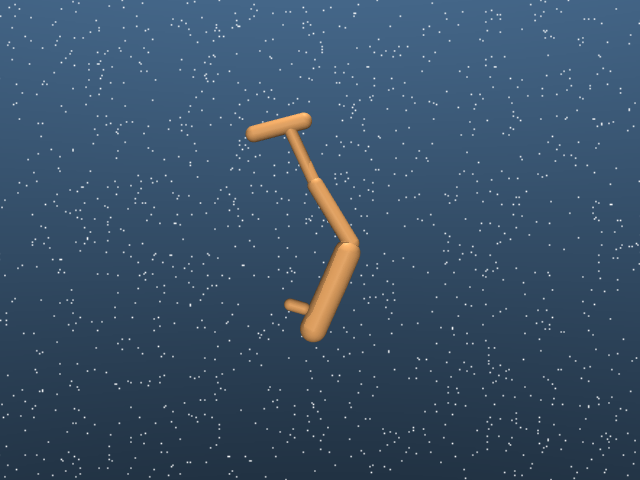}
	}%
	\subfigure{
		\includegraphics[width=.07\linewidth]{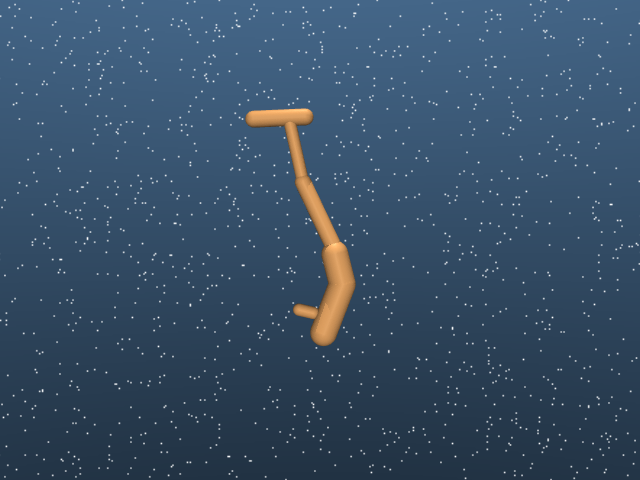}
	}%
	\qquad
	\subfigure{
		\includegraphics[width=.07\linewidth]{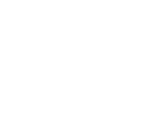}
	}
	\caption{(a)-(j) represent data observed at historical times with unevenly spaced intervals. The objective is to estimate the data distribution at t=1.40. Fig. (k) displays the observed ground truth at t=1.40. We train the Syn-MTS and Asyn-MTS models to forecast the desired distribution at t=1.40. The first and last five figures in the second row display five samples generated from the trained Syn-MTS model (Asyn-MTS model) at t=1.40. \label{fig:MuJoCo}} 
\end{figure*}

\subsection{Dataset of Physical Activities (MuJoCo)}
In applications such as robotics, multivariate time series are commonly used to capture the position, speed, and trajectory of objects, exhibiting a high degree of correlation among variables. However, irregular and asynchronous frequency of measurements frequently arises due to limitations in measurement devices. This experiment tests the performance of using the RFN specification to predict object positions against baseline models in their vanilla forms.

The MuJoCo physics dataset is introduced in \cite{rubanova2019latent} to verify that the ODE-based models can learn an approximation of Newtonian physics. The dataset is created by the ``Hopper'' model from the Deepmind Control Suite \cite{tassa2018deepmind}. It contains 14 variables in total, where the first 7 variables and the last 7 variables control the position and the velocities.

 By following the generation and preprocessing of \cite{rubanova2019latent}, we randomly sample the initial position and velocities of ``Hopper'' such that the hopper rotates in the air and falls on the ground. We generate 5,000 instances with 150 observed time points for each instance, and randomly sample 50\% values to generate the Syn-MTS and Asyn-MTS datasets. 

Table \ref{Mujoco} shows that the RFNs perform robustly better than the baselines in their vanilla form. As a validation, we use the trained model to forecast the future joint distribution at $t=1.40$ based on a given instance's past ten observations (ranging from $t=0.09$ to $t=1.17$). Subsequently, we generate 10 samples from the estimated distribution at $t=1.40$ and compare them with the ground truth at $t=1.40$. Fig. \ref{fig:MuJoCo} displays the 10 samples at $t=1.40$ from both the Asyn-MTS model and the Syn-MTS model. They all resemble the ground truth closely, affirming the RFN's proficiency in predicting the joint distribution.

In Fig. \ref{fig:Hopper_async_cs} and \ref{fig:Hopper_sync_cs}, we present a comparison based on the CS metric to understand the model’s reliability. The $x$-axis represents the observed quantile level, while the $y$-axis shows the predicted quantile level. The solid black line represents the theoretical best performance, where the predicted quantile exactly matches the observed quantile, i.e., the $x$-axis equals the $y$-axis. The closer a model’s performance is to this black line, the better it is calibrated. We plot the results for our model (solid lines) alongside the baseline models (dashed lines).  We observe that almost all the dashed lines fall upper and to the left of the solid lines, indicating that our models outperform the baseline models.

\subsection{Dataset of Climate Records (USHCN)}
Variables in natural phenomena, such as climate data, also display strong serial and cross-correlations \cite{du2019deep}. For instance, temperature values from one season provide valuable information about temperature patterns in the following season. Additionally, precipitation, such as rain and snow, can cause temperature drops and affect humidity levels. Missing observations are also common in climate data due to inclement weather or equipment malfunctions. In this experiment, we evaluate the performance of the RFN specification on the USHCN dataset.

The United States Historical Climatology Network (USHCN) dataset \cite{menne2010long} consists of daily measurements from 1,218 centers across the country. It includes 5 variables: precipitation, snowfall, snow depth, maximum temperature, and minimum temperature. Following the preprocessing approach of \cite{de2019gru}, we select the training data from the first quarter of the last four years (1996-2000). This yields 4,494 instances, with each instance's time period normalized from 0 to 12.5 at intervals of 0.1. The observations for each variable have uneven spacing, making the dataset asynchronous. Consequently, we exclusively employ the Asyn-MTS model to train this dataset. 

The results presented in Table \ref{USHCN} demonstrate the superior performance of the RFN specification over its vanilla counterpart for all baselines when applied to the climate dataset. These findings reinforce that climate data does not conform to a multivariate Gaussian distribution and highlight the importance of capturing the dependence structure among variables for accurate weather forecasting.

\subsection{Dataset of Stock Transactions (NASDAQ)} \label{stock options}
The capability to forecast stock prices holds importance for investors, offering a strategic advantage in the financial markets \cite{zhao2022stock}. A transaction in the stock market is a match between a buy order and a sell order. Since traders send their buy and sell orders to exchange at random times, the time intervals between transactions are inherently random. 

In this section, we utilize the minute-by-minute transaction records of eight biotech stocks from the NASDAQ exchange from July 26, 2016, to April 28, 2017, across 191 trading days \cite{qin2017dual}. Their ticker symbols are `BIIB', `BMRN', `CELG', `REGN', `VRTX', `GILD', `INCY', and `MYL'.  As they are companies from the same sector, their stock prices tend to move together, and these movements are strongly correlated. 

We partition the 191 trading days into 993 instances, each of which is a 75-minute-long multivariate instance.  The dataset contains instances of missing values, making it a multivariate asynchronous time series.  Utilizing the Asyn-MTS model, we simultaneously forecast the joint distribution of the stock prices of these eight stocks. Table \ref{TCH} shows that the RFN specification consistently outperforms its non-RFN counterpart in every one of the models.

\begin{figure*}[htb]
	\centering
	\subfigure[Missing Rate (Syn-MTS)]{
		\includegraphics[width=0.23\linewidth, trim=5 15 15 15, clip]{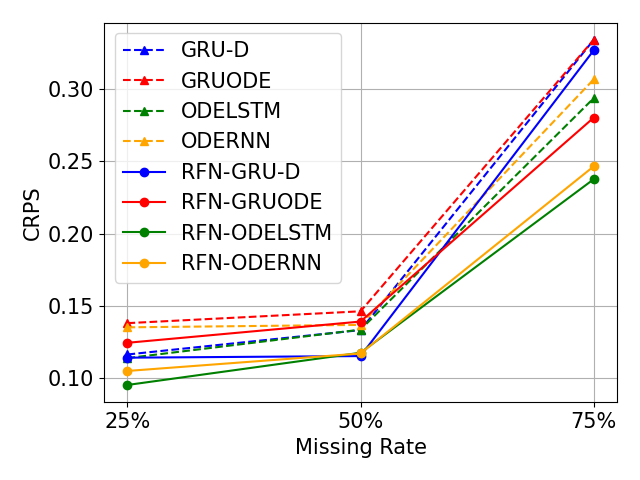}
		\label{fig:missing_rate_sync}
	}
	\subfigure[Missing Rate (Asyn-MTS)]{
		\includegraphics[width=0.235\linewidth, trim=5 15 15 15, clip]{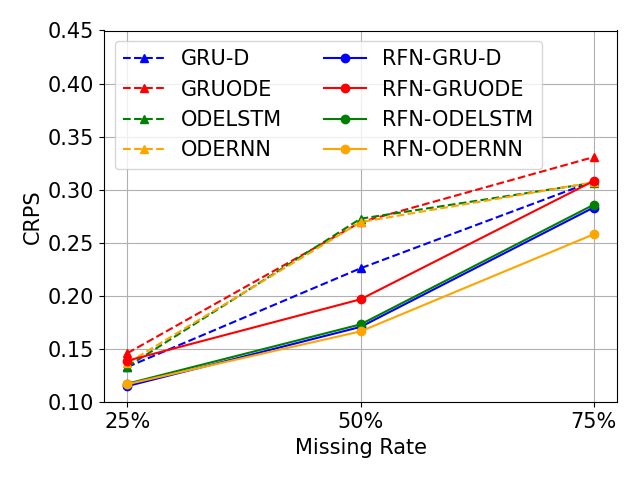}
		\label{fig:missing_rate_async}
	}	\subfigure[Hidden Size of Conditional CNF]{
		\includegraphics[width=0.23\linewidth, trim=5 15 15 15, clip]{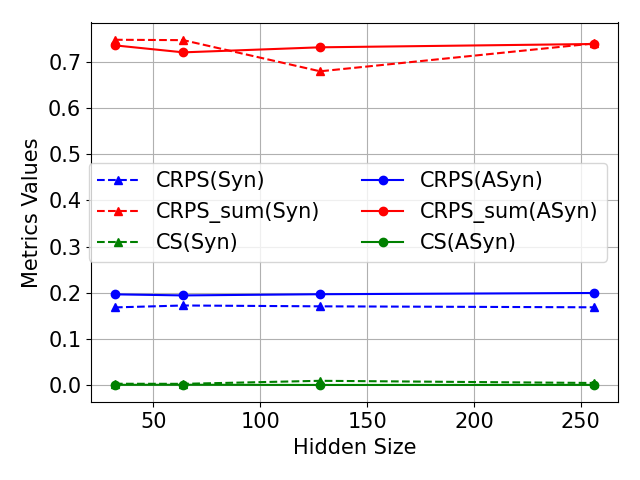}
		\label{fig:hidden}
	}	\subfigure[Memory Size of Sequential Model]{
		\includegraphics[width=0.23\linewidth, trim=5 15 20 15, clip]{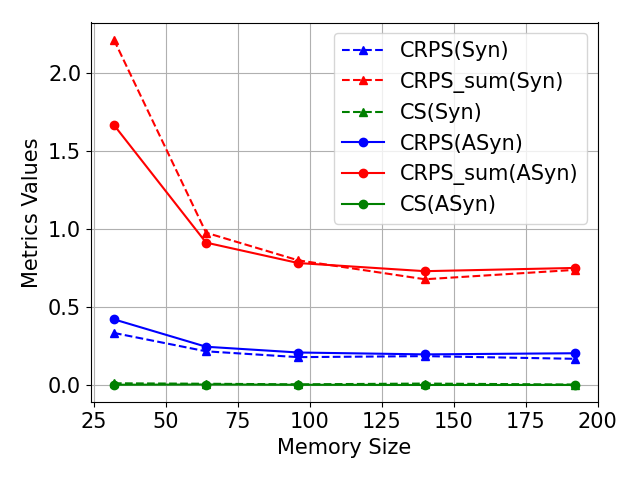}
		\label{fig:memory}
	}
	\caption{The sensitivity analysis across the missing rate,  hidden size of conditional CNF (RFN-GRUODE), the memory size of the sequential model (RFN-GRUODE), on Physical Activities (MuJoCo) dataset.  \label{fig:sensitivity}}  
\end{figure*}

\subsection{Sensitivity Analysis}
In this section, we present additional experiments to analyze our model's performance under various scenarios. In the main paper, we randomly sampled 50\% of the values to generate the Syn-MTS and Asyn-MTS datasets. To further test the robustness of our model, we vary the missing rate of the MuJoCo dataset from 25\% to 75\% to create new Syn-MTS and Asyn-MTS datasets. 

The CRPS performance is illustrated in Fig. \ref{fig:missing_rate_sync} and \ref{fig:missing_rate_async}, where dashed lines represent the baseline and solid lines represent the RFN specifications. The results indicate that RFN outperforms all baseline models across all missing rates. In addition, we observe that, for the Syn-MTS dataset, performance decreases slightly as the missing rate increases from 25\% to 50\%, but declines significantly when the missing rate rises from 50\% to 75\%. This suggests substantial information loss when more than half of the time points are unobserved. In contrast, the Asyn-MTS dataset shows a steady decrease in performance as the missing rate increases, possibly because missing information in one variable can be compensated by observations of other variables.

Furthermore, we explore the impact of varying the hidden size of the conditional CNF from 32 to 256, with performance results shown in Fig. \ref{fig:hidden}.  The performance remains relatively stable across different hidden sizes, indicating that the hidden size of the flow model has minimal influence on the overall performance, demonstrating the robustness of the proposed model. Similarly, we analyze the effect of various memory sizes of the hidden state in the sequential models, as shown in Fig. \ref{fig:memory}.  We observe that increasing the hidden state size results in performance improvements due to the model's enhanced ability to capture long-term dependencies. However, as the hidden size increases further, the rate of improvement diminishes.

\section{Discussions \& Conclusion}

In this paper, we propose an end-to-end learning framework, termed RFN, to address the challenges posed by multivariate irregular time series. The RFN is structured into two core components: a marginal learning layer and a joint learning layer. The marginal learning layer processes the multivariate time series—whether synchronous or asynchronous—by leveraging state-of-the-art sequential models to capture the temporal dynamics at each observed time point. The joint learning layer then models the joint distribution of the variables at each time step using the proposed conditional CNF model.

One of the key innovations of our framework is its ability to overcome the restrictive Gaussian assumption commonly made in time series modeling. By incorporating the dynamic conditional CNF model, our approach facilitates non-parametric learning of the joint distribution at arbitrary continuous-time points, thereby accommodating both temporal and cross-sectional dependencies in complex, irregular time series data.

We extensively evaluate the performance of the RFN framework across several real-world datasets from diverse domains. Our experimental results demonstrate that RFNs consistently outperform state-of-the-art models that assume Gaussian distributions for modeling dependencies in irregularly sampled time series. These results underscore the robustness and effectiveness of the RFN framework, making it a powerful tool for handling the challenges of irregular time series data in various application domains.

However, we acknowledge several challenges in applying RFN to certain types of datasets. Fisrt, if cross-sectional dependencies are weak, the model’s advantage in learning inter-variable relationships diminishes. Second. when applied to datasets with a large proportion of missing values (e.g., greater than 85\%), the model’s ability to learn reliable dependencies is significantly hindered since extremely sparse data may disrupt temporal and cross-sectional relationships.

Our RFN framework also has some limitations. First, the computational cost of training a CNF-based model is higher than that of simpler Gaussian-based alternatives due to the need for solving continuous-time flow dynamics, see Table \ref{efficiency}. Second, the increased flexibility of normalizing flows introduces additional complexity in optimizing the transformation dynamics, which can lead to unstable gradients and longer training times. To mitigate this issue, we incorporate regularization techniques, such as those proposed in \citep{finlay2020train}, to accelerate convergence and improve stability.
        
\section*{Acknowledgment}
Qi WU acknowledges the support from The CityU-JD Digits Joint Laboratory in Financial Technology and  Engineering, The Hong Kong Research Grants Council [General Research Fund 11219420/9043008 ], and The CityU APRC Grant 9610643. The work described in this paper was partially supported by the InnoHK initiative, the  Government of the HKSAR, and the Laboratory for AI-Powered Financial Technologies.


%

\ifCLASSOPTIONcaptionsoff
  \newpage
\fi



\bibliographystyle{IEEEtran}
\bibliography{IEEEexample}
%

%

\begin{IEEEbiography}[{\includegraphics[width=1in,height=1.25in,clip,keepaspectratio]{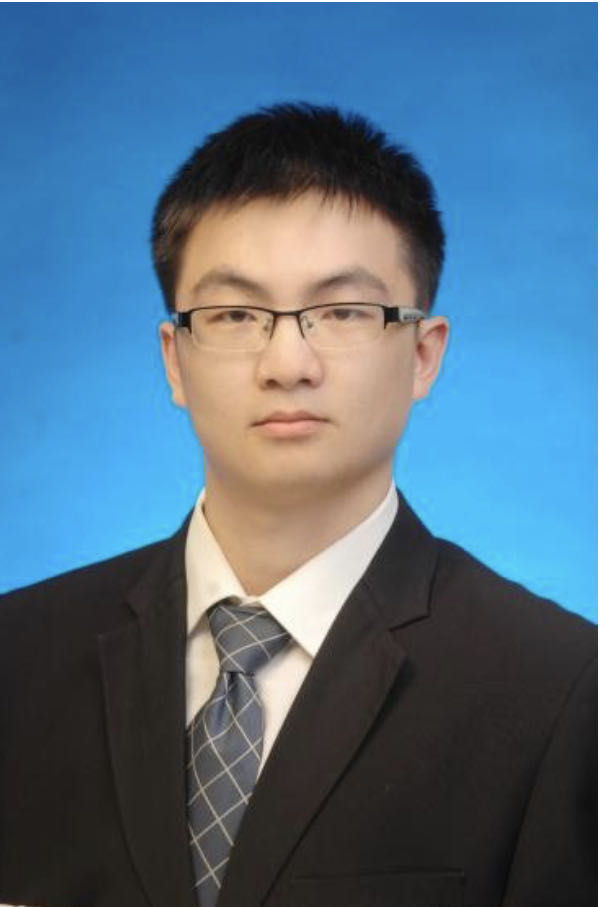}}]{Yijun Li}
received the Ph.D. degree in data science from the City University of Hong Kong, Hong Kong, in 2024. He is currently a postdoctoral researcher at the Laboratory for AI-Powered Financial Technologies Limited, Hong Kong. His research interests include time series forecasting, causal inference, and financial technology. 
\end{IEEEbiography}

\begin{IEEEbiography}[{\includegraphics[width=1in,height=1.25in,clip,keepaspectratio]{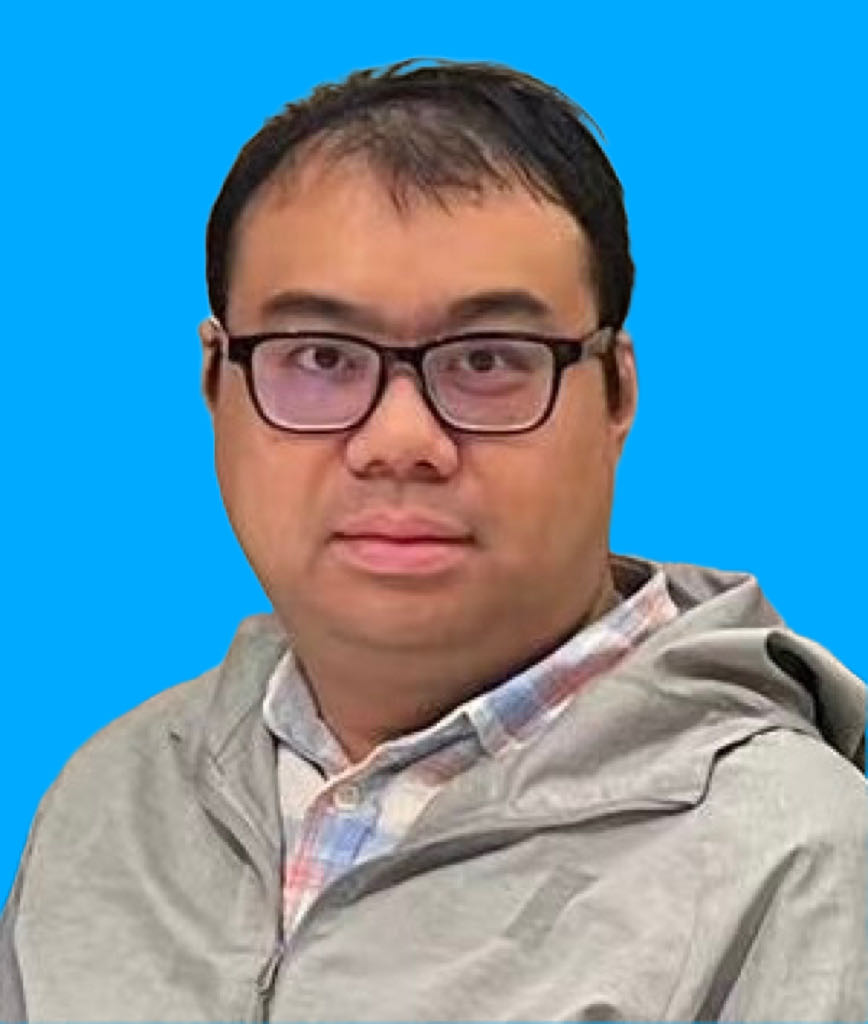}}]{Cheuk Hang Leung}
received the BSc. degree in mathematics from the University of Hong Kong,  in 2009, the MSc degree from the Hong Kong University of Science and Technology, in 2012, and the PhD degree from the Chinese University of Hong Kong, in 2018. He is now working at the City University of Hong Kong JD Digits Joint Laboratory in Financial Technology and Engineering. 
\end{IEEEbiography}

\begin{IEEEbiography}[{\includegraphics[width=1in,height=1.25in,clip,keepaspectratio]{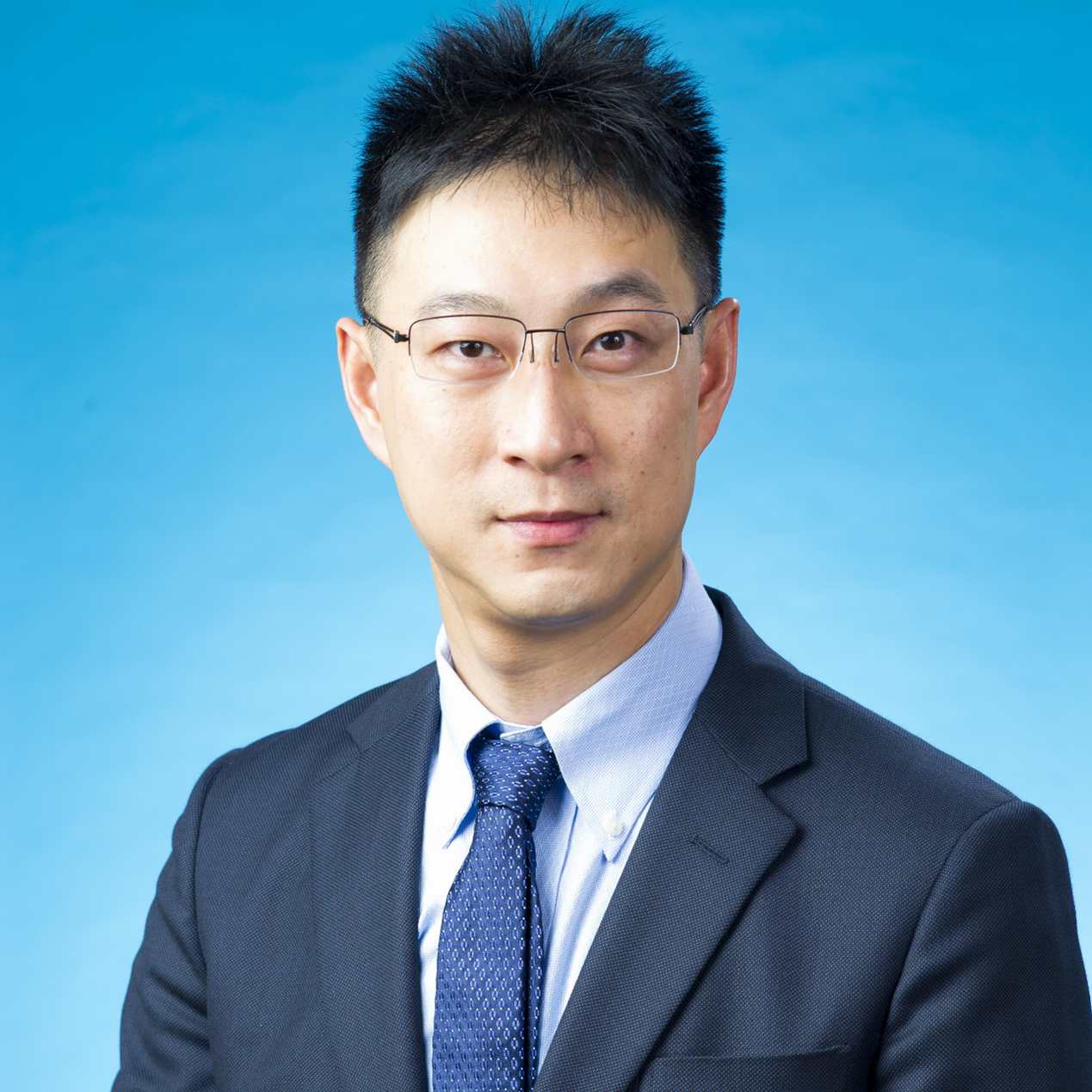}}]{Qi Wu}
    received the Ph.D. degree in applied mathematics from Columbia University, New York, NY, USA, in 2013. He is currently an Associate Professor with the School of Data Science, City University of Hong Kong (CityU), Hong Kong. He is also the Program Leader at Laboratory for AI-Powered Financial Technologies Ltd., Hong Kong. Prior to CityU, he was an Assistant Professor at The Chinese University of Hong Kong and a Finance Practitioner at Lehman Brothers, London, U.K.; UBS, Stamford, CT, USA; and DTCC, New York.
\end{IEEEbiography}





\clearpage

\appendices
\section{Approaches on Handling Temporal Irregularity}
\subsection{Discretization}
        Temporal discretization is a fundamental technique for converting irregularly sampled time series data into a regularly sampled format, as illustrated in Fig. \ref{fig:discretization}. This approach involves first defining a sequence of \( J + 1 \) evenly spaced time points, denoted by \( \tau_0, \cdots, \tau_J \), where each discretization interval \( \tau_j - \tau_{j-1} \) is of equal length for all \( j \in \{1, \cdots, J\} \).
        
        Given a multivariate irregular time series \( \mathbf{x} \), we use temporal discretization to construct a corresponding regular time series \( \mathbf{x}' \), accompanied by a missing data mask \( \mathbf{m}' \). A mapping is then applied to assign observed values from \( \mathbf{x} \) within each discretization window \( [\tau_{j-1}, \tau_j) \) to the vectors \( \mathbf{x}' \) and \( \mathbf{m}' \), accounting for two cases.
        
        In the first case, for dimension \( d \), if there are observed values \( x_t^d \) within the window \( t \in [\tau_{j-1}, \tau_j) \), these observations are aggregated into a single representative value, denoted by \( (x')^d_{t'} \), where \( t'_j = (\tau_{j-1} + \tau_j) / 2 \). A common aggregation approach is to use the mean of these values. We then set \( m^d_{t'_j} = 1 \) to indicate the presence of an observed value within this interval. In the second case, if there are no observations of \( x_t^d \) within the window \( [\tau_{j-1}, \tau_j) \), we set \( (m')^d_{t'_j} = 0 \) to indicate the absence of data.
        
        This discretization method effectively transforms the problem of modeling irregularly sampled time series into that of modeling regularly sampled series with potentially missing values. As the discretization interval \( \tau_j - \tau_{j-1} \) is widened, the amount of missing data typically decreases; however, more values may fall within the same interval, leading to increased aggregation and potential loss of local information that could be critical for certain tasks. Conversely, with narrower discretization intervals, aggregation effects are minimized, yet the length of the time series increases, along with the proportion of missing data. Thus, the window size in temporal discretization is a crucial hyperparameter that must be carefully tuned based on the specific modeling needs. 
        
\begin{figure}[ht]
    \centering
    \includegraphics[width=\linewidth]{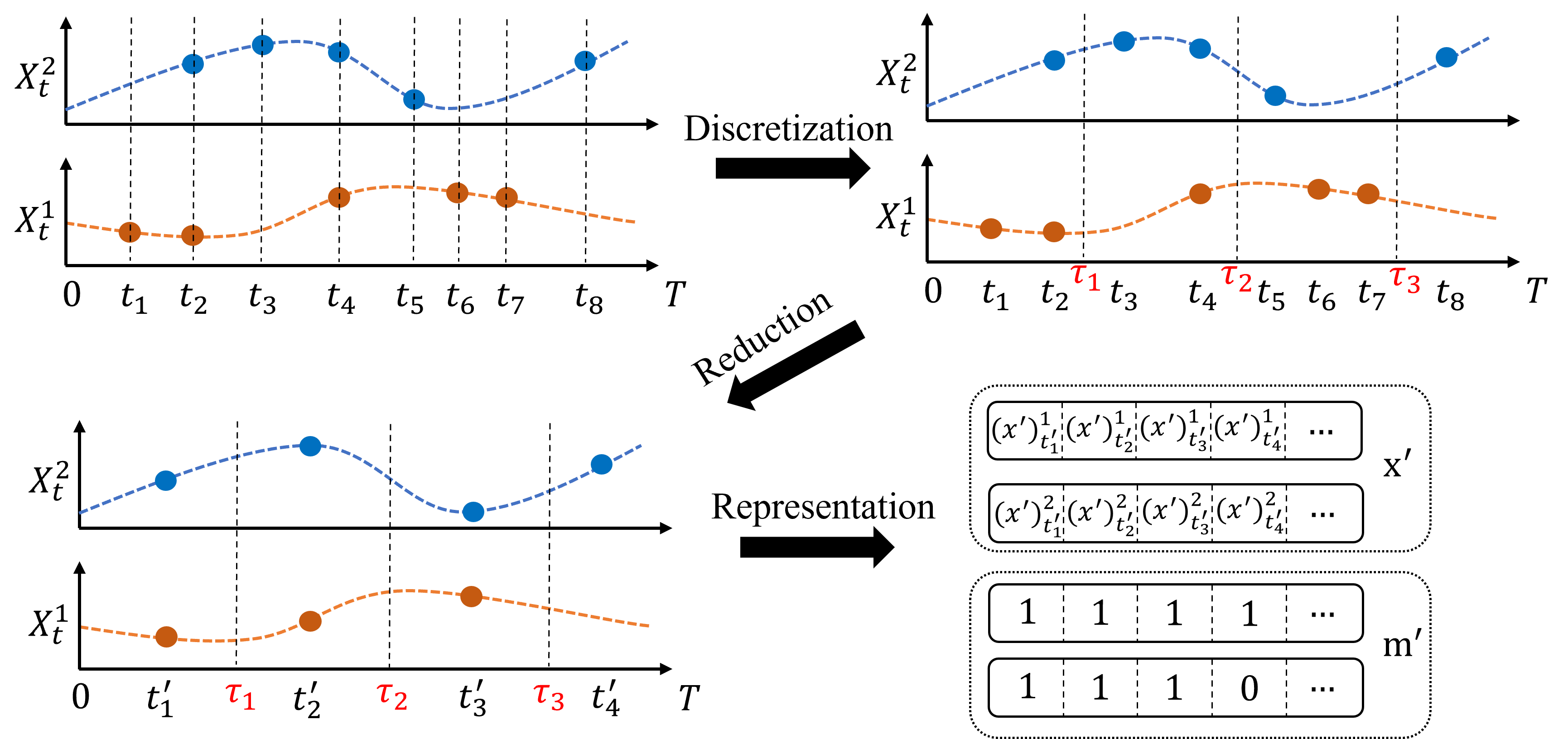}
    \caption{Temporal discretization.}
    \label{fig:discretization}
\end{figure}

\subsection{Imputation}
Imputation is another widely used technique for managing irregular time series. It requires filling in missing values to convert the data into a regularly spaced format. Unlike discretization, which aggregates data and removes local patterns, imputation seeks to retain the original temporal structure. Fig. \ref{fig:imputation} presents some commonly employed imputation techniques. Below are descriptions of these methods:

\begin{itemize}
    \item \textit{Mean Imputation}  \citep{che2018recurrent}: this method imputes the missing values using the means of the observation. It is straightforward and fast, but it sacrifices the variability in the data by filling gaps with constant values. 
    \item \textit{Forward/Backward Imputation} \citep{che2018recurrent}: this method imputes the missing values using the last/next of the observation. It retains local trends by filling gaps with the nearest known value but may fail for long sequences of missing data, exaggerating repeated values.
    \item \textit{Interpolation} \cite{DBLP:conf/iclr/ShuklaM19}: this method imputes the missing values within the range of a discrete set of known data points. Methods include linear interpolation, spline interpolation, and polynomial interpolation, which rely heavily on the assumption of smoothness in the underlying data.
    \item \textit{Gaussian Process Regression (GPR) Imputation} \cite{williams2006gaussian}: this method excels in handling non-linearity by considering uncertainty and correlation within the data. However, it is computationally expensive, especially for large datasets.
    \item \textit{RNN Imputation} \cite{cao2018brits}: this method captures temporal dependencies from the entire sequence by processing the sequence in both forward and backward directions. It jointly learns to impute missing values and perform prediction tasks, optimizing both objectives simultaneously. However, the training can be time-consuming due to the complexity of the bidirectional nature, especially with large datasets.
    \item \textit{GAN Imputation} \cite{luo2018multivariate}: this method generates plausible data to fill gaps by learning the distribution of observed data. It is robust for complex, high-dimensional datasets, though it can be challenging to train.
    \item ...
\end{itemize}
\begin{figure}[ht]
    \centering
    \includegraphics[width=\linewidth]{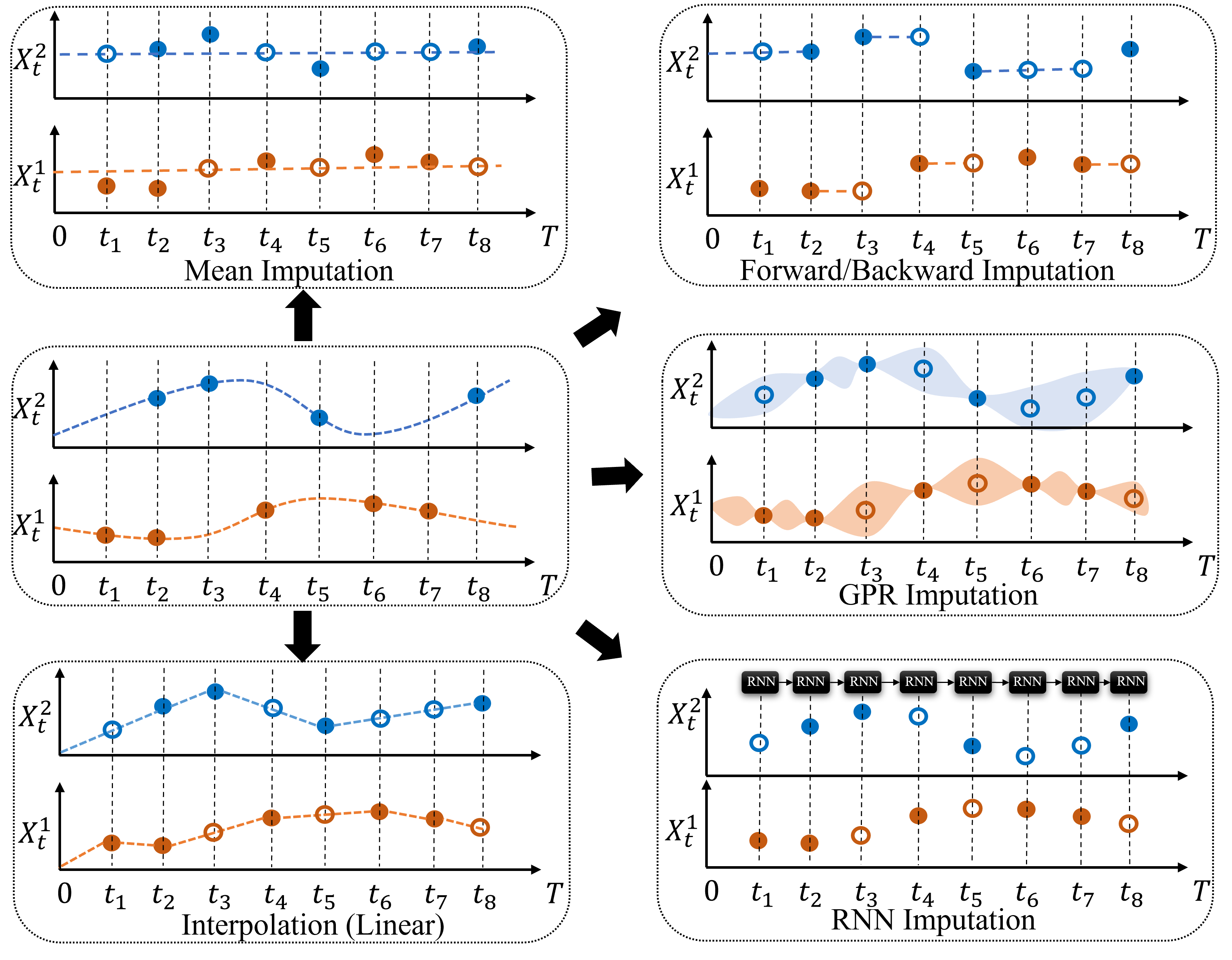}
    \caption{Various temporal imputation approaches.}
    \label{fig:imputation}
\end{figure}

\subsection{End-to-end Frameworks}
       The imputation-then-predict strategy often leads to suboptimal performance because imputation often introduces biases or artificial patterns. To address this, end-to-end frameworks can be used that modify classical RNNs to directly handle irregular time series, eliminating the need for explicit interpolation and preserving the inherent temporal irregularity.

        One approach is to introduce time-aware RNNs that incorporate the irregularity of timestamps into the model's input. These networks include the time gaps between observations as input, allowing the model to adjust its predictions based on how much time has elapsed between data points. This allows the network to make informed predictions that take into account the non-uniform temporal gaps in the data.
        
        A more sophisticated approach involves working within a continuous-time framework. Discrete-time RNNs typically update the hidden state only when observations are available, assuming a constant hidden state between observations. In contrast, continuous-time models allow the hidden state to evolve over time, even when no new data is observed. For example, GRU-D \citep{che2018recurrent}, as illustrated in Fig. \ref{fig:GRU-D}, introduces a decay mechanism where the hidden state diminishes exponentially over time in the absence of observations, better reflecting the natural decay of information. Similarly, models like T-LSTM \citep{baytas2017patient} adapt traditional LSTM architectures by dividing the memory into short-term and long-term components, incorporating decay into the short-term memory to capture diminishing influences from older data. These models learn to ``forget'' information at a rate proportional to the time gap between observations, allowing them to model both irregular time intervals and missingness patterns without requiring explicit interpolation.
        
        While these models are effective, the exponential decay assumption limits their representation capacity. Exponential decay can be seen as a simplified form of an ordinary differential equation (ODE). To address this limitation, models based on Neural ODEs have been proposed, replacing fixed exponential decay with a more flexible differential equation-driven approach that can better capture complex, continuous temporal dynamics. 
        
        For instance, GRUODE (shown in Fig. \ref{fig:GRUODE}) builds on the GRU architecture but replaces the discrete-time updates with a continuous-time ODE solver. GRUODE allows the hidden state to evolve according to learned differential equations, making it more adept at modeling irregular time series. The continuous updating mechanism accounts for time gaps more naturally than traditional RNNs, thus improving predictive performance for complex, irregular time series data.
        
\begin{figure}[ht]
    \centering
    \includegraphics[width=0.7\linewidth]{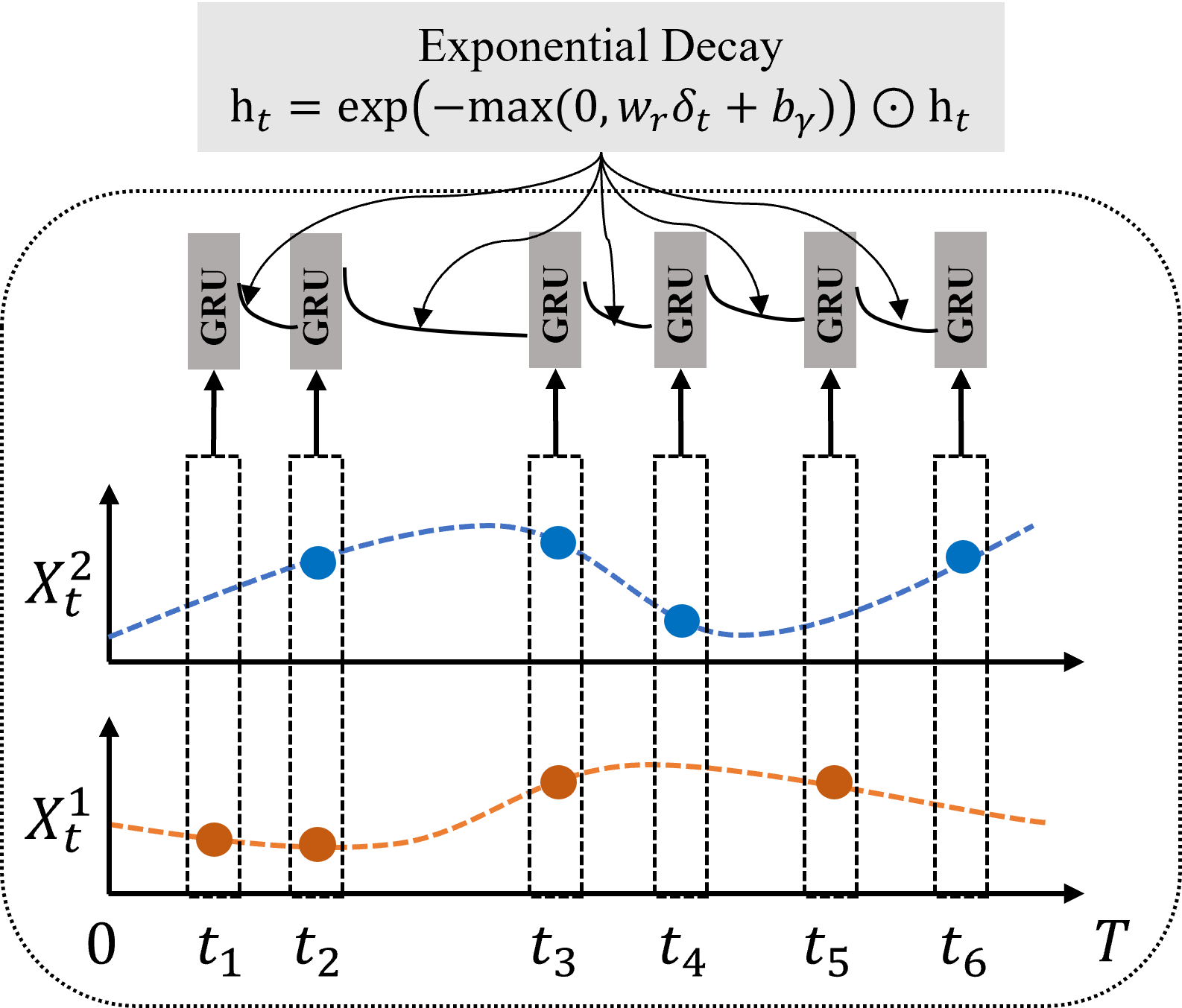}
    \caption{The operation process of GRU-D.\label{fig:GRU-D}}
\end{figure}

\begin{figure}[ht]
    \centering
    \includegraphics[width=\linewidth]{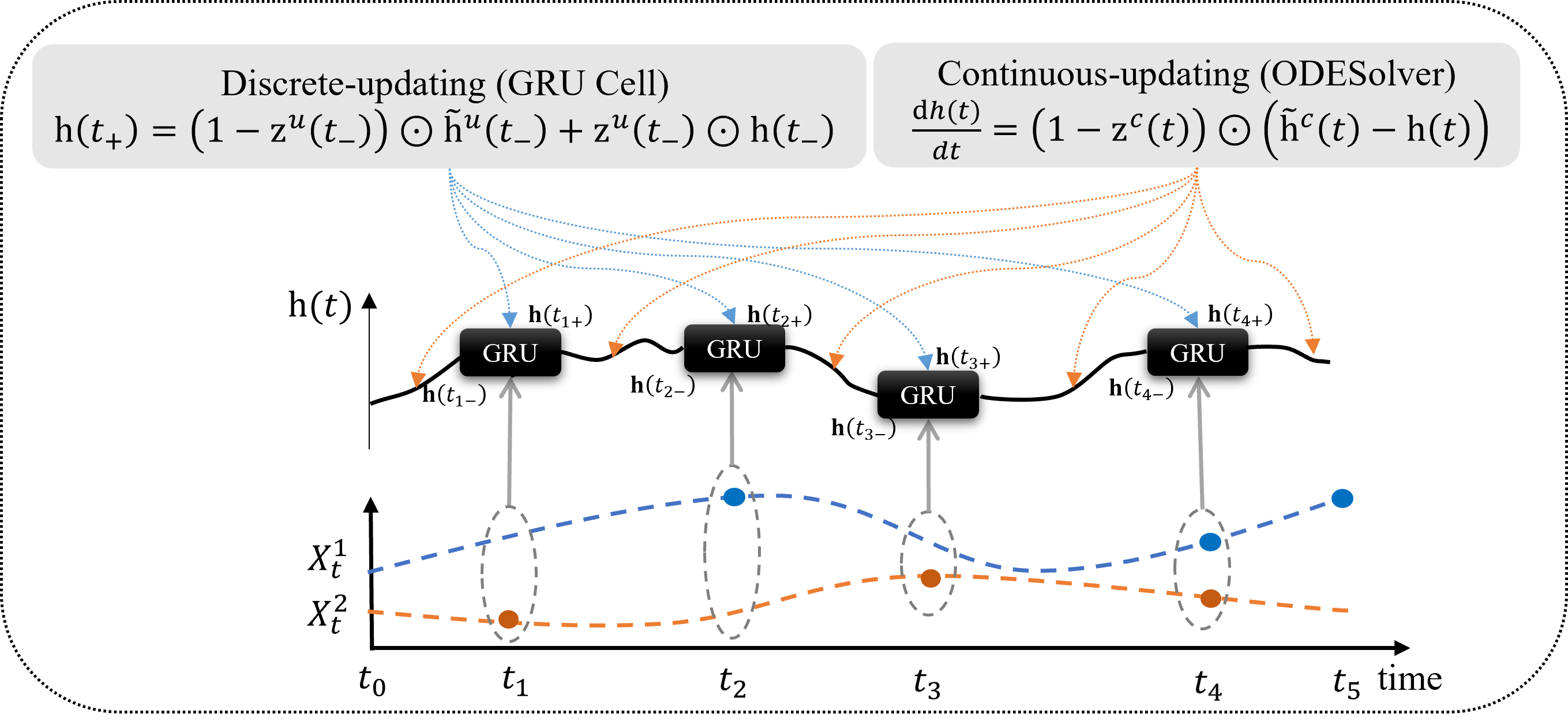}
    \caption{The operation process of GRUODE.\label{fig:GRUODE}}
\end{figure}

\section{Marginal learning layers} \label{Appendix:Marginal learning blocks}
\subsection{GRU-D} \label{subsec:GRU-D}
GRU-D \cite{che2018recurrent} modifies the classical GRU by adding the trainable exponential decays,
\[\gamma_t = \exp\{-\max(\mathrm{0}, \mathbf{w}_{\gamma} \delta_t+\mathrm{b}_{\gamma})\}\]
where $\delta_t$ is the time intervals between two observations, and $\mathbf{w}_{\gamma}$ and $\mathrm{b}_{\gamma}$ are trainable parameters. First, it imputes the missing values with the weighted average between the last observations and the empirical means,
\[\hat{x}_t^d=m_t^d x_t^d+\left(1-m_t^d\right)\left(\gamma_{x_t}^d x_{t^{\prime}}^d+\left(1-\gamma_{x_t}^d\right) \tilde{x}^d\right)\]
where $m_t^d$ is the mask of $d^{\text{th}}$ variable at time $t$, $\gamma_{x_t}^d$ is the trainable decay, $x_{t^{\prime}}^d$ is the previous observation ($t^{\prime}<t$), and $\tilde{x}^d$ is its empirical mean.
Second, it employs hidden state decay to further capture the missing patterns in the hidden state, i.e.,
\[\hat{\mathrm{h}}_{t-1}=\gamma_{h_t} \odot \mathrm{h}_{t-1}.\]
Applying the marginal learning idea to GRU-D, the updating equations are as follows:
\begin{align*}
\mathrm{r}_t&=\sigma\left(\mathbf{W}_{r}\mathrm{\hat{x}}_t+\mathbf{U}_{r}\mathrm{\hat{h}}_{t-1} + \mathbf{V}_{r}\mathrm{m}_t +\mathrm{b}_{r}\right), \nonumber\\
\mathrm{z}_t&=\sigma\left(\mathbf{W}_{z}  \mathrm{\hat{x}}_t+\mathbf{U}_{z}\mathrm{\hat{h}}_{t-1}+\mathbf{V}_{z}\mathrm{m}_t +\mathrm{b}_{z}\right),\nonumber\\
\mathrm{\tilde{h}}_t&=\tanh \left(\mathbf{W}_{h}  \mathrm{\hat{x}}_t+\mathbf{U}_{h}(\mathrm{r}_t \odot \mathrm{\hat{h}}_{t-1})+\mathbf{V}_{h}  \mathrm{m}_t +\mathrm{b}_{h}\right),\nonumber\\
\mathrm{h}_t&=\mathrm{z}_t \odot \mathrm{h}_{t-1} + (1-\mathrm{z}_t) \odot \mathrm{\tilde{h}}_{t},
\end{align*}
where {\small$\mathbf{W}_{\{r,z,h\}}, \mathbf{U}_{\{r,z,h\}}, \mathbf{V}_{\{r,z,h\}}, \mathrm{b}_{\{r,z,h\}}$} are training parameters.

\subsection{ODERNN}\label{subsec:ODERNN}
ODERNN \cite{rubanova2019latent} proposes handling irregularly sampled time series by modeling the hidden state evolution using a Neural ODE in the absence of observations. Compared to GRU-ODE-Bayes \cite{de2019gru}, such a Neural ODE does not admit explicit solutions; instead, it is modeled using a standard MLP network. When there are observations, a classical GRU cell will update the hidden state.

When implementing ODERNN to learn the marginal layer, the evolution formula of ODERNN in the marginal learning layer is
\[\frac{d \mathrm{h}(t)}{d t}=\mathrm{f}_\theta(\mathrm{h}(t), t).\]
Let $\mathrm{h}_{t_-}=\mathrm{h}(t_-)$ and $\mathrm{h}_{t_+}=\mathrm{h}(t_+)$, the updating equations of ODERNN for the marginal learning are 
\begin{align*}
\mathrm{r}_{t_-}&=\sigma\left(\mathbf{W}_{r}  \mathrm{x}_t+\mathbf{U}_{r}  \mathrm{h}_{t_-} +\mathrm{b}_{r}\right), \nonumber\\
\mathrm{z}_{t_-}&=\sigma\left(\mathbf{W}_{z}  \mathrm{x}_t+\mathbf{U}_{z}  \mathrm{h}_{t_-} +\mathrm{b}_{z}\right),\nonumber\\
\mathrm{\tilde{h}}_{t_-}&=\tanh \left(\mathbf{W}_{h}  \mathrm{x}_t+\mathbf{U}_{h}  (\mathrm{r}_t \odot \mathrm{h}_{t_-}) +\mathrm{b}_{h}\right),\nonumber\\
\mathrm{h}_{t_+}&=\mathrm{z}_t \odot \mathrm{h}_{t_-} + (1-\mathrm{z}_t) \odot \mathrm{\tilde{h}}_{t-},
\end{align*}
where  {\small$\mathbf{W}_{\{r,z,h\}}, \mathbf{U}_{\{r,z,h\}}, \mathrm{b}_{\{r,z,h\}}$} are training parameters.

\subsection{ODELSTM}\label{subsec:ODELSTM}
ODELSTM \cite{lechner2020learning} is designed to learn the long-term dependencies in irregularly sampled time series. In the absence of observations, the hidden states are evolved by a Neural ODE, and when there is an observation, the hidden state is updated by the classical LSTM. Unlike the ODERNN and GRU-ODE-Bayes, the cell state of LSTM will contain long-term memory. To cast the ODELSTM in the RFN framework, the evolution equation is
\[\frac{d \mathrm{h}(t)}{d t}=\mathrm{f}_\theta(\mathrm{h}(t), t).\]
Let $\mathrm{h}_{t_-}=\mathrm{h}(t_-)$ and $\mathrm{h}_{t_+}=\mathrm{h}(t_+)$. The updating equations of ODELSTM for marginal learning are 
\begin{align*}
\mathrm{f}_{t_-}&=\sigma\left(\mathbf{W}_{f}  \mathrm{x}_t+\mathbf{U}_{f}  \mathrm{h}_{t_-} +\mathrm{b}_{f}\right), \nonumber\\
\mathrm{i}_{t_-}&=\sigma\left(\mathbf{W}_{i}  \mathrm{x}_t+\mathbf{U}_{i}  \mathrm{h}_{t_-} +\mathrm{b}_{i}\right), \nonumber\\
\mathrm{o}_{t_-}&=\sigma\left(\mathbf{W}_{o}  \mathrm{x}_t+\mathbf{U}_{o}  \mathrm{h}_{t_-} +\mathrm{b}_{o}\right),\nonumber\\
\mathrm{\tilde{c}}_{t_-}&=\tanh \left(\mathbf{W}_{c}  \mathrm{x}_t+\mathbf{U}_{c}  \mathrm{h}_{t_-} +\mathbf{b}_{c}\right),\nonumber\\
\mathrm{c}_{t_+}&=\mathrm{f}_{t_-} \odot \mathrm{c}_{(t-1)_+} + \mathrm{i}_{t_-} \odot \mathrm{\tilde{c}}_{t_-}\\
\mathrm{h}_{t_+}&=\mathrm{o}_{t_-} \odot \tanh (\mathrm{c}_{t_+})
\end{align*}
where {\small$\mathbf{W}_{\{f, i, o, c\}}, \mathbf{U}_{\{f, i, o, c\}}, \mathrm{b}_{\{f, i, o, c\}}$} are training parameters.

\section{Ablation studies on Imputing Unobserved Variables}
When we specify the multivariate asynchronous learning layer in Section \ref{sec:Multivariate Asynchronous Learning}, we posit that there are two ways to handle concurrent ``missing'' values across variables at a given observation time. 

We can either impute the unobserved variables, which completes the asynchronous dataset into a synchronous one and train them using the Syn-MTS model, or utilize our Asyn-MTS model, where we factorize the conditional log-likelihood along the component dimension one more time at any observation time, and only compute the resulting factorized conditional log-likelihoods for variables that have observations. In other words, the objective function \eqref{Eq23} avoids computing any loss terms for unobserved variables. Fig. \ref{fig:ablation_3} indicates clearly that our approach (the left bars in lighter green), without completing the data, is indeed robustly superior to the ``imputing before training'' approach, where the unobserved variables are imputed by their marginal layer forecasts, despite it being more intuitive.

\begin{figure}[tbh]
	\centering
	\subfigure[$\mathrm{CRPS}$]{
		\includegraphics[width=0.44\columnwidth]{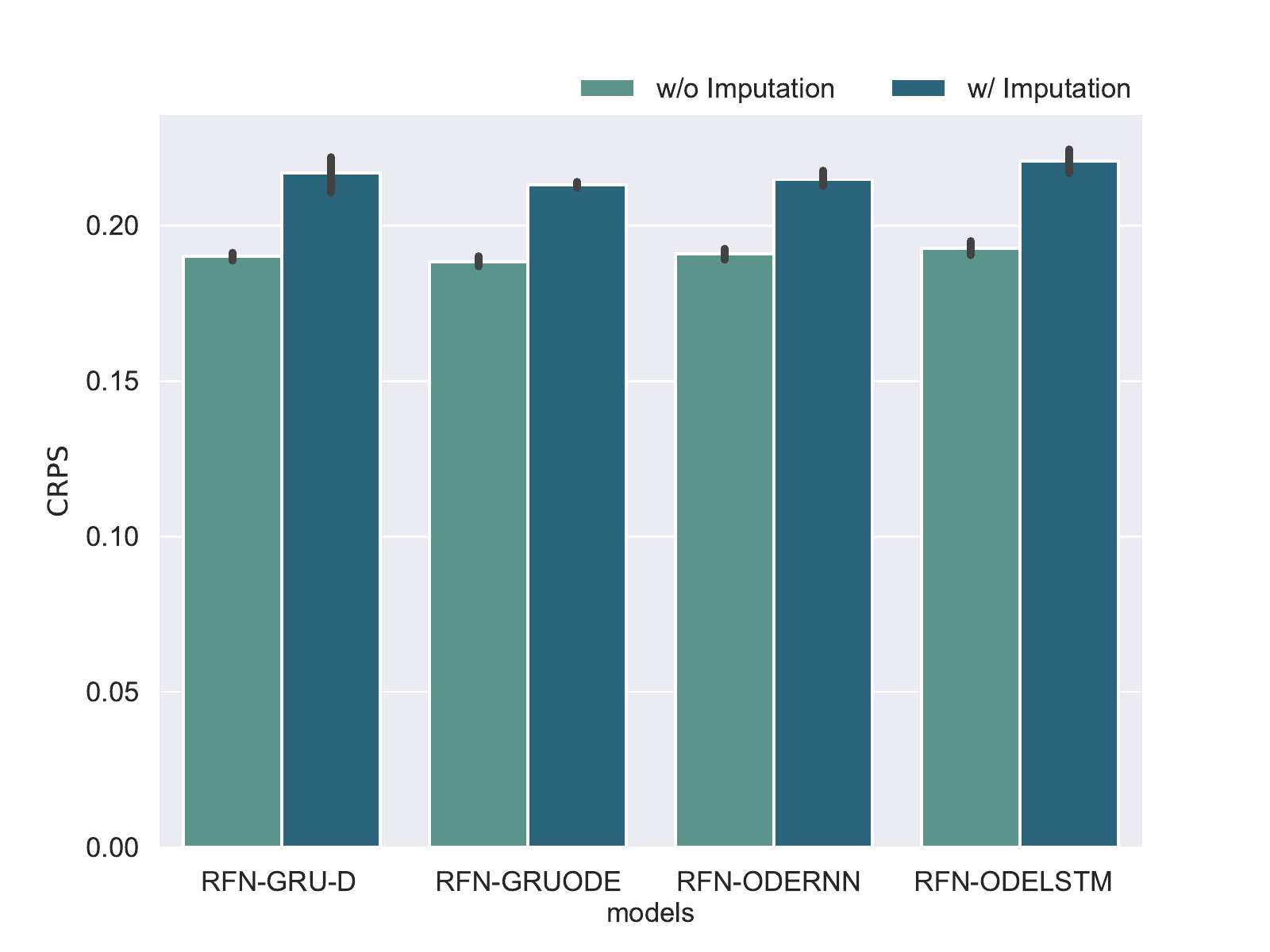}
	}%
	\subfigure[$\mathrm{CRPS}_{\text {sum }}$]{
		\includegraphics[width=0.44\columnwidth]{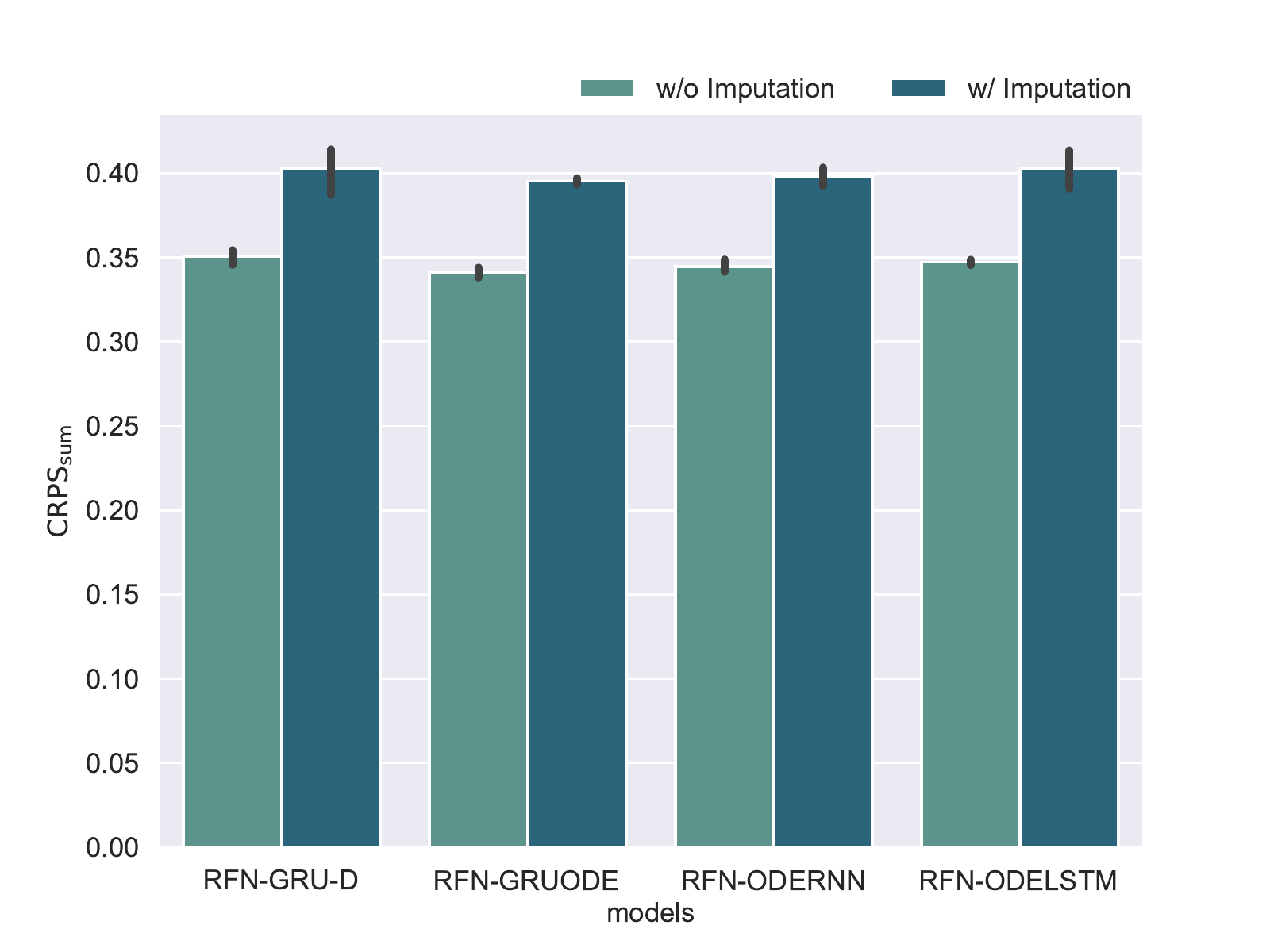}
	}%
	\caption{Asynchronous data: Imputing unobserved variables (right bars in darker green) vs. Not imputing them (left bars in lighter green).\label{fig:ablation_3}}
\end{figure}

\section{Proof of Lemma 5.1} \label{Appendix:Proof of Theorem}
\begin{proof}
Consider the random variables $\mathrm{X}\in \mathbb{R}^D$ and $\mathrm{Y}\in \mathbb{R}^C$. Let $\tilde{\mathrm{Z}}(s)=[\mathrm{Z}(s),\mathrm{Y}(s)]^{\top}$ be a finite continuous random variable. Let the realizations of $\tilde{\mathrm{Z}}(s)$ be $\tilde{\mathrm{z}}(s)=[\mathrm{z}(s),\mathrm{y}(s)]^{\top}$, and the probability density of $\tilde{\mathrm{Z}}(s)$ be $p(\tilde{\mathrm{z}}(s))=p(\mathrm{z}(s),\mathrm{y}(s))$ which depends on the flow time $s$, where $s_{0}\leq s\leq s_{1}$. We assume $\mathrm{z}(s)$ evolves continuously in the real space, starting from a sample point of a pre-defined distribution at $s=s_{0}$ and ending at a sample point $\mathrm{x}$ of $\mathrm{X}$ at $s=s_{1}$. According to \eqref{eqt:ffjord1}, the governing equation of $\tilde{\mathrm{z}}(s)$ can be written as
{\small
\begin{equation*}
\begin{aligned}
\frac{\partial \tilde{\mathrm{z}}(s)}{\partial s}=
\begin{bmatrix}
\frac{\partial \mathrm{z}(s)}{\partial s}  \\
\frac{\partial \mathrm{y}(s)}{\partial s} 
\end{bmatrix}
=\begin{bmatrix}
\mathrm{f}(\mathrm{z}(s), s, \mathrm{y}(s); \theta) \\
0
\end{bmatrix}, \text { where } s_0 \leq s \leq s_1,
\end{aligned}
\end{equation*}
}\noindent
$\mathrm{y}(s)=\mathrm{y}$ and $\mathrm{z}(s_{1})=\mathrm{x}$. From the differential equation $\frac{\partial \mathrm{y}(s)}{\partial s}=0$, we have $\mathrm{y}(s)=\mathrm{y}$. Hence, we are going to find the density $p\left(\tilde{\mathrm{z}}_s\right)=p\left(\mathrm{z}_s, \mathrm{y}\right)$ such that
\begin{equation*}
\begin{aligned}
\frac{\partial \mathrm{z}(s)}{\partial s}=\mathrm{f}(\mathrm{z}(s), s, \mathrm{y} ; \theta).
\end{aligned}
\end{equation*}
%
The proof requires the change of variables in the probability density theorem. We restate the result below.
\begin{result}
Suppose that $G(\cdot)$ is a bijective function and differentiable. Given the variables $(\mathrm{Z},\mathrm{Y})$ and the corresponding density function $p(\mathrm{z},\mathrm{y})$, the density function of $(G(\mathrm{Z}),\mathrm{Y})$ is
\begin{equation}
\begin{aligned}\label{eqt:log density difference}
p(G(\mathrm{z}),\mathrm{y})&=p(\mathrm{z},\mathrm{y})\left|\det\begin{bmatrix}
\partial_\mathrm{z} G(\mathrm{z}) & \mathrm{0}_{D\times C}\\
\mathrm{0}_{C \times D} & \mathbb{I}_{C}
\end{bmatrix}\right|^{-1}\\
\Rightarrow \log \frac{p(G(\mathrm{z}),\mathrm{y})}{p(\mathrm{z},\mathrm{y})}&=-\log \left|\det\begin{bmatrix}
\partial_{\mathrm{z}} G(\mathrm{z})& \mathrm{0}_{D\times C}\\
\mathrm{0}_{C\times D} & \mathbb{I}_{C}
\end{bmatrix}\right|.
\end{aligned}
\end{equation}
\end{result} 
Now, we consider $\frac{\partial \log p(\tilde{\mathrm{z}}(s))}{\partial s}=\frac{\partial \log p(\mathrm{z}(s), \mathrm{y})}{\partial s}$. We write $\tilde{\mathrm{z}}(s+\epsilon)=[\mathrm{z}(s+\epsilon),\mathrm{Y}]^{\top}=[T_{\epsilon}(\mathrm{z}(s)),\mathrm{Y}]^{\top}$. From the first principle of derivatives, we have
\begin{equation*}
		\begin{aligned}
			\partial_s \log p(\tilde{\mathrm{z}}(s))
			&=\underset{\epsilon\rightarrow 0^{+}}{\lim}\frac{\log p(\tilde{\mathrm{z}}(s+\epsilon))-\log p(\tilde{\mathrm{z}}(s))}{\epsilon}\\
			&=\underset{\epsilon\rightarrow 0^{+}}{\lim}\frac{\log p(T_{\epsilon}(\mathrm{z}(s)),\mathrm{y})-\log p(\mathrm{z}(s),\mathrm{y})}{\epsilon}.
		\end{aligned}
\end{equation*}
We simplify the quantity $\underset{\epsilon\rightarrow 0^{+}}{\lim}\frac{\log p(T_{\epsilon}(\mathrm{z}(s)),\mathrm{y})-\log p(\mathrm{z}(s),\mathrm{y})}{\epsilon}$. Applying equation \eqref{eqt:log density difference}, we have
\begin{equation*}
\begin{aligned}
&\partial_s \log p(\tilde{\mathrm{z}}(s))\\
&=\underset{\epsilon\rightarrow 0^{+}}{\lim}\frac{-\log \left|\det\begin{bmatrix}
\partial_{\mathrm{z}(s)} T_{\epsilon}(\mathrm{\mathrm{z}}(s)) & \mathrm{0}_{D\times C}\\
\mathrm{0}_{C\times D} & \mathbb{I}_{C}
\end{bmatrix}\right|}{\epsilon}\\
&\overset{\text{L'H\^{o}pital}}{=}-\underset{\epsilon\rightarrow 0^{+}}{\lim}\partial_\epsilon\log \left|\det\begin{bmatrix}
\partial_{\mathrm{z}(s)} T_{\epsilon}(\mathrm{z}(s)) & \mathrm{0}_{D\times C}\\
\mathrm{0}_{C\times D} & \mathbb{I}_{C}
\end{bmatrix}\right|\\
&=-\underset{\epsilon\rightarrow 0^{+}}{\lim}\frac{\partial_\epsilon\left|\det\begin{bmatrix}
\partial_{\mathrm{z}(s)} T_{\epsilon}(\mathrm{z}(s)) & \mathrm{0}_{D\times C}\\
\mathrm{0}_{C\times D} & \mathbb{I}_{C}
\end{bmatrix}\right|}{\left|\det\begin{bmatrix}
\partial_{\mathrm{z}(s)} T_{\epsilon}(\mathrm{z}(s)) & \mathrm{0}_{D\times C}\\
\mathrm{0}_{C\times D} & \mathbb{I}_{C}
\end{bmatrix}\right|}\\
&=\underbrace{\frac{\underbrace{-\underset{\epsilon\rightarrow 0^{+}}{\lim}\partial_\epsilon\left|\det\begin{bmatrix}
\partial_{\mathrm{z}(s)} T_{\epsilon}(\mathrm{z}(s)) & \mathrm{0}_{D\times C}\\
\mathrm{0}_{C\times D} & \mathbb{I}_{C}
\end{bmatrix}\right|}_{\text{bounded}}}{\underset{\epsilon\rightarrow 0^{+}}{\lim}\left|\det\begin{bmatrix}
\partial_{\mathrm{z}(s)} T_{\epsilon}(\mathrm{z}(s)) & \mathrm{0}_{D\times C}\\
\mathrm{0}_{C\times D} & \mathbb{I}_{C}
\end{bmatrix}\right|}}_{1}\\
&=-\underset{\epsilon\rightarrow 0^{+}}{\lim}\partial_\epsilon\left|\det\begin{bmatrix}
\partial_{\mathrm{z}(s)} T_{\epsilon}(\mathrm{z}(s)) & \mathrm{0}_{D\times C}\\
\mathrm{0}_{C\times D} & \mathbb{I}_{C}
\end{bmatrix}\right|.
\end{aligned}
\end{equation*}
Applying the Jacobi’s formula, we have
\begin{equation*}
\begin{aligned}
&\partial_s \log p(\tilde{\mathrm{z}}(s))\\
&=-\underset{\epsilon\rightarrow 0^{+}}{\lim}\operatorname{Tr}\biggl(\operatorname{adj}\bigg(\begin{bmatrix}
\partial_{\mathrm{z}(s)} T_{\epsilon}(\mathrm{z}(s)) & \mathrm{0}_{D\times C}\\
\mathrm{0}_{C\times D} & \mathbb{I}_{C}
\end{bmatrix}\bigg)\\
& \qquad \times \partial_\epsilon\begin{bmatrix}
&\partial_{\mathrm{z}(s)} T_{\epsilon}(\mathrm{z}(s)) & \mathrm{0}_{D\times C}\\
&\mathrm{0}_{C\times D} & \mathbb{I}_{C}
\end{bmatrix}\biggl)\\
&=-\operatorname{Tr}\biggl(\underbrace{\underset{\epsilon\rightarrow 0^{+}}{\lim}\operatorname{adj}\bigg(\begin{bmatrix}
\partial_{\mathrm{z}(s)} T_{\epsilon}(\mathrm{z}(s)) & \mathrm{0}_{D\times C}\\
\mathrm{0}_{C\times D} & \mathbb{I}_{C}
\end{bmatrix}\bigg)}_{\mathbb{I}_{C}}\\ 
&\qquad \times\underset{\epsilon\rightarrow 0^{+}}{\lim}\frac{\partial}{\partial \epsilon}\begin{bmatrix}
\partial_{\mathrm{z}(s)} T_{\epsilon}(\mathrm{z}(s)) & \mathrm{0}_{D\times C}\\
\mathrm{0}_{C\times D} & \mathbb{I}_{C}
\end{bmatrix}\biggl)\\
&=-\operatorname{Tr}\biggl(\underset{\epsilon\rightarrow 0^{+}}{\lim}\bigg(\partial_\epsilon\partial_{\mathrm{z}(s)} T_{\epsilon}(\mathrm{z}(s))\bigg)\biggl).
\end{aligned}
\end{equation*}
Applying Taylor series expansion on $T_{\epsilon}(\mathrm{z}(s))$ and taking the limit, we have
\begin{equation*}
		\begin{aligned}
			&\partial_s \log p(\tilde{\mathrm{z}}(s))=-\operatorname{Tr}\biggl(\underset{\epsilon\rightarrow 0^{+}}{\lim}\bigg(\partial_\epsilon\partial_{\mathrm{z}(s)} T_{\epsilon}(\mathrm{z}(s))\bigg)\biggl)\\
			&=-\operatorname{Tr}\biggl(\underset{\epsilon\rightarrow 0^{+}}{\lim}\bigg(\partial_\epsilon \partial_{\mathrm{z}(s)}(\mathrm{z}(s)+\partial_s \mathrm{z}(s)\epsilon+O(\epsilon^{2}))\bigg)\biggl)\\
			=&-\operatorname{Tr}\biggl(\underset{\epsilon\rightarrow 0^{+}}{\lim}\biggl(\partial_\epsilon(\mathbb{I}+\partial_{\mathrm{z}(s)} f(\mathrm{z}(s),s,\mathrm{y};\theta)\epsilon+O(\epsilon^{2}))\bigg)\biggl)\\
			&=-\operatorname{Tr}\biggl(\partial_{\mathrm{z}(s)} f(\mathrm{z}(s),s,\mathrm{y};\theta)\biggl).
		\end{aligned}
\end{equation*}
As such, we have
\begin{equation*}
\begin{aligned}
&\int_{s_{0}}^{s_{1}} \partial_s \log p(\tilde{\mathrm{z}}(s))ds = \int_{s_{0}}^{s_{1}} -\operatorname{Tr}\biggl(\partial_{\mathrm{z}(s)} \mathrm{f}(\mathrm{z}(s),s,\mathrm{y};\theta)\biggl) ds\\
&\Rightarrow \log \frac{p(\tilde{\mathrm{z}}(s_{1}))}{p(\tilde{\mathrm{z}}(s_{0}))}=\int_{s_{1}}^{s_{0}} \operatorname{Tr}\biggl(\partial_{\mathrm{z}(s)} \mathrm{f}(\mathrm{z}(s),s,\mathrm{y};\theta)\biggl) ds\\
&\Rightarrow \log \frac{p(\mathrm{z}(s_{1}),\mathrm{y})}{p(\mathrm{z}(s_{0}),\mathrm{y})} = \int_{s_{1}}^{s_{0}} \operatorname{Tr}\biggl(\partial_{\mathrm{z}(s)} \mathrm{f}(\mathrm{z}(s),s,\mathrm{y};\theta)\biggl) ds.
\end{aligned}
\end{equation*}
The proof is completed.
\end{proof}

\section{Performance Measure of Probability Forecasting} \label{Appendix:Performance Measure}
 Probabilistic forecasting provides a full predictive distribution rather than a single point estimate, enabling decision makers to quantify uncertainty and assess risk comprehensively. The performance of such forecasts is typically evaluated using metrics that capture two critical, and often competing, aspects: \textbf{calibration and sharpness} \citep{gneiting2007probabilistic}.

    \textbf{Calibration} reflects the statistical consistency between predicted probabilities and observed outcomes. In a calibrated forecast, the proportion of outcomes falling within a predicted probability interval should match with the nominal quantile level. 
    For instance, given a constructed interval for the forecast, if a 90\% quantile level is considered, approximately 90\% of the observations sampled from the actual distribution should fall within the quantile interval according to the quantile level.  Calibration ensures that the forecasts are reliable and that the probabilities assigned to future events correspond to their long-run frequencies.
    
    \textbf{Sharpness}, on the other hand, is an intrinsic property of the forecast distributions. It refers to the concentration or spread of the predictive distributions, independent of the observed outcomes. A sharper forecast provides a more precise estimate by concentrating probability mass in a narrower region. 
    However, sharpness must be interpreted in conjunction with calibration: forecasts that are extremely sharp but poorly calibrated can be misleading. Thus, the ideal probabilistic forecast is both sharp (i.e., informative) and well-calibrated (i.e., statistically consistent with the observed data).

    The Continuous Ranked Probability Score ($CRPS$) is a proper scoring rule that quantifies the overall discrepancy between the predicted cumulative distribution function (CDF) and the step function corresponding to the observed outcome \citep{gneiting2007probabilistic}. By integrating the squared differences over the entire support of the random variable, the $CRPS$ captures both the calibration and sharpness of the predictive distribution. A lower $CRPS$ indicates that the predicted distribution is closer to the empirical distribution.

    In multivariate settings, direct application of the univariate $CRPS$ is infeasible. Hence, $CRPS_{sum}$ is introduced \citep{salinas2019high}, which evaluates the forecast performance for a multivariate random variable by considering the sum of its components. Essentially, it aggregates the estimated CDFs of individual variables and compares the resulting cumulative probability to the sum of the observed values, providing a proper scoring rule for the overall multivariate forecast and maintaining consistency with the univariate $CRPS$ framework.

    In contrast, the Confidence Score ($CS$) is a metric that focuses solely on calibration, measuring the statistical consistency between the predicted quantiles and the empirical frequencies  \citep{kuleshov2018accurate}. For a well-calibrated forecast, the proportion of observed values falling below the $q$-th quantile of the predicted distribution should be approximately $q$. The $CS$ computes the squared difference between these empirical frequencies and the nominal quantile levels over a range of thresholds, averaged over all variables and quantile levels. A smaller $CS$ indicates that the forecasted quantiles are well aligned with the observed outcomes, thereby implying superior calibration.

\section{More Experiment Results and Discussions} \label{Appendix:More results}
Firstly, in the simulation experiment, the prediction intervals of Fig. \ref{fig:Syn-MTS est interval} and \ref{fig:Asyn-MTS est interval} are restricted to quantiles ranging from 0.2 to 0.8. Here, we present results with a broader quantile range in Fig. \ref{fig:more res}. 

Secondly, we only report the experiment results of the MuJoCo dataset when the missing rate is 50\%, as illustrated in Table \ref{Mujoco}. Also, we give a plot to illustrate the model performance when the missing rate varies in Fig. \ref{fig:missing_rate_sync} and \ref{fig:missing_rate_async}. Here, we provide the detailed experiment results in Table \ref{Sim_25} and \ref{Sim_75}. 

Thirdly,  we report the results of the efficiency experiment in Table \ref{efficiency}, which details the training time and memory usage for all baseline and RFN models. From the results, it's clear that memory usage is similar across all models. However, the training time for ODE-based models is significantly longer than for standard RNN variants like GRU-D. This is reasonable, as ODE-based models must solve differential equations in a continuous-time manner. Additionally, RFNs require longer training times than baselines due to the complexity of their joint learning layer, which involves learning flow mappings after the marginal learning step. This increase in training time is one of the limitations of RFN models, although it enables a richer and more flexible modeling of the data.

 Moreover, as illustrated in Figure \ref{fig:strucure}, in the Syn-MTS setting, missingness occurs uniformly across all variables at the same time steps, meaning that the effective temporal resolution is reduced by half when the missing rate is 50\%. This creates large gaps where no information is available, making it difficult for the model to learn temporal dependencies effectively. For instance, when forecasting at time \(t_7\), all information from \(t_4\) to \(t_6\) is completely lost, which significantly weakens the model's ability to capture serial dependencies.  

    Conversely, in the Asyn-MTS setting, missingness is introduced independently for each variable, meaning that even though each variable is missing 50\% of its values, most of the temporal grid is preserved. As a result, at each time step, even when some variable values are missing, the model can leverage cross-sectional dependencies from other observed variables to recover some of the missing information. This leads to a better-calibrated probabilistic forecasting model, reducing errors in $CS$ evaluation.  
    
    The performance gap between Asyn-MTS and Syn-MTS is particularly pronounced in MUJOCO, as the dataset represents the movement dynamics of a robotic system, where serial dependencies and cross-sectional dependencies are strong. These structured and interdependent time series describe different physical components of the system (e.g., positions, velocities, and joint angles), meaning that observed variables contain valuable information about missing ones. In contrast, the data follows a Geometric Brownian Motion (GBM) in the simulation experiment, and the advantage of Asyn-MTS is less pronounced as GBM exhibits weak serial dependencies due to its Markovian properties. As a result, even though Asyn-MTS retains more time steps with at least some observed variables, the additional information does not significantly improve forecasting performance.  
    
    Despite its advantage in calibration, Asyn-MTS introduces higher predictive uncertainty due to its independent missing structure. Since different variables are missing at different time steps, the model must account for more complex cross-sectional dependencies, increasing variance in the predicted distributions. This leads to broader predictive distributions with reduced sharpness. Because $CRPS$ and $CRPS_{sum}$ assess both calibration and sharpness, their values do not increase significantly as $CS$.

    \begin{figure}[htb]
	\centering
	\subfigure[Syn-MTS data structure]{
		\includegraphics[width=0.9\linewidth]{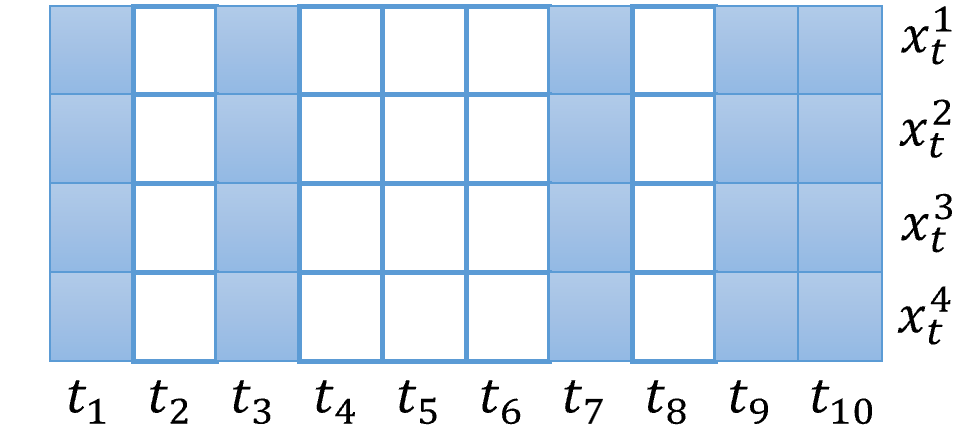}
        \label{fig:sync-app}
	}
	\subfigure[Asyn-MTS data structure]{
		\includegraphics[width=0.9\linewidth]{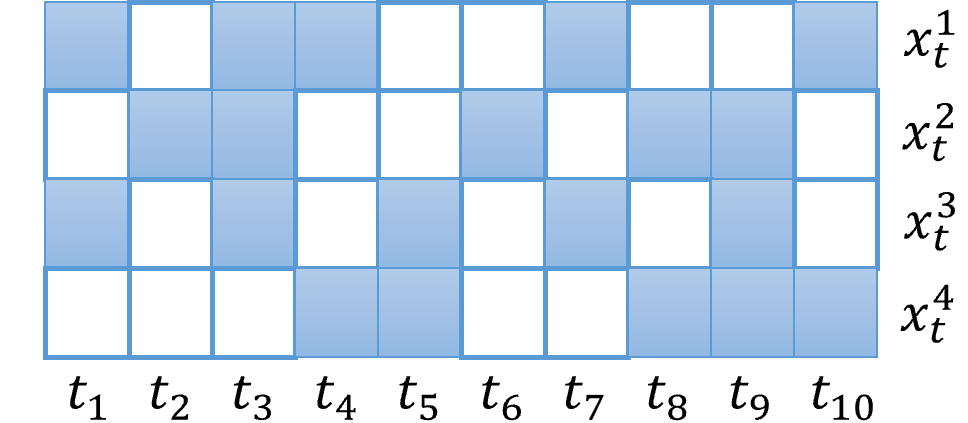}
        \label{fig:async-app}
	}	
	\caption{The data structure of Syn-MTS and Asyn-MTS dataset. \label{fig:strucure}}  
\end{figure}

\begin{table*}[tbh]
	\centering
	\caption{\textbf{Physical Activities of Human Body (MuJoCo): 25\% missingness} \label{Sim_25}}
	\begin{tabular}{@{}cccccccc@{}}
		\toprule
		model   & $\mathrm{CRPS}$  & $\mathrm{CRPS}_{\text {sum }}$  & $\mathrm{CS}$ &  model        & $\mathrm{CRPS}$    & $\mathrm{CRPS}_{\text {sum }}$ & $\mathrm{CS}$\\ \midrule
		\multicolumn{8}{c}{\textit{Syn-MTS}}                                                                                            \\
		\midrule      
		GRUODE  & 0.1380 $\pm$ 0.0213 & 0.7012 $\pm$ 0.0729 & 0.0125 $\pm$ 0.0009 & RFN-GRUODE  & 0.1244 $\pm$ 0.0239 & 0.5288 $\pm$ 0.0313 & 	0.0055 $\pm$ 0.0014   \\
		ODELSTM & 0.1138 $\pm$ 0.0049 & 0.5989 $\pm$ 0.0490 & 0.0085 $\pm$ 0.0013 & RFN-ODELSTM & 0.0952 $\pm$ 0.0010 & 0.4360 $\pm$ 0.0091 & 0.0055 $\pm$ 0.0010   \\
		ODERNN  & 0.1351 $\pm$ 0.0272 & 0.6981 $\pm$ 0.0416 & 0.0120 $\pm$ 0.0015 & RFN-ODERNN  & 0.1049 $\pm$ 0.0052 & 0.4627 $\pm$ 0.0173 & 0.0058 $\pm$ 0.0008   \\
		GRU-D 	& 0.1163 $\pm$ 0.0064 & 0.6078 $\pm$ 0.0509 & 0.0111 $\pm$ 0.0007 & RFN-GRU-D   & 0.1141 $\pm$ 0.0034 & 0.5094 $\pm$ 0.0186 & 0.0055 $\pm$ 0.0012   \\
		\midrule
		\multicolumn{8}{c}{\textit{Asyn-MTS}}                                                                                                  \\
		\midrule 
		GRUODE  & 0.1462 $\pm$ 0.0022 &	0.6845 $\pm$ 0.0200	& 0.0009 $\pm$ 0.0001 & RFN-GRUODE  & 0.1391 $\pm$ 0.0033 & 0.6337 $\pm$ 0.0173 & 0.0008 $\pm$ 0.0001    \\
		ODELSTM & 0.1335 $\pm$ 0.0021 &	0.6131 $\pm$ 0.0168	& 0.0010 $\pm$ 0.0003 & RFN-ODELSTM & 0.1175 $\pm$ 0.0032 & 0.5230 $\pm$ 0.0121 & 0.0007 $\pm$ 0.0001   \\
		ODERNN  & 0.1368 $\pm$ 0.0029 & 0.6404 $\pm$ 0.0249	& 0.0012 $\pm$ 0.0004 & RFN-ODERNN  & 0.1170 $\pm$ 0.0014 & 0.5283 $\pm$ 0.0068 & 0.0007 $\pm$ 0.0001     \\
		GRU-D   & 0.1332 $\pm$ 0.0122 & 0.6232 $\pm$ 0.0116	& 0.0009 $\pm$ 0.0001 & RFN-GRU-D   & 0.1152 $\pm$ 0.0050 & 0.5175 $\pm$ 0.0295 & 0.0007 $\pm$ 0.0001    \\ \bottomrule
	\end{tabular}
\end{table*}

\begin{table*}[tbh]
	\centering
	\caption{\textbf{Physical Activities of Human Body (MuJoCo). 50\% missingness} 	\label{Sim_50}}
	\begin{tabular}{@{}cccccccc@{}}
		\toprule
		model   & $\mathrm{CRPS}$  & $\mathrm{CRPS}_{\text {sum }}$  & $\mathrm{CS}$ & model        & $\mathrm{CRPS}$    & $\mathrm{CRPS}_{\text {sum }}$ & $\mathrm{CS}$ \\ \midrule
		\multicolumn{8}{c}{\textit{Syn-MTS}}                                                                                                 \\ \midrule
		GRUODE  & 0.2198 $\pm$ 0.0035 & 1.0763 $\pm$ 0.0352 & 0.0117 $\pm$ 0.0022 & RFN-GRUODE  & 0.1858 $\pm$ 0.0037 & 0.6790 $\pm$ 0.0300 & 0.0097 $\pm$ 0.0033    \\
		ODELSTM & 0.2256 $\pm$ 0.0015 & 1.1402 $\pm$ 0.0298 & 0.0117 $\pm$ 0.0026 & RFN-ODELSTM & 0.1736 $\pm$ 0.0017 & 0.7103 $\pm$ 0.0504 & 0.0060 $\pm$ 0.0010    \\
		ODERNN  & 0.2156 $\pm$ 0.0042 & 1.0450 $\pm$ 0.0393 & 0.0143 $\pm$ 0.0016 & RFN-ODERNN  & 0.1747 $\pm$ 0.0019 & 0.6467 $\pm$ 0.0340 & 0.0087 $\pm$ 0.0025  \\
		GRU-D   & 0.2224 $\pm$ 0.0032 & 1.1118 $\pm$ 0.0177 & 0.0119 $\pm$ 0.0014 & RFN-GRU-D   & 0.1905 $\pm$ 0.0010 & 0.6748 $\pm$ 0.0093 & 0.0084 $\pm$ 0.0007   \\ \midrule
		\multicolumn{8}{c}{\textit{Asyn-MTS}}                                                                                                  \\ \midrule
		GRUODE  & 0.2695 $\pm$ 0.0054 & 1.0661 $\pm$ 0.0249 & 0.0024 $\pm$ 0.0001 & RFN-GRUODE  & 0.1970 $\pm$ 0.0052 & 0.7306 $\pm$ 0.0185 & 0.0007 $\pm$ 0.0002  \\
		ODELSTM & 0.2728 $\pm$ 0.0057 & 1.0834 $\pm$ 0.0292 & 0.0013 $\pm$ 0.0002 & RFN-ODELSTM & 0.1733 $\pm$ 0.0048 & 0.6318 $\pm$ 0.0155 & 0.0007 $\pm$ 0.0002  \\
		ODERNN  & 0.2696 $\pm$ 0.0036 & 1.0748 $\pm$ 0.0194 & 0.0020 $\pm$ 0.0005 & RFN-ODERNN  & 0.1667 $\pm$ 0.0020 & 0.6277 $\pm$ 0.0086 & 0.0012 $\pm$ 0.0004   \\
		GRU-D   & 0.2261 $\pm$ 0.0035 & 0.9078 $\pm$ 0.0255 & 0.0116 $\pm$ 0.0008 & RFN-GRU-D   & 0.1709 $\pm$ 0.0035 & 0.6409 $\pm$ 0.0151 & 0.0058 $\pm$ 0.0009   \\ \bottomrule
	\end{tabular}
\end{table*}

\begin{table*}[tbh]
	\centering
	\caption{\textbf{Physical Activities of Human Body (MuJoCo): 75\% missingness} \label{Sim_75}}
	\begin{tabular}{@{}cccccccc@{}}
		\toprule
		model   & $\mathrm{CRPS}$  & $\mathrm{CRPS}_{\text {sum }}$  & $\mathrm{CS}$ &  model        & $\mathrm{CRPS}$    & $\mathrm{CRPS}_{\text {sum }}$ & $\mathrm{CS}$\\ \midrule
		\multicolumn{8}{c}{\textit{Syn-MTS}}                                                                                            \\
		\midrule      
		GRUODE  & 0.3342 $\pm$ 0.0160 & 1.7209 $\pm$ 0.0884 & 0.0029 $\pm$ 0.0005 & RFN-GRUODE  & 0.2802 $\pm$ 0.0081 & 1.2127 $\pm$ 0.0388 & 0.0110 $\pm$ 0.0002   \\
		ODELSTM & 0.2937 $\pm$ 0.0038 & 1.5077 $\pm$ 0.0489 & 0.0025 $\pm$ 0.0004 & RFN-ODELSTM & 0.2376 $\pm$ 0.0024 & 0.9905 $\pm$ 0.0783 & 0.0100 $\pm$ 0.0004   \\
		ODERNN  & 0.3067 $\pm$ 0.0133 & 1.6496 $\pm$ 0.0233 & 0.0030 $\pm$ 0.0002 & RFN-ODERNN  & 0.2465 $\pm$ 0.0046 & 1.0736 $\pm$ 0.0240 & 0.0140 $\pm$ 0.0005   \\
		GRU-D 	& 0.3341 $\pm$ 0.0024 & 1.5466 $\pm$ 0.0153 & 0.0038 $\pm$ 0.0002 & RFN-GRU-D   & 0.3268 $\pm$ 0.0071 & 1.3790 $\pm$ 0.0183 & 0.0150 $\pm$ 0.0020   \\
		\midrule
		\multicolumn{8}{c}{\textit{Asyn-MTS}}                                                                                                  \\
		\midrule 
		GRUODE  & 0.3308 $\pm$ 0.0045 & 0.8151 $\pm$ 0.0138 & 0.0038 $\pm$ 0.0005 & RFN-GRUODE  & 0.3083 $\pm$ 0.0094 & 0.7629 $\pm$ 0.0230 & 0.0024 $\pm$ 0.0004    \\
		ODELSTM & 0.3066 $\pm$ 0.0053 & 0.7556 $\pm$ 0.0137 & 0.0014 $\pm$ 0.0006 & RFN-ODELSTM & 0.2859 $\pm$ 0.0109 & 0.6965 $\pm$ 0.0227 & 0.0009 $\pm$ 0.0001   \\
		ODERNN  & 0.3071 $\pm$ 0.0042 & 0.7617 $\pm$ 0.0099 & 0.0020 $\pm$ 0.0003 & RFN-ODERNN  & 0.2581 $\pm$ 0.0072 & 0.6401 $\pm$ 0.0184 & 0.0008 $\pm$ 0.0003     \\
		GRU-D   & 0.3082 $\pm$ 0.0054 & 0.8184 $\pm$ 0.0150 & 0.0040 $\pm$ 0.0005 & RFN-GRU-D   & 0.2833 $\pm$ 0.0237 & 0.7086 $\pm$ 0.0607 & 0.0028 $\pm$ 0.0006    \\ \bottomrule
	\end{tabular}
\end{table*}

\begin{table*}[tbh]
\centering
\caption{The efficiency experiment results of baselines and RFN across all datasets.}
\label{efficiency}
\resizebox{\textwidth}{!}{
\begin{tabular}{@{}cccccccccc@{}}
\toprule
\textbf{Model} & \textbf{\begin{tabular}[c]{@{}c@{}}Training Time \\      (sec/epoch)\end{tabular}} & \textbf{\begin{tabular}[c]{@{}c@{}}Memory \\      (MB)\end{tabular}} & \textbf{\begin{tabular}[c]{@{}c@{}}Training Time \\      (sec/epoch)\end{tabular}} & \textbf{\begin{tabular}[c]{@{}c@{}}Memory \\      (MB)\end{tabular}} & \textbf{Model} & \textbf{\begin{tabular}[c]{@{}c@{}}Training Time \\      (sec/epoch)\end{tabular}} & \textbf{\begin{tabular}[c]{@{}c@{}}Memory\\      (MB)\end{tabular}} & \textbf{\begin{tabular}[c]{@{}c@{}}Training Time\\       (sec/epoch)\end{tabular}} & \textbf{\begin{tabular}[c]{@{}c@{}}Memory\\      (MB)\end{tabular}} \\ \midrule
 & \multicolumn{9}{c}{\textbf{Simulation Dataset   (Geometric Brownian Motions)}} \\ \midrule 
 & \multicolumn{2}{c|}{Syn-MTS} & \multicolumn{2}{c}{Asyn-MTS} &  & \multicolumn{2}{c|}{Syn-MTS} & \multicolumn{2}{c}{Asyn-MTS} \\ \cmidrule(lr){1-5} \cmidrule(l){6-10} 
GRUODE & 20.14 & 1972.82 & 20.87 & 1981.69 & RFN-GRUODE & 29.22 & 1916.84 & 28.72 & 1922.43 \\
ODELSTM & 18.30 & 1956.00 & 17.82 & 1956.69 & RFN-ODELSTM & 26.50 & 1918.72 & 25.95 & 1922.25 \\
ODERNN & 17.86 & 1954.82 & 18.27 & 1959.47 & RFN-ODERNN & 26.39 & 1917.24 & 26.14 & 1921.19 \\
GRU-D & 11.19 & 1933.66 & 11.66 & 1938.66 & RFN-GRU-D & 12.86 & 1933.66 & 12.66 & 1940.12 \\ \midrule 
 & \multicolumn{9}{c}{\textbf{Physical Activities   Dataset (MuJoCo)}} \\ \midrule 
 & \multicolumn{2}{c|}{Syn-MTS} & \multicolumn{2}{c}{Asyn-MTS} &  & \multicolumn{2}{c|}{Syn-MTS} & \multicolumn{2}{c}{Asyn-MTS} \\
 \cmidrule(lr){1-5} \cmidrule(l){6-10} 
GRUODE & 67.54 & 2453.46 & 69.65 & 2546.10 & RFN-GRUODE & 101.26 & 2357.99 & 109.00 & 2458.82 \\
ODELSTM & 55.09 & 2418.16 & 58.55 & 2517.65 & RFN-ODELSTM & 89.55 & 2358.58 & 101.98 & 2461.67 \\
ODERNN & 54.39 & 2421.75 & 54.08 & 2511.42 & RFN-ODERNN & 87.35 & 2355.94 & 91.86 & 2459.33 \\
GRU-D & 24.77 & 2369.32 & 24.60 & 2462.26 & RFN-GRU-D & 27.62 & 2374.85 & 33.54 & 2473.43 \\ \midrule 
 & \multicolumn{9}{c}{\textbf{Climate Records Dataset (USHCN)}} \\ \midrule 
 & \multicolumn{2}{c|}{Syn-MTS} & \multicolumn{2}{c}{Asyn-MTS} &  & \multicolumn{2}{c|}{Syn-MTS} & \multicolumn{2}{c}{Asyn-MTS} \\
 \cmidrule(lr){1-5} \cmidrule(l){6-10} 
GRUODE & / & / & 39.84 & 2051.58 & RFN-GRUODE & / & / & 62.67 & 1981.54 \\
ODELSTM & / & / & 35.24 & 2032.52 & RFN-ODELSTM & / & / & 58.39 & 1982.79 \\
ODERNN & / & / & 34.87 & 2029.41 & RFN-ODERNN & / & / & 50.80 & 1979.88 \\
GRU-D & / & / & 17.66 & 2000.36 & RFN-GRU-D & / & / & 20.39 & 1999.87 \\ \midrule 
 & \multicolumn{9}{c}{\textbf{Stock Transactions   Dataset (NASDAQ)}} \\ \midrule 
 & \multicolumn{2}{c|}{Syn-MTS} & \multicolumn{2}{c}{Asyn-MTS} &  & \multicolumn{2}{c|}{Syn-MTS} & \multicolumn{2}{c}{Asyn-MTS} \\
 \cmidrule(lr){1-5} \cmidrule(l){6-10} 
GRUODE & / & / & 25.23 & 1960.33 & RFN-GRUODE & / & / & 39.15 & 1918.46 \\
ODELSTM & / & / & 23.16 & 1946.80 & RFN-ODELSTM & / & / & 34.60 & 1917.58 \\
ODERNN & / & / & 22.69 & 1945.63 & RFN-ODERNN & / & / & 32.45 & 1918.58 \\
GRU-D & / & / & 12.84 & 1933.28 & RFN-GRU-D & / & / & 15.20 & 1936.01 \\ \bottomrule
\end{tabular}
}
\end{table*}

\section{Algorithms of Training and Sampling of Syn-MTS and Asyn-MTS} \label{Appendix:Algorithms}

\section{Computation Infrastructure}
All studies and experiments are run on Dell Precision 7920 Workstations with Intel(R) Xeon(R) Gold 6256 CPU at 3.60GHz, and three sets of NVIDIA Quadro GV100 GPUs. All models are implemented in Python 3.8. The versions of the main packages of our code are: Pytorch 1.8.1+cu102, torchdiffeq: 0.2.2, Sklearn: 0.23.2, Numpy: 1.19.2, Pandas: 1.1.3, Matplotlib: 3.3.2. 

\onecolumn

\begin{figure*}[t]
	\centering
	\subfigure[Syn-MTS with $0.1 \le q \le 0.9$]{
		\includegraphics[width=0.23\linewidth]{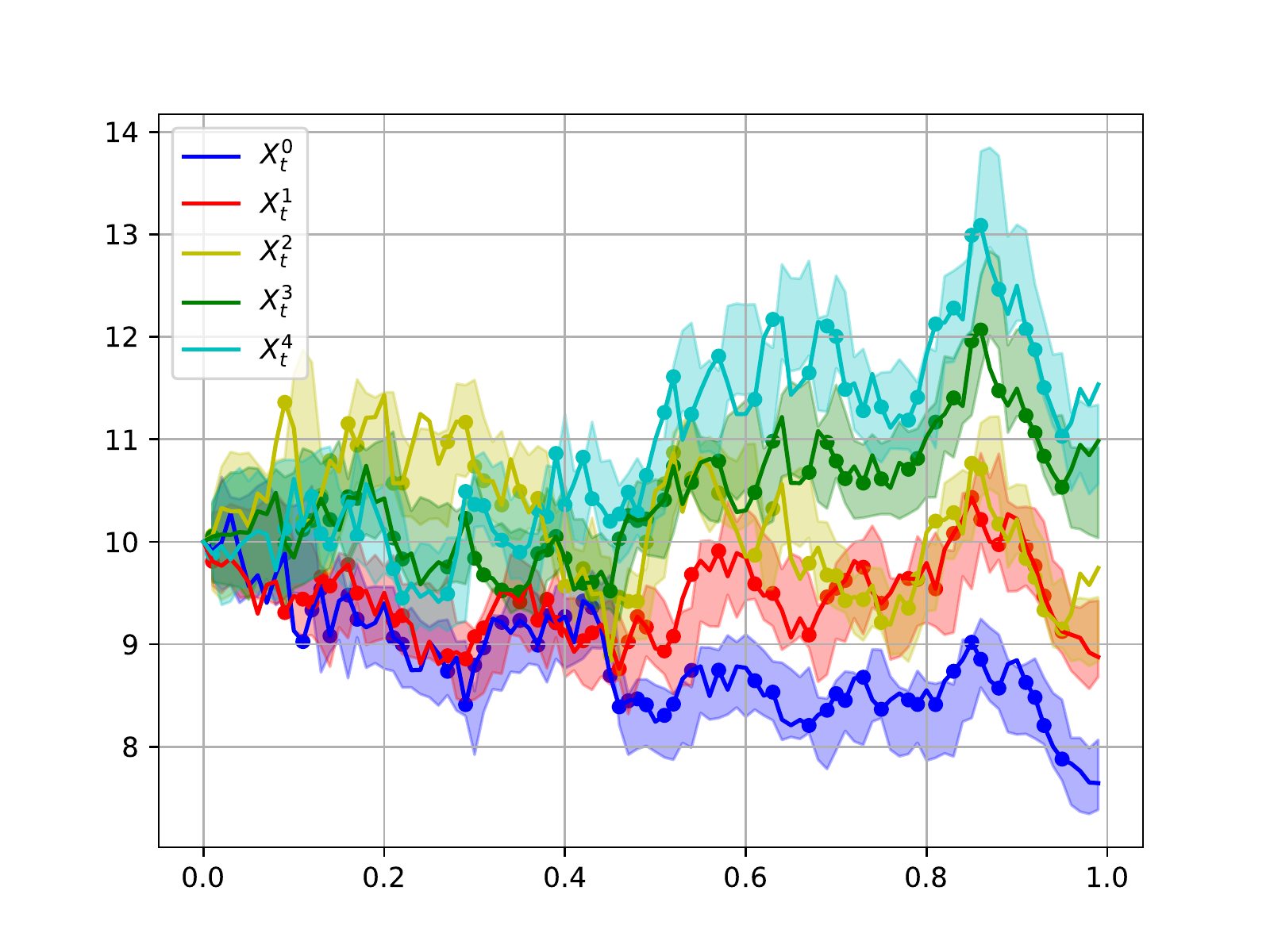}
	}
	\subfigure[Asyn-MTS with $0.1 \le q \le 0.9$]{
		\includegraphics[width=0.23\linewidth]{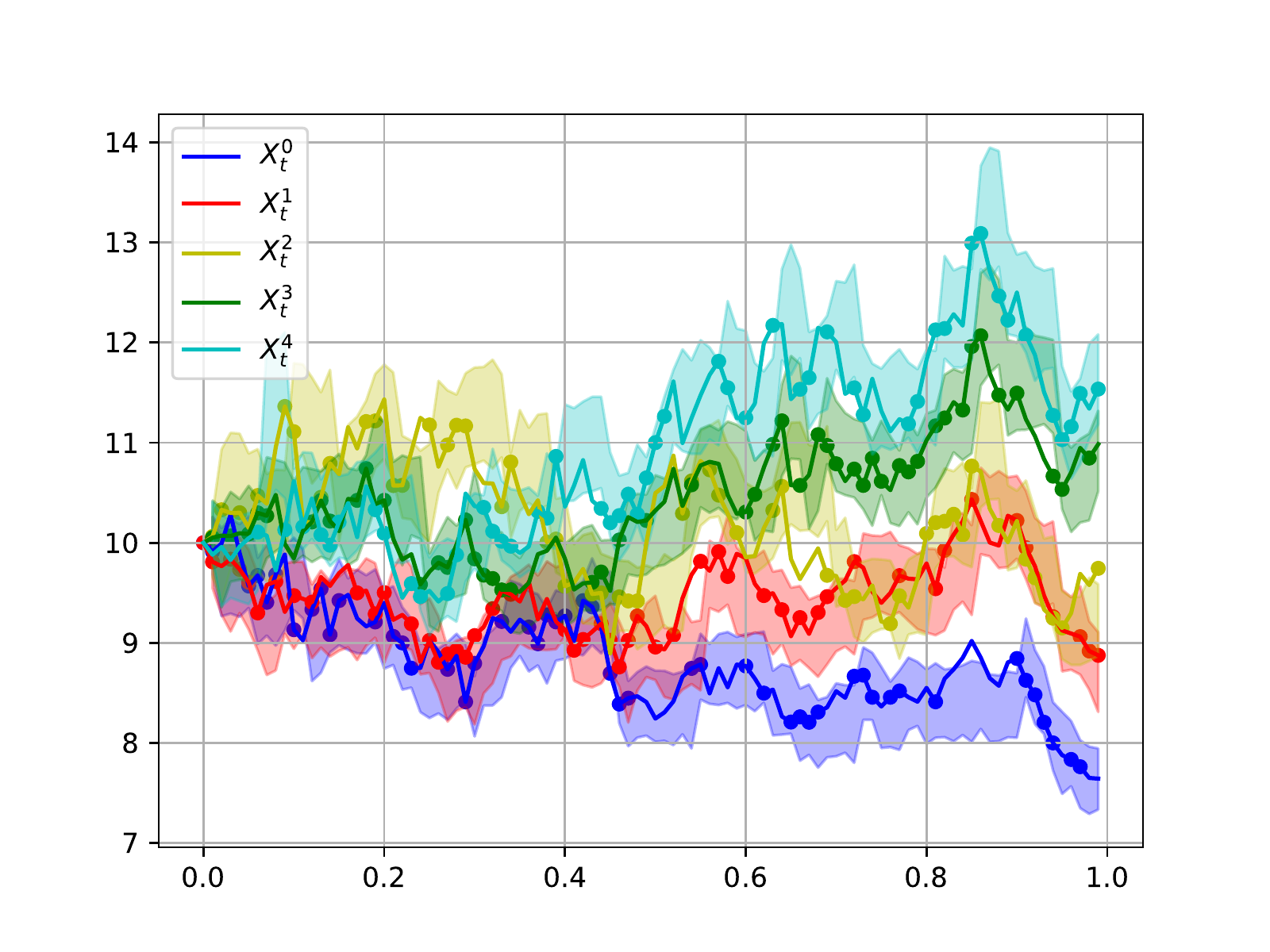}
	}
	\subfigure[Syn-MTS with $0.2 \le q \le 0.8$]{
		\includegraphics[width=0.23\linewidth]{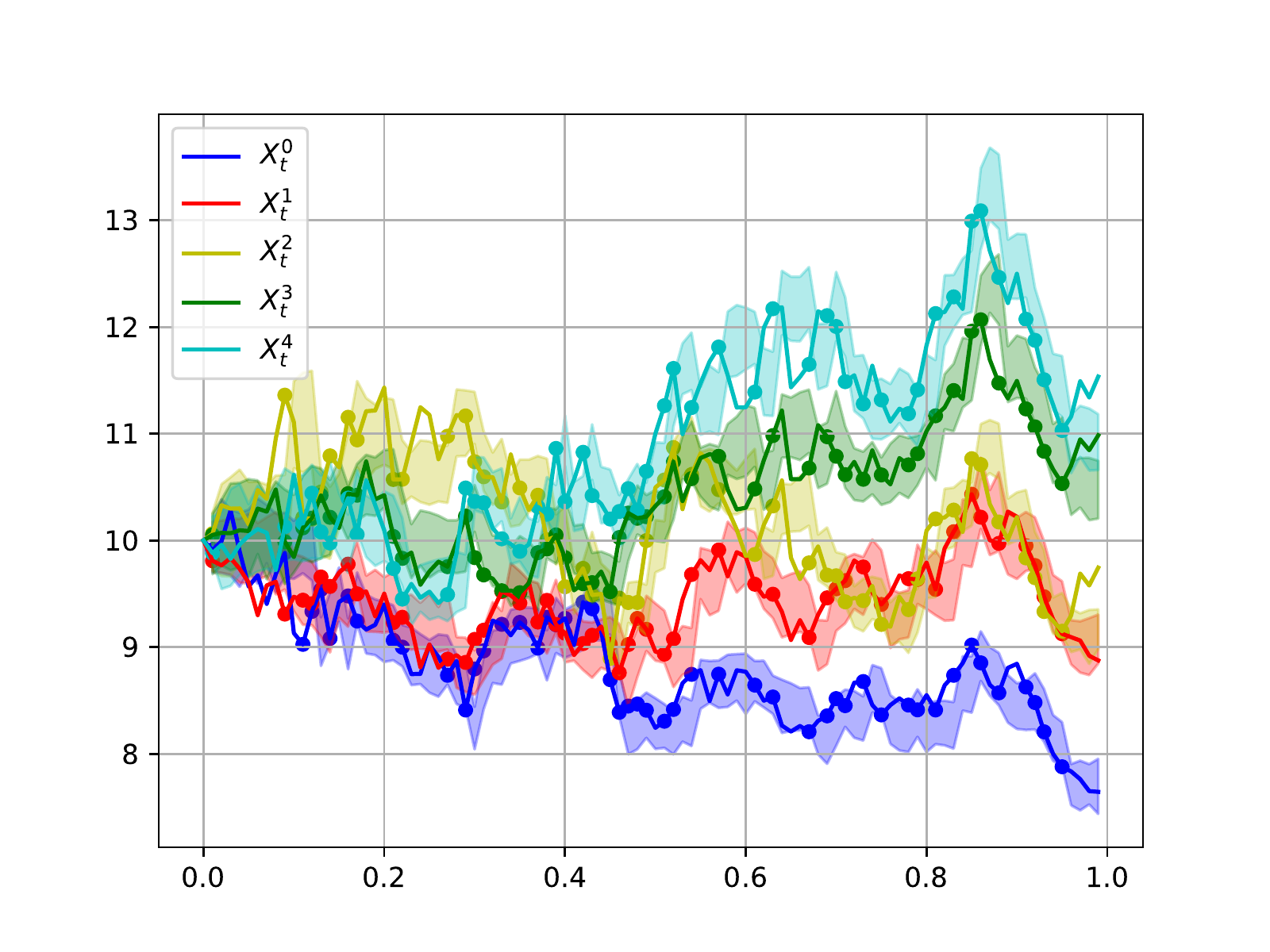}
	}
	\subfigure[Asyn-MTS with $0.2 \le q \le 0.8$]{
		\includegraphics[width=0.23\linewidth]{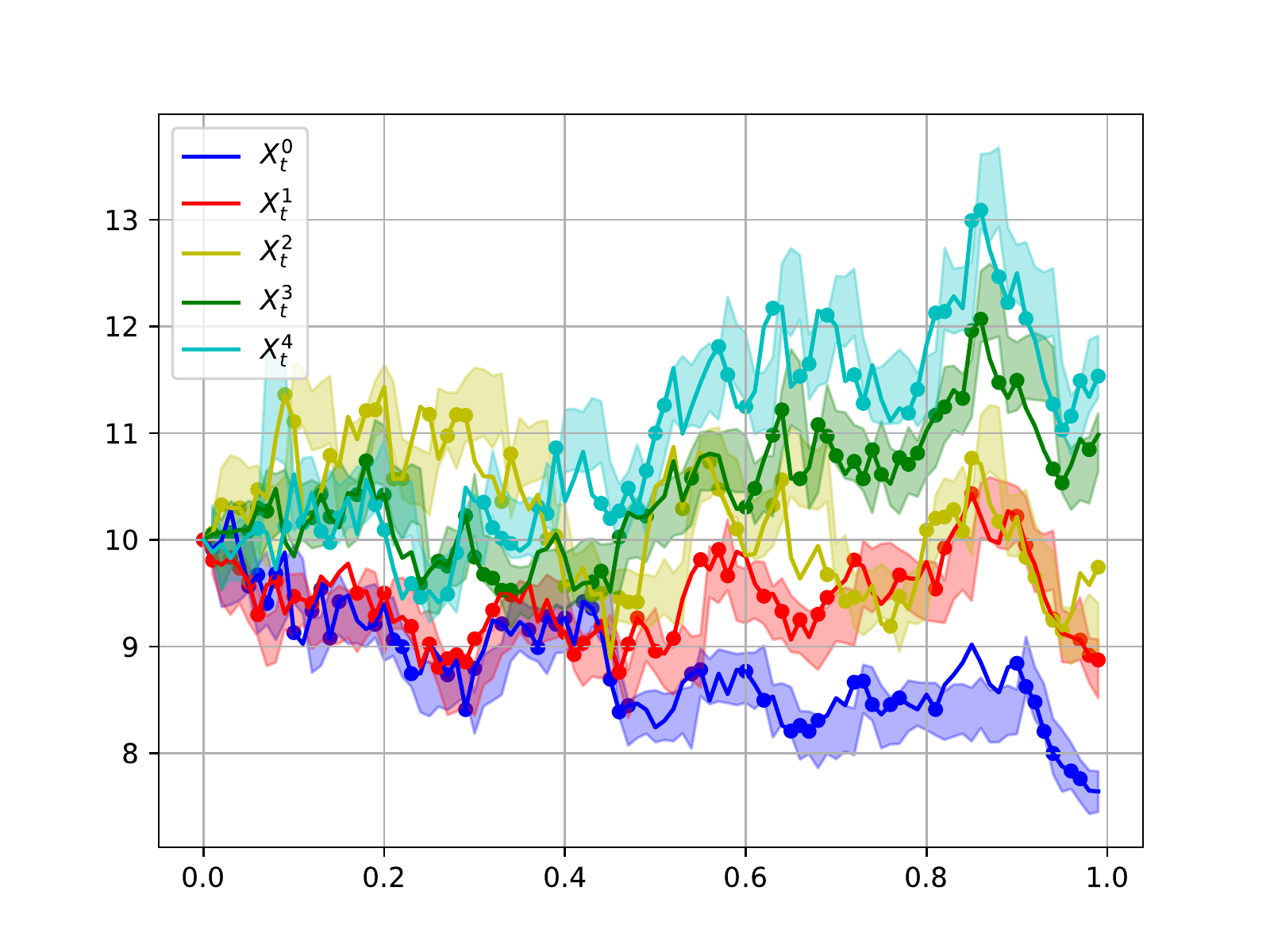}
	}
	\subfigure[Syn-MTS with $0.3 \le q \le 0.7$]{
		\includegraphics[width=0.23\linewidth]{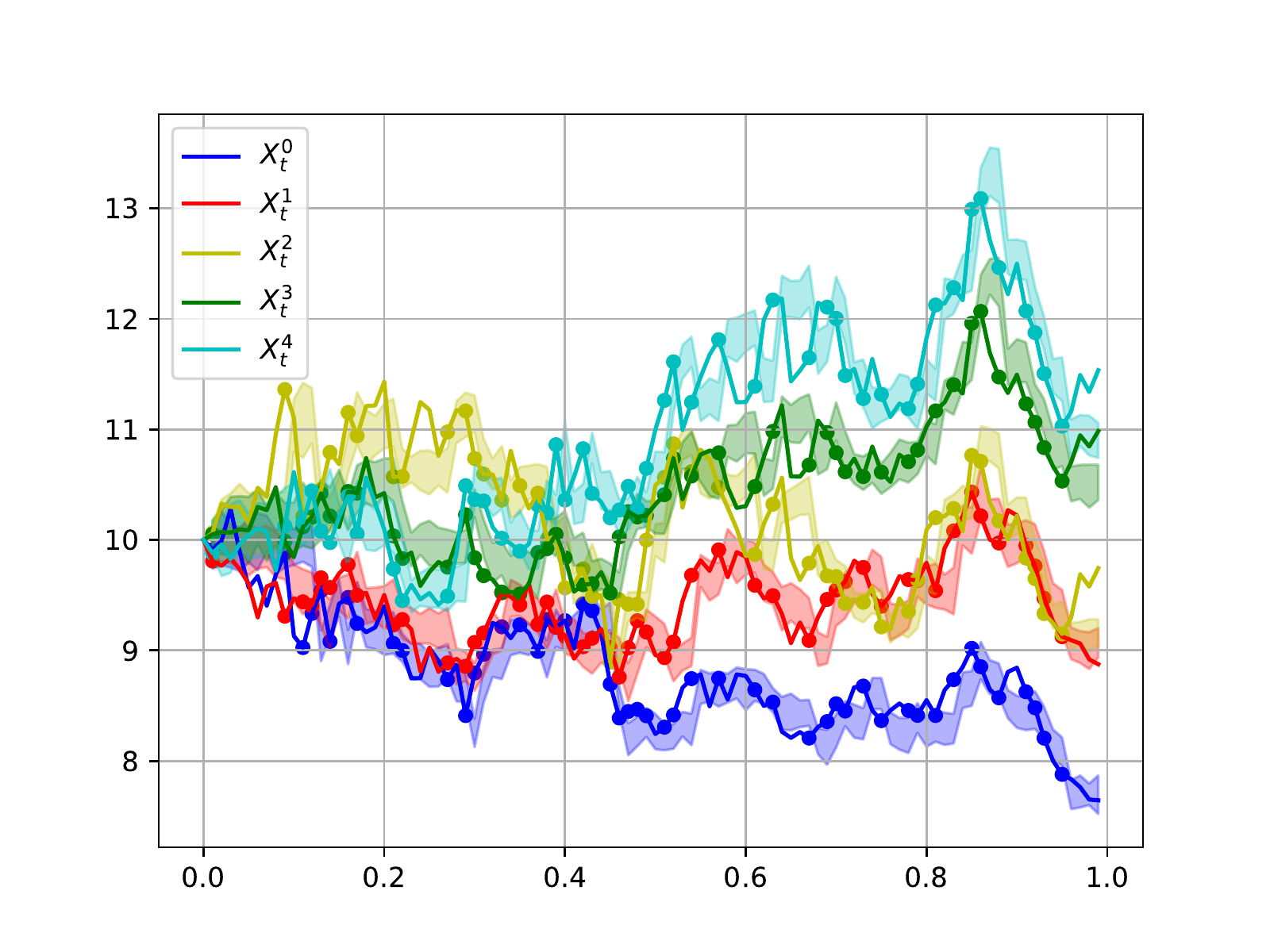}
	}
	\subfigure[Asyn-MTS with $0.3 \le q \le 0.7$]{
		\includegraphics[width=0.23\linewidth]{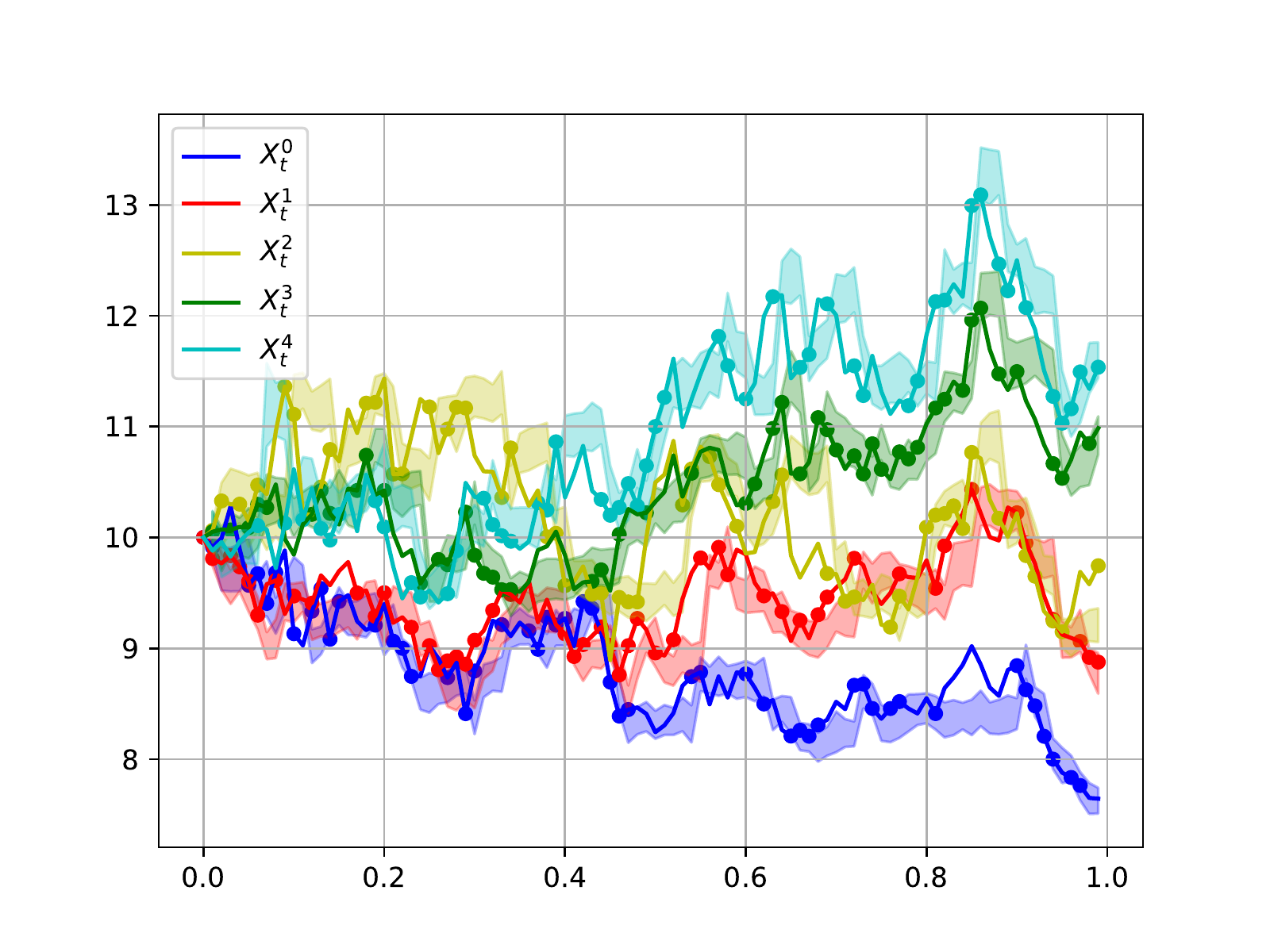}
	}
	\subfigure[Syn-MTS with $0.4 \le q \le 0.6$]{
		\includegraphics[width=0.23\linewidth]{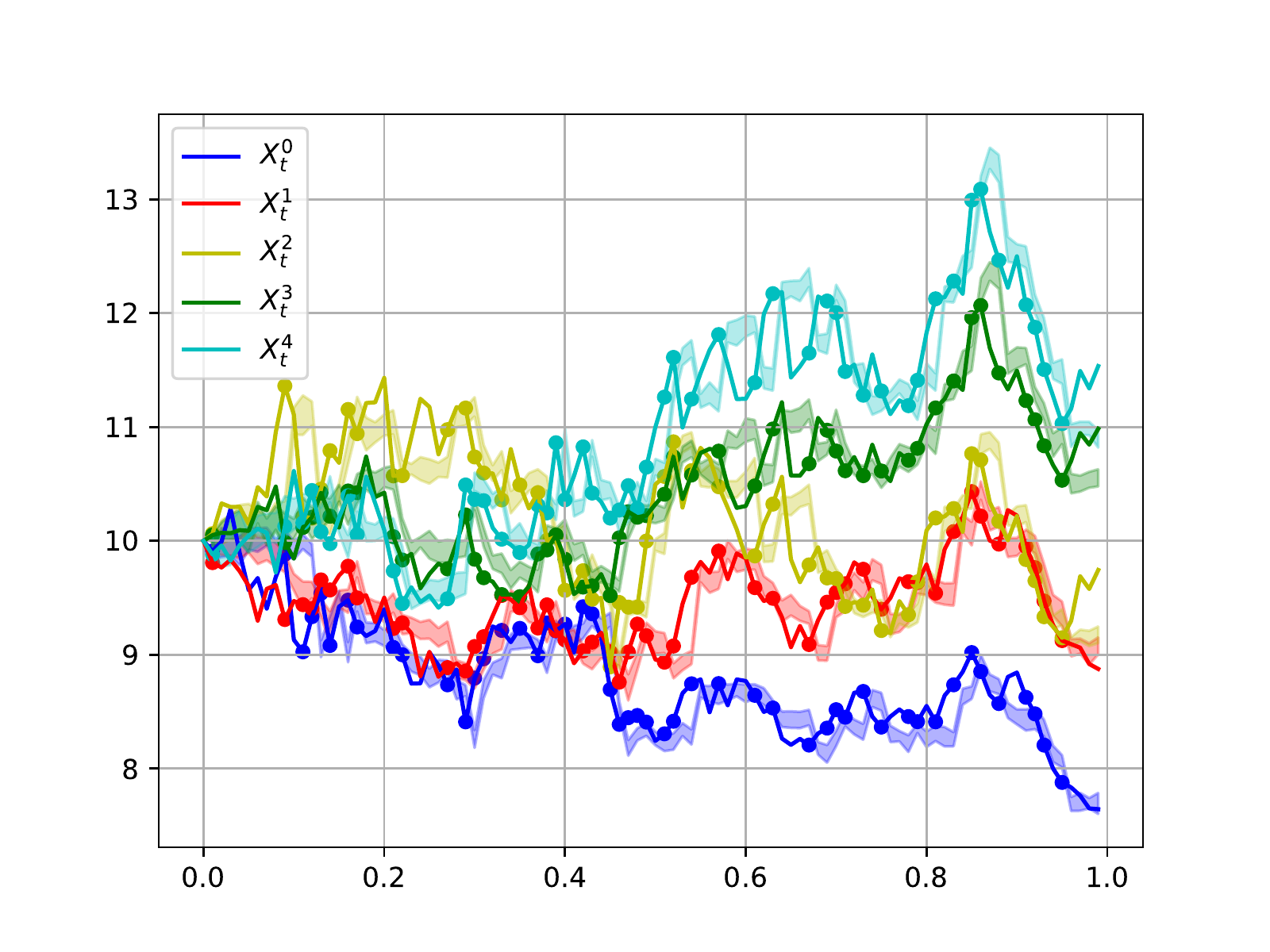}
	}
	\subfigure[Asyn-MTS with $0.4 \le q \le 0.6$]{
		\includegraphics[width=0.23\linewidth]{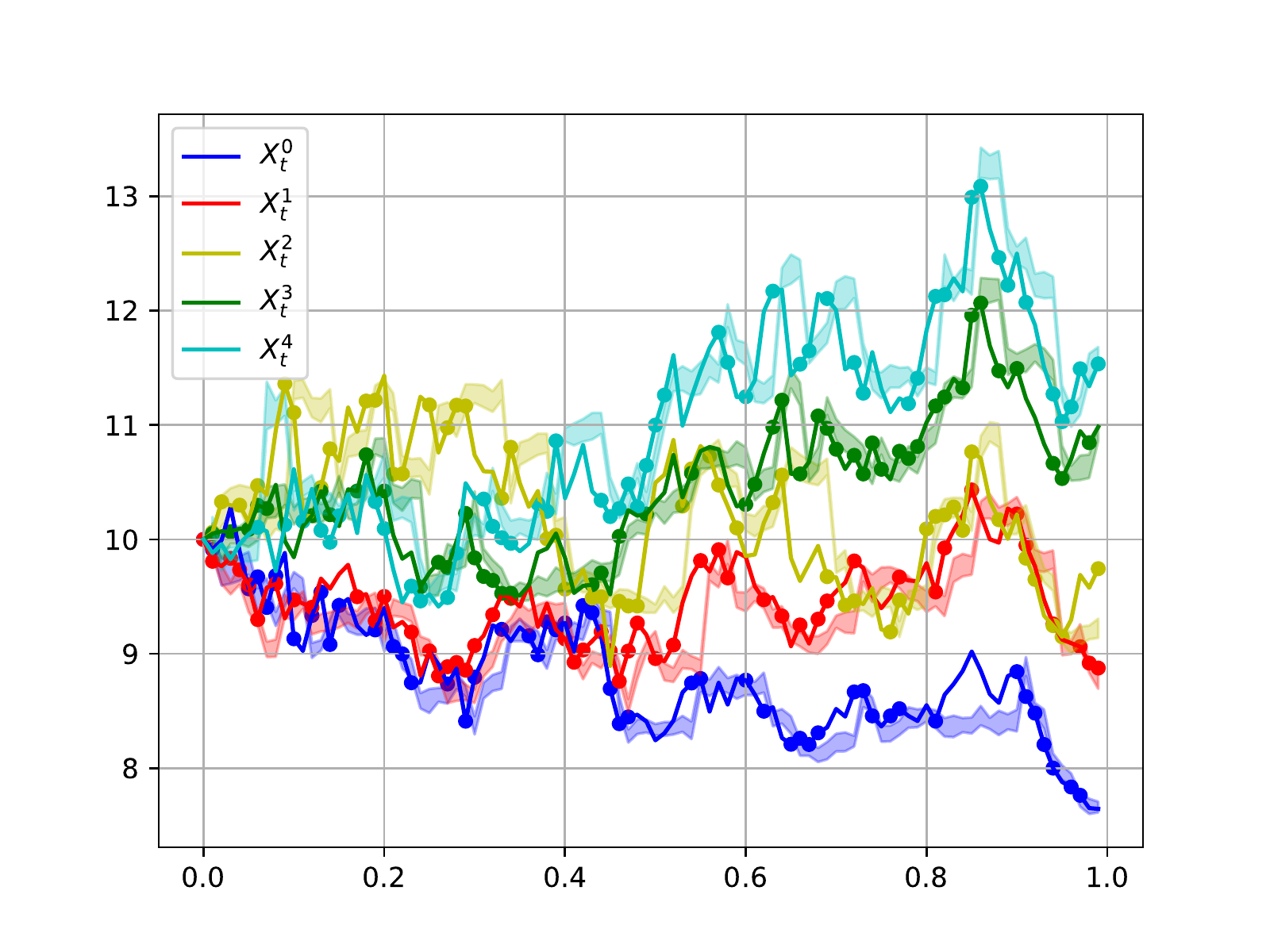}
	}
	
	\caption{Predicted ranges of quantile (shaded areas) of the simulated Syn-MTS and Asyn-MTS at various probability intervals $q$.\label{fig:more res}}
\end{figure*}

	\begin{algorithm}[t]
		\centering
		\caption{Algorithm for training and sampling process of Syn-MTS (example of RFN-GRUODE)}\label{algorithm syn-MTS}
		\small
		\begin{algorithmic}[1]
			\State \textbf{Training (Syn-MTS):} 
			\State \textbf{Input}: Observations: $\{\mathbf{x}_{i}\}_{i=1}^N$; Masks: $\{\mathbf{m}_{i}\}_{i=1}^N$;  Observed time points: $\{\mathrm{t}_i=[t_1,\cdots,t_{K_i}]\}_{i=1}^N$; Flow time interval $[s_1, s_0]$
			\State \textbf{Initialize:} $\text{time}=0$, $\mathrm{h}_0$, and all trainable parameters $\zeta^c, \zeta^u, \psi, \theta$
			\State \textbf{for} $i=1$ \textbf{to} $N$ \textbf{do} \textcolor{blue}{\;\% the index $i$ is omitted in the loop for notation simplicity.}
			\State \indent \textbf{for} $k=1$ \textbf{to} $K_i$ \textbf{do}
			\State \indent \indent $\mathrm{h}(t_{k-})=\text{Continuous Updating}(\mathrm{h}(t_{(k-1)+}), \text{time}, t_k; \zeta^c)$ \textcolor{blue}{\;\% marginal hidden state evolves to $t_k$}
			\State \indent \indent $\mu_{t_k}, \Sigma_{t_k}=\textbf{MLP}(\mathrm{h}_{t_{k-}}; \psi)$ \textcolor{blue}{\;\%  predict and store the parameters of base distribution at $t_k$}
			\State \indent \indent $\text{time}=t_{k}$
			\State \indent \indent $\mathrm{h}(t_{k+})=\text{Discrete Updating}(\mathrm{h}(t_{k-}), \mathrm{x}_{t_k}, \mathrm{m}_{t_k}; \zeta^u)$ 	\textcolor{blue}{\;\% marginal memory updates at $t_k$}
			\State \indent \textbf{end for}	
			\State \indent \textcolor{blue}{\% concatenate observations and parameters at all time points and transform them at the same time by the flow model }
			\State \indent Set $t\in \{t_1, \cdots, t_{K_i}: \mathrm{m}_{t_k}=1, k\in\{1, \cdots, K_i\}\}$ \textcolor{blue}{\;\% all the observed time points}
			\State \indent $\mathrm{z}_t=\mathrm{x}_{t}+\int_{s_1}^{s_0}\mathrm{f}(\mathrm{z}(s), s, \mathrm{h}_{t-}; \theta)ds$ \textcolor{blue}{\;	\% transform observations from data distribution to base distribution (from $s_1$ to $s_0$)}
			\State \indent $\log p(\mathrm{z}_t|\mathrm{h}_{t-})=\log p(\mathrm{z}_t; \mu_{t}, \Sigma_{t})$ 
			\textcolor{blue}{\; \% compute the log-likelihood of transformed observations in base distribution}
			\State \indent$\log p(\mathrm{x}_t|\mathrm{h}_{t-})=\log p(\mathrm{z}_t|\mathrm{h}_{t-})+\int_{s_1}^{s_0}\operatorname{Tr}[\partial_{\mathrm{z}(s)} \mathrm{f}]ds$ \textcolor{blue}{\;	\% compute the log-likelihood of observed data points}

			\State \indent $\mathcal{L}^{i}_{\text{Syn-MTS}}=-\log p(\mathrm{x}_t|\mathrm{h}_{t_{-}})$ 
			\textcolor{blue}{\; \% compute the loss of sample $i$ by equation \eqref{Eq17}}
			\State \textbf{end for}	
			\State $\mathcal{L}_{\text{Syn-MTS}}=\frac{1}{N}\sum_{i=1}^{N} \mathcal{L}^i_{\text{Syn-MTS}}$ \textcolor{blue}{\;\% compute the total loss by averaging the loss of all the samples}
			\State $\zeta^c, \zeta^u, \psi, \theta \leftarrow \arg \min_{\zeta^c, \zeta^u, \psi, \theta} \mathcal{L}_{\text{Syn-MTS}}$ \textcolor{blue}{\;\% optimize the training parameters via stochastic gradient descent algorithm}\\
			\hrulefill
			\State \textbf{Sampling (Syn-MTS):}
			\State {\bfseries Input:} Observations $\mathbf{x}_{i}$; Masks $\mathbf{m}_{i}$;  Observed time points $\mathrm{t}_i=[t_1,\cdots,t_{K_i}]$; Flow time interval $[s_0, s_1]$; Trained model $\mathrm{f}$
			\State {\bfseries Initialize:} $\text{time}=0$, $\mathrm{h}_0$
			\State \textbf{for} $k=1$ \textbf{to} $K_i$ \textbf{do}	
			\State \indent $\mathrm{h}(t_{k-})=\text{Continuous Updating}(\mathrm{h}(t_{(k-1)+}), \text{time}, t_k)$ \textcolor{blue}{\;\% marginal hidden state evolves to $t_k$} 	
			\State \indent {\small $\mu_{t_k}, \Sigma_{t_k}=\textbf{MLP}(\mathrm{h}_{t_{k-}};\psi)$} \textcolor{blue}{\;\%  predict the parameters of base distribution at $t_{k}$}
			\State \indent $\mathrm{z}_{t_k} \sim \mathcal{N}(\mu_{t_k}, \Sigma_{t_k})$ \textcolor{blue}{\;\% sampling samples in predicted base distribution}
			\State \indent $\mathrm{x}_{t_k}=\mathrm{z}_{t_k}+ \int_{s_0}^{s_1}\mathrm{f}(\mathrm{z}(s), s, \mathrm{h}_{t_{k-}};\theta)ds$ 
			\textcolor{blue}{\% transform the samples from base distribution to data distribution (from $s_0$ to $s_1$)}
			\State \indent $\text{time}=t_k$
			\State \indent $\mathrm{h}(t_{k+})=\text{Discrete Updating}(\mathrm{h}(t_{k-}), \mathrm{x}_{t_k}, \mathrm{m}_{t_k})$	\textcolor{blue}{\;\% marginal memory updates at $t_k$}
			\State \textbf{end for}
			\State \textbf{return} $\mathrm{x}_{t}$
		\end{algorithmic}
\end{algorithm}

\begin{algorithm}[tbh]
\centering
\caption{Algorithm for training and sampling process of Asyn-MTS (example of RFN-GRUODE)}\label{algorithm Asyn-MTS}
\small
\begin{algorithmic}[1]
	\State \textbf{Training (Asyn-MTS):} 
	\State \textbf{Input:} Observations: $\{\mathbf{x}_{i}\}_{i=1}^N$; Masks: $\{\mathbf{m}_{i}\}_{i=1}^N$;  Observed time points: $\{\mathrm{t}_i=[t_1,\cdots,t_{K_i}]\}_{i=1}^N$; Flow time interval $[s_1, s_0]$
	\State \textbf{Initialize:} $\text{time}=0$, $\mathrm{h}_0$, and all trainable parameters $\zeta^c, \zeta^u, \psi, \theta, \eta$
	\State \textbf{for} $i=1$ \textbf{to} $N$ \textbf{do}  \textcolor{blue}{\;\% the index $i$ is omitted in the loop for notation simplicity.}
	\State \indent \textbf{for} $k=1$ \textbf{to} $K_i$ \textbf{do}
	\State \indent \indent $\mathrm{h}(t_{k-})=\text{Continuous Updating}(\mathrm{h}(t_{(k-1)+}), \text{time}, t_{k}; \zeta^c)$ \textcolor{blue}{\;\% marginal hidden state evolves to $t_k$}
	\State \indent \indent $\mu_{t_k}^{d}, \Sigma_{t_k}^{d}=\textbf{MLP}(\mathrm{h}_{t_{k-}}; \psi_m)$ \textcolor{blue}{\;\%  predict the parameters of individual base distribution at $t_k$}
	\State \indent \indent $\text{time}=t_{k}$        
	\State \indent \indent $\mathrm{h}(t_{k+})=\text{Discrete Updating}(\mathrm{h}(t_{k-}), \mathrm{x}_{t_k}, \mathrm{m}_{i,t_k}; \zeta^u)$ \textcolor{blue}{\;\% marginal memory updates at $t_k$}
	\State \indent \textbf{end for}
	\State \indent \textcolor{blue}{\% concatenate observations and parameters at all time points and transform them at the same time by the flow model }
	\State \indent Set $t\in \{t_1, \cdots, t_{K_i}: \mathrm{m}_{t_k}=1, k\in\{1, \cdots, K_i\}\}$ \textcolor{blue}{\;\% all the observed time points}
	\State \indent $\mathrm{z}_t=\mathrm{x}_{t}+\int_{s_1}^{s_0}\mathrm{f}(\mathrm{z}(s), s, \mathrm{h}_{t_{k-}}, \mathrm{m}_{t_k}; \theta)ds$ \textcolor{blue}{\% transform data distribution to base distribution by equation \eqref{Eq21}}
	\State \indent $\log p(z_t^d|\mathrm{h}_{t_-})=\log p(z_t^d; \mu_{t}^d, \Sigma_{t}^d)$ \textcolor{blue}{\% compute the log-likelihood of observations of individual variable $d$ in base distribution}
	\State  \indent $\log p(\mathrm{x}_t|\mathrm{h}_{t_-})=\sum_{d=1}^D m_{t_k}^d\log p(z_t^d| \mathrm{h}_{t_-})+\int_{s_1}^{s_0}\operatorname{Tr}[\partial_{z^d(s)} \mathrm{f}]ds$ \textcolor{blue}{\;	\% compute the log-likelihood of observed data points}

	\State 	\indent $\mathcal{L}_{\text{Asyn-MTS}}^{i}=-\log p(\mathrm{x}_t|\mathrm{h}_{t_-})$  \textcolor{blue}{\% only compute the negative log-likelihood loss for the observed variables}

	\State \textbf{end for}
	\State $\mathcal{L}_{\text{Asyn-MTS}}=\frac{1}{N}\sum_{i=1}^{N} \mathcal{L}^i_{\text{Asyn-MTS}}$ \textcolor{blue}{\;\% compute the total loss by averaging the loss of all the samples}
	\State $\zeta^c, \zeta^u, \psi, \theta \leftarrow \arg \min_{\zeta^c, \zeta^u, \psi_m, \theta, \eta} \mathcal{L}_{\text{Asyn-MTS}}$ \textcolor{blue}{\;\% optimize the training parameters via stochastic gradient descent algorithm}\\
	\hrulefill
	\State \textbf{Sampling (Asyn-MTS):}
	\State {\bfseries Input:} Observations: $\mathbf{x}_{i}$; Masks: $\mathbf{m}_i$;  Observed time points: $\mathrm{t}_i=[t_1,\cdots,t_{K_i}]$; Flow time interval $[s_0,s_1]$; Trained model $\mathrm{f}$
	\State {\bfseries Initialize:} $\text{time}=0$, $\mathrm{h}_0$
	\State \textbf{for} $k=1$ \textbf{to} $K_i$ \textbf{do}
	\State \indent $\mathrm{h}(t_{k-})=\text{Continuous Updating}(\mathrm{h}(t_{(k-1)+}), \text{time}, t_k; \zeta^c)$ \textcolor{blue}{\;\% marginal hidden state evolves to $t_k$} 		

	\State  \qquad $\mu_{t_k}^d, \Sigma_{t_k}^d=\textbf{MLP}(\mathrm{h}_{t_{k-}})$ \textcolor{blue}{\;\%  predict the parameters of base distribution for variable $d$ at $t_k$}
	\State  \qquad $z_{t_k}^{d} \sim \mathcal{N}(\mu_{t_k}^{d}, \Sigma_{t_k}^{d})$
	\textcolor{blue}{\;\% sampling samples in predicted base distribution}
	\State  \qquad $\mathrm{x}_{t_k}=\mathrm{z}_{t_k}+\int_{s_0}^{s_1}\mathrm{f}(\mathrm{z}_{t_k}(s), s, \mathrm{h}_{t_{k-}}; \theta)ds$ 
	\textcolor{blue}{\;\% transform the samples from base distributions to data distributions}
	\State \indent$\text{time}=t_k$
	\State \indent $\mathrm{h}(t_{k+})=\text{Discrete Updating}(\mathrm{h}(t_{k-}), \mathrm{x}_{t_k}, \mathrm{m}_{t_k}; \zeta^u)$ \textcolor{blue}{\;\% marginal memory updates at $t_k$}
	\State \textbf{end for}
	\State \textbf{return} $\mathrm{x}_{t}$
\end{algorithmic}
\end{algorithm}



\end{document}